\tikzstyle{block} = [ rectangle, draw, fill=white, text width=5em, text centered, rounded corners, minimum height=4em ]
\tikzstyle{line} = [ draw, -latex' ]
\def\R{\mathbb{R}}
\def\one{\mathbf{1}}
\def\E{\mathbb{E}}
\def\varep{\varepsilon}
\def\Hcal{\mathcal{H}}
\def\Acal{\mathcal{A}}
\def\Hcal{\mathcal{H}}
\def\Ncal{\mathcal{N}}
\def\bK{\mathbf{K}}
\def\Score{\widehat{\theta}}
\DeclareMathOperator*{\argmin}{argmin}
\DeclareMathOperator*{\maximize}{maximize}
\DeclareMathOperator*{\minimize}{minimize}
\DeclareMathOperator*{\sign}{sign}
\newtheorem{lemma}{Lemma}
\newtheorem{theorem}{Theorem}
\newtheorem{definition}{Definition}
\newtheorem{assumption}{Assumption}
\newtheorem{remark}{Remark}
\begin{document}

\title{Sparse Feature Selection in Kernel Discriminant Analysis via Optimal Scoring}

\author[]{Alexander F. Lapanowski}
\author[]{Irina Gaynanova}
\affil[]{Texas A\&M University}
\affil[]{\{alapanow, irinag\}@stat.tamu.edu}

\date{}
\maketitle

\begin{abstract}
 We consider the two-group classification problem and propose a kernel classifier based on the optimal scoring framework. Unlike previous approaches, we provide theoretical guarantees on the expected risk consistency of the method. We also allow for feature selection by imposing structured sparsity using weighted kernels. We propose fully-automated methods for selection of all tuning parameters, and in particular adapt kernel shrinkage ideas for ridge parameter selection. Numerical studies demonstrate the superior classification performance of the proposed approach compared to existing nonparametric classifiers. %We also propose automatic methods for ridge parameter selection and guassian kernel parameter selection.  
\end{abstract}

\section{Introduction}

%Let $X\in \mathbb{R}^{n\times p}$ be the data matrix of $n$ data points with $p$ features, where each data point belongs to one of 2 groups.  
Linear Discriminant Analysis (LDA) is a popular linear classification rule \cite[Section 4.3]{hastie_elements_2009}, but it has two limitations. First, it will underfit the data when the best decision boundary is nonlinear. 
Secondly, LDA uses all $p$ features even though not all may contribute to class separation.  Including such ``noise'' features into the classification rule can harm classification performance.

To account for non-linearity, several authors consider kernel discriminant analysis \cite{baudat2000generalized,mika_fisher_1999, roth2000nonlinear, bernhard_scholkopf_learning_2002}. While the methods have good empirical performance, to our knowledge there is a lack of theoretical guarantees on the risk of the learned classifiers. Recently, \cite{diethe2009matching, kim2006robust, lanckriet2002robust} provided such guarantees, however under modified classification criterion with respect to worst-case training data realization. At the same time, none of the above methods perform feature selection, and as such will overfit in the presence of ``noise" features.

%The majority of kernel theory assumes a convex loss function.  An additional challenge with kernel FDA is incorporating sparse feature selection, as the method developed in \cite{allen_automatic_2013} assumes a convex loss function as well.

On the other hand, several sparse generalizations of LDA have been proposed \cite{Cai:2011dm,Clemmensen:2011kr,Gaynanova:2016wk}, however the methods still result in linear classification boundaries.

This paper addresses the gap between kernel and sparse LDA methods by using an optimal scoring framework~\cite{Hastie:1994cx} to construct a kernel-based classifier. Unlike previous approaches, we provide theoretical guarantees on the risk consistency of the proposed kernel optimal scoring. We also allow the method to perform feature selection by adapting the weighted kernel idea from \cite{allen_automatic_2013}. To avoid computational costs associated with selecting multiple tuning parameters, we develop a new  Stabilization method for ridge parameter selection. The method is based on the shrinkage ideas from \cite{lancewicki_regularization_2017} for stabilization of kernel matrices. Our empirical results indicate that the Stabilization method leads to better error rates than generalized cross-validation (GCV) \cite{Craven1978,  golub1979generalized,xiang1996generalized}, and we believe this method of parameter selection could be of independent interest.

%Kernel classifiers often require the selection of several parameters, but \cite{allen_automatic_2013} does not provide guidance for doing so. This paper provides fully-automatic selection methods for gaussian kernel, ridge, and sparsity parameters which avoids cross-validation over all three parameters. This is done by a new automatic ridge parameter selection technique based on \cite{Lancewicki:2017tz}, which could be of independent interest.

In summary, this work makes the following contributions: (i) we develop a kernel LDA method based on optimal scoring framework; (ii) we provide theoretical results on the risk consistency of the proposed classifier; (iii) we use weighted kernels to implement feature selection within kernel LDA; and (iv) we propose a new stabilization method for ridge parameter selection.

\subsection{Related Work}

In this section we draw connections between our work and existing literature on kernelized optimal scoring as well as sparse feature selection within kernels.

To our knowledge, the kernelized version of the optimal scoring problem has not been considered in the literature except for~\cite{roth2000nonlinear}. Unlike \cite{roth2000nonlinear}, we fix the scores and provide theoretical guarantees for the method. Another major distinction of our method is the feature selection which is achieved by weighting the kernel and adding a sparsity penalty to the weights. 

%\textbf{\textit{Alex, is my assumption correct that you are stating exactly the same references here as used in Allen's paper and refering to them exactly? This is not ok. I will try to correc this as I see git, but generally you can not do lireature review from someone else's paper, you should do it yourself and draw your own conclusions with very few exceptions to that}}
%{\color{red}Yes, that is what I did. I thought her comment was useful in understanding why previous work was different. I won't draw directly from a literature search again. Thank you.}

 %Weighting features within kernels is considered in \cite{cao2007feature, li2006lasso, grandvalet2003adaptive, gilad2004margin, argyriou2006dc, weston2001feature}.  \cite{allen_automatic_2013} states ``Most of these, however, do not directly optimize the original regression or classification problem, but instead seek to find a good set of weights on the features for later use...''
 
 Weighted kernels with sparse weights have been considered in \cite{allen_automatic_2013, chen2017double} in the context of kernel regression and kernel support vector machines. The framework can not be applied to the original kernel LDA method \cite{mika_fisher_1999}, however it could be adapted to the proposed kernel optimal scoring problem due to its least squares formulation.
 
 Learning the optimal weight vector can be viewed as a kernel learning problem. While most of the kernel learning literature focuses on finding linear or quadratic combination of predetermined kernels \cite{bach2004multiple, sonnenburg2006large}, learning the weights corresponds to adjusting the feature support of the kernel matrix.  This is also distinctive from the sparse kernel learning literature, where the kernel is assumed to be additive with respect to the features \cite{bach2008consistency, sun2015learning}. Our framework does not impose additivity, thus enabling interactions between the features.

\subsection{Notation} 
%We use the following notation throughout the paper. 
For a vector $v\in \R^p$, let $\|v\|_{2}:=\sqrt{\sum_{i=1}^{p}|v_i|^2}$ be the Euclidean norm, $\|v\|_{1}:=\sum_{i=1}^{p}|v_i|$ be the $\ell^1$ norm, and $\|v\|_\infty:=\max|v_i|$ be the $\ell^\infty$ norm. Let $\left<x, x'\right>:=\sum_{i=1}^{p}x_{i}x_{i}'$ be the Euclidean inner product in $\mathbb{R}^p$.
For a matrix $M\in \mathbb{R}^{n\times k}$, let $M_{i,j}$ denote the $(i,j)$ element of $M$. Let $\|M\|_{\text{op}}:=\sup_{\|x\|_2=1} \|Mx\|_{2}$ be the operator norm, and let $\|M\|_{F}:= \sqrt{\sum_{i=1}^{n}\sum_{j=1}^{k} |M_{i,j}|^{2}}$ be the Frobenius norm. Let $I$ be the $n\times n$ identity matrix.  Let $\mathbf{1}\in \mathbb{R}^{n}$ be the vector of all $1$s, and let $C= I-n^{-1} \mathbf{1} \mathbf{1}^{\top}$ be the centering matrix.

\section{Kernel Optimal Scoring}\label{sec:rKOS}
\subsection{Linear Discriminant Analysis and Optimal Scoring }\label{sec:LDA}

%In this section we review the connection between LDA and optimal scoring in the case of binary classification. 

Let $\{(x_i, y_i)\}_{i=1}^{n}$ be independent pairs, where $x_i\in \R^p$ is the vector of features, and $y_i\in \R^2$ is the indicator of class membership such that $y_{ik} = 1$ if $i$th sample belongs to class $k$, $i\in C_k$, and $y_{ik} = 0$ otherwise. Let $n_1$ and $n_2$ denote the number of samples in each respective class so that $n = n_1 + n_2$. Let $X\in \mathbb{R}^{n\times p}$ and $Y\in\mathbb{R}^{n\times 2}$ denote the corresponding feature and indicator matrices, and without loss of generality let $X$ be column-centered.

The optimal scoring problem \cite{Hastie:1994cx} finds the discriminant vector $\beta \in \R^p$ and the scores vector $\theta \in \R^2$ by solving
\begin{align}\label{eq:OptScoring}
\begin{split}
&\minimize_{\theta, \beta}\|Y\theta-X\beta\|_{2}^{2}\\
&\text{subject to }n^{-1}\theta^{\top}Y^{\top}Y\theta=1,\,\theta^{\top}Y^{\top}Y \mathbf{1}=0.
\end{split}
\end{align}

Since the solution vector of scores has explicit form up to a sign, $\widehat \theta = (\sqrt{n_2/n_1}\ -\sqrt{n_1/n_2})^{\top}$, ~\eqref{eq:OptScoring} is equivalent to the linear regression problem
\begin{equation}\label{eq:optReg}
\minimize_{\beta}\|Y\widehat \theta - X\beta\|_2^2.
\end{equation}
The solution $\widehat \beta$ corresponds to the discriminant vector in LDA up to scaling \cite[Section 3.4]{hastie_penalized_1995}. Thus, linear discriminant analysis can be reduced to finding the solution to problem~\eqref{eq:optReg}.

%Let $G_{k}$ be group $k$ with size $n_k$ for $k=1,2$. Let $\bar x_k=\frac{1}{n_k}\sum_{i\in G_{k}}x_{i}$ be the mean of group $G_k$, and let $\bar x=\frac{1}{n}\sum_{i=1}^{n}x_i$ be the overall mean. The between group covariance matrix $B$ and within-group covariance matrix $W$ are defined as
%\begin{align}\label{eq:CovMatrices}
%\begin{split}
%&B:=\frac{1}{2}\sum_{k=1}^{2}n_i(\bar x_{k}-\bar x)(\bar x_{k}-\bar x)^{\top}\\
%&W:=\frac{1}{n}\sum_{k=1}^{2}\sum_{i\in G_k}(x_{i}-\bar x_{k})(x_{i}-\bar x_{k})^{\top}.
%\end{split}
%\end{align} 
%LDA seeks projections which maximally 
%separate groups of 
%data with respect to the within-group variability \cite[Section 4.3]{hastie_elements_2009} by solving
%\begin{equation}\label{eq:LDA}
%\maximize_{v\neq 0}\,\frac{v^{\top}Bv}{v^{\top}Wv}.
%\end{equation}

\subsection{Reproducing Kernel Hilbert Spaces}\label{sec:ReproducingKernelHilbertSpace}

Reproducing Kernel Hilbert Spaces (RKHS) are commonly used in creating non-linear classifiers. The data is mapped into a RKHS $\mathcal{H}$ via $\Phi:\mathbb{R}^{p}\to \mathcal{H}$ with an accompanying kernel
$k:\mathbb{R}^{p}\times \mathbb{R}^{p}\to\mathbb{R}$ such that $\left<\Phi(x),\Phi(x')\right>_{\mathcal{H}}=k(x,x')$ for any $x, x' \in \mathbb{R}^{p}$. We let $\|\cdot\|_{\mathcal{H}}$ be the norm induced by the inner product $\left<\cdot\,,\,\cdot\right>_\mathcal{H}$. By the \emph{reproducing property} of $\mathcal{H}$: $\left<\Phi(x), f\right>_{\Hcal}=f(x)$ for all $x\in \mathbb{R}^{p}$ and $f\in \mathcal{H}$. Thus, any classifier that relies on the training data only through the inner products can be \emph{kernelized} by substituting kernel evaluations in place of inner products. This effectively creates a classifier in $\mathcal{H}$ rather than in $\R^p$. 

Some commonly-used kernels are the gaussian kernel $k(x,x')=\exp(-\sigma^{-2} \|x-x'\|_{2}^2)$ with $\sigma >0$, the polynomial kernel $k(x,x')= (1+\left<x, x'\right>)^{d}$ with $d$ a positive integer, and the sigmoid kernel $k(x,x')=\tanh(c\left<x, x'\right>+t)$ with $c>0$, $t\geq 0$.
%However, different kernels are used for different problems, and often there are specific classes of kernels which are designed to incorporate prior knowledge of a given problem.
We refer the reader to \cite[Chapter 13]{bernhard_scholkopf_learning_2002} for a review on kernel construction and selection. We let $\mathbf{K}\in \mathbb{R}^{n\times n}$ denote the kernel matrix $\mathbf{K}_{i,j}:=k(x_i, x_j)$ based on observed feature vectors $\{x_i\}_{i=1}^n$.

% \emph{The Representer Theorem} and its variants
%state that solutions to many kernelized algorithms lie in 
%the finite-dimensional span of the 
%mapped data \cite{kimeldorf_correspondence_1970}, \cite{scholkopf_generalized_2001}. Thus, optimization problems in the Hilbert Space 
%$\mathcal{H}$ may be solved in the finite-dimensional coefficient space of the mapped data. The algorithm proposed in this paper, kernel optimal scoring (KOS), uses the classical representer theorem \cite{kimeldorf_correspondence_1970}.

%hroughout, we use $\mathcal{H}$ for Reproducing Kernel Hilbert Space with kernel $k$ and map $\Phi$, and let $\|\cdot\|_{\mathcal{H}}$ be the norm induced by the inner product $\left<\cdot\,,\,\cdot\right>_\mathcal{H}$. 

\subsection{Kernel Optimal Scoring}\label{sec:RegKernOptScore}

We derive the kernelized formulation of the optimal scoring problem~\eqref{eq:optReg}. Let $f$ be the discriminant function in $\Hcal$ with corresponding map $\Phi$ and kernel $k$. We substitute each inner product $x_i^{\top}\beta = \langle x_i, \beta \rangle$ with inner product in $\Hcal$, $\langle \Phi(x_i) - \overline{\Phi}, f \rangle_{\Hcal}$, where we apply centering to $\Phi(x_i)$ via $\overline{\Phi}:=n^{-1}\sum_{i=1}^{n}\Phi(x_i)$ to take into account column-centering of $X$. The corresponding optimal scoring problem in $\Hcal$ takes the form

%To kernelize optimal scoring, \eqref{eq:OptimalScoringNoScore} must be expressed in terms of 
%inner products. We have
%\begin{align*}
%&\minimize_{B\in\mathbb{R}^{p\times 1}}\,\frac{1}{n}\|Y\theta-XB \|_{2}^{2}\\
%&=\minimize_{B\in\mathbb{R}^{p\times 1}}\,\frac{1}{n}\bigg\|Y\theta-\begin{pmatrix}
%x^{\top}_{1}B\\
%\vdots\\
%x_{n}^{\top}B
%\end{pmatrix}
%\bigg\|_{2}^{2}.
%\end{align*}
%Let $\Phi:\mathbb{R}^{p}\to\mathcal{H}$ be the accompanying map to a given kernel $k:\mathbb{R}^{p}\times \mathbb{R}^{p}\to \mathbb{R}.$
%The mapped data is centered $\Phi(x_i)\mapsto \Phi(x_i)-\overline{\Phi}$, where $\overline{\Phi}:=\frac{1}{n}\sum_{j=1}^{n}\Phi(x_j)$.
%The discriminant vector in $\mathcal{H}$ is denoted by $f$. Kernelization substitutes $\left<\Phi(x_i)-\overline{\Phi},f\right>_{\mathcal{H}}$ in for $x_{i}^{\top}B$ to yield 

\begin{equation*}
\minimize_{ f\in \mathcal{H}}\,\bigg\|Y\widehat\theta -\begin{pmatrix}
\left<\Phi(x_1)-\overline{\Phi}\,,\, f\right>_{\mathcal{H}}\\
\vdots\\
\left<\Phi(x_n)-\overline{\Phi}\,,\, f\right>_{\mathcal{H}}
\end{pmatrix}\bigg\|_{2}^{2}.
\end{equation*}
By the Representer Theorem \cite{kimeldorf_correspondence_1970}, the minimizing $\widehat f$ lies in the finite-dimensional span of the centered data, that is it is sufficient to consider minimization over $f=\sum_{i=1}^{n}\alpha_{i}[\Phi(x_i)-\overline{\Phi}]$ for some $\alpha_{i}\in \mathbb{R}$. Combining the Representer Theorem with kernel representation of inner-products in $\Hcal$
leads to the equivalent coefficient space formulation of the kernel optimal scoring problem:
\begin{equation}\label{eq:UnRegKernOptScore}
\minimize_{\alpha\in \mathbb{R}^{n}}\,\|Y\widehat \theta-C\mathbf{K}C\alpha\|_{2}^{2}.
\end{equation}
%We call~\eqref{eq:UnRegKernOptScore} Kernel Optimal Scoring problem, or KOS.

%{\textit{ \textbf{Alex, I forgot, do we use the equivalence anywhere? If we do, do we cite this remark, or do it separately? I am trying to decide whether we need this remark at all, I am keeping it for now and come back to it}}

%{\color{red} Alex- we do not use it. I included it as a way to motivate the modified empirical loss function. We should delete it.}

%\begin{remark}
%Rather than centering the mapped data, an equivalent approach is to use uncentered data but add an intercept term
%\begin{align*}
%&\minimize_{\beta\in \mathbb{R}, f\in\mathcal{H}} \frac{1}{n}\bigg\|Y\Score -\beta\mathbf{1}-\begin{pmatrix}
%\left<\Phi(x_1),f\right>_{\mathcal{H}}\\
%\vdots\\\left<\Phi(x_n),f\right>_{\mathcal{H}}
%\end{pmatrix}\bigg\|_{2}^{2}\\
%&=\minimize_{\beta\in \mathbb{R},\, \alpha\in %\mathbb{R}^{n}}\frac{1}{n}\|Y\Score-\beta\mathbf{1}-\mathbf{K}\alpha\|_{2}^{2}.
%\end{align*}
%We do not pursue this direction here.
%\end{remark}

%\subsection{Regularized Kernel Optimal Scoring}\label{sec:RegKernOptScore}

Kernel methods may over-fit the training data without further restriction on the set of functions $f\in \Hcal$, \cite{hastie_elements_2009, bernhard_scholkopf_learning_2002,RKHS_tutorial2012}. A common approach is to restrict the norm $\|f\|_{\Hcal}^2 = \alpha^{\top}CKC\alpha$, and we add a ridge penalty to the objective function~\eqref{eq:UnRegKernOptScore}
\begin{equation}\label{eq:RegKernOptScore}
\minimize_{\alpha\in \mathbb{R}^n}\bigg\{\frac{1}{n}\|Y\Score-C\mathbf{K}C\alpha\|_{2}^{2}+\gamma \alpha^{\top}C\mathbf{K}C\alpha\bigg\},
\end{equation}
where $\gamma > 0$ controls the level of regularization. For numerical stability, we also add $\varepsilon I$ with small $\varepsilon > 0$ to the ridge penalty so that $CKC$ is replaced with $CKC + \varepsilon I$. A similar adjustment is used in \cite{mika_fisher_1999,roth2000nonlinear}. We fix $\varepsilon=10^{-5}$ throughout the manuscript. The problem has a closed-form solution leading to 
\begin{equation}\label{eq:alpha}
\widehat{\alpha}= \{(C\mathbf{K}C)^2+n\gamma(C\mathbf{K}C+\varepsilon I)\}^{-1}C\mathbf{K}CY\widehat \theta.
\end{equation}
 We call~\eqref{eq:RegKernOptScore} the kernel optimal scoring problem or KOS.

%This gives the regularized kernel optimal scoring problem:
%\begin{equation}\label{eq:RidgeKOS}
%\minimize_{\alpha\in \mathbb{R}^n}\bigg\{\frac{1}{n}\|Y\Score-C\mathbf{K}C\alpha\|_{2}^{2}+\gamma\,\alpha^{\top}(C\mathbf{K}C+\varepsilon I)\alpha\bigg\}.
%\end{equation}
%The closed-form solution is

\subsection{Classification of a New Data Point}\label{sec:Projection}
In this section we describe how to use KOS for classification. Let $\widehat \alpha$ be as in \eqref{eq:alpha}, and let $\widehat f = \sum_{i=1}^n \widehat \alpha_i[\Phi(x_i) - \overline \Phi]$. Given a new data point $x\in \R^{p}$, let
%$K(X,x)$ denote the column vector 
$$
K(X,x)=
\begin{pmatrix}
k(x_1, x)&
\cdots&
k(x_n,x)\end{pmatrix}^{\top}.
$$
We define the projected value $P(x)$ as the inner-product between $x$ mapped and centered in $\Hcal$ and $\widehat f$ so that $P(x)$ is equal to
\begin{equation}\label{eq:Projection}
\left<\Phi(x)-\overline{\Phi},\,
\widehat{f}\right>_{\mathcal{H}}=(K(X,x)^{\top}-n^{-1}\mathbf{1}^{\top}\mathbf{K})C\widehat\alpha.
\end{equation}
The derivation of~\eqref{eq:Projection} is in the Supplement.

KOS classifies $x\in \mathbb{R}^{p}$ using nearest centroids classification on the projected values. Specifically, let $\mu_k=\frac{1}{n_k}\sum_{i\in G_k}P(x_x)$ be the mean projected values of group 
$k$ (projected centroid). We classify $x\in \R^p$ according to the minimal distance to projected centroids
$$
\argmin_{k=1,2} |P(x)-\mu_{k}|.
$$

\section{Error Bounds for Kernel Optimal Scoring}\label{sec:Theory}

Problem~\eqref{eq:RegKernOptScore} can be viewed as a regularized empirical risk minimization problem
\begin{equation}\label{eq:penalty}
\widehat f = \argmin_{f\in \mathcal{H}}\left\{R_{\text{emp}}(f) + \gamma \|f\|^2_{\mathcal{H}}\right\},
\end{equation}
where for a fixed $f\in \mathcal{H}$
\begin{equation}\label{eq:emprisk}
R_{\text{emp}}(f):=\frac{1}{n}\sum_{i=1}^{n}|y_i^{\top}\Score-\left<\Phi(x_i)-\overline{\Phi},f\right>|^{2}.
\end{equation}
By duality, for every $\gamma \geq 0$ there exists a $\tau \geq 0$ such that 
\begin{equation}\label{eq:constraint}
\widehat f = \argmin_{\|f\|_{\Hcal}\leq \tau}\left\{R_{\text{emp}}(f)\right\}.
\end{equation}
While the relationship between $\gamma$ and $\tau$ is data-dependent, Lemma~3 in the Supplement shows that $\tau \leq C\min(\gamma^{-1},\gamma^{-1/2})$ for some constant $C>0$. For technical clarity, we analyze~\eqref{eq:constraint} throughout.

There are two complications in analyzing the empirical risk in~\eqref{eq:emprisk}: $\Score$ is dependent on all $y_i$ through $n_1$, $n_2$, and $\overline \Phi$ is dependent on all $x_i$. Hence, the error terms $|y_i^{\top}\Score-\left<\Phi(x_i)-\overline{\Phi},f\right>|^{2}$ are dependent. The empirical risk can be equivalently written as
$$
R_{\text{emp}}(f, \beta) = \frac{1}{n}\sum_{i=1}^{n}|y_i^{\top}\Score-\beta -\left<\Phi(x_i),f\right>|^{2},
$$
with the minimizing $\widehat \beta =-\langle{\overline{\Phi}, f\rangle}$ since $\one^{\top}Y\widehat \theta = 0$. %Letting $\widehat \beta = -\langle\overline{\Phi}, f\rangle$,
We therefore introduce a modified empirical risk using population scores $\theta^*$ and an extra intercept parameter $\beta\in \R$. The population scores $\theta^*$ result from substituting $\pi_k$ instead of $n_k/n$ in $\widehat \theta$.
\begin{definition}\label{eq:PopulationScores}
Let $\pi_k = P(i\in C_k)$ be the prior class probabilities, $k=1,2$. The \emph{population scores} are defined as
$\theta^*=(
\sqrt{\pi_2/\pi_1}\ -\sqrt{\pi_1/\pi_2}
)^{\top}$.
\end{definition}
For a fixed $f\in\Hcal$ and $\beta\in\R$, the modified empirical risk is
$$
\widetilde{R}_{\text{emp}}(f,\beta)=\frac{1}{n}\sum_{i=1}^{n}|y_i^{\top}\theta^*-\beta-\left<\Phi(x_i),f\right>|^{2}.
$$
Unlike the empirical risk, the modified empirical risk is the average of iid terms. For a fixed $f\in \Hcal$ and $\beta \in \R$, the corresponding expected risk is
%\begin{definition}[Expected Risk]
%For a fixed $f\in \mathcal{H}$ and $\beta\in \mathbb{R}$, the %\emph{expected} risk of $(f, \beta)$ is
$$
R(f,\beta):=\mathbb{E}_{(x,y)}|y^{\top}\theta^*-\beta-\left<\Phi(x),f\right>|^{2}.
$$
%\end{definition}

Let $\widehat f$ be as in~\eqref{eq:constraint} and let $\widehat \beta = -\langle \overline{\Phi}, \widehat f\rangle$. We next derive probabilistic bounds on the expected risk of $\widehat f$. Throughout, we use the following assumptions.

\begin{assumption}\label{assump:ScoreBound} Let $\pi_{\max} = \max(\pi_1, \pi_2)$, $\pi_{\min} = \min(\pi_1, \pi_2)$. 
There exists a constant $C>0$ such that $\|\theta^*\|_\infty = \sqrt{\pi_{\max}/\pi_{\min}}\leq C$.
\end{assumption}
This assumption implies that the prior group probabilities are not degenerate, that is $\pi_1 \asymp \pi_2$. 

\begin{assumption}\label{assump:KernBound}
There exists a constant $\kappa>0$ such that $\|\Phi(x)\|_{\mathcal{H}}\leq \kappa$ for all $x\in \mathbb{R}^{p}$. Equivalently, $
\sup_{x\in\mathbb{R}^{p}} k(x,x)\leq \kappa^{2}.$
\end{assumption}

\begin{assumption}\label{assump:Separable}
The RKHS $\mathcal{H}$ is separable.
\end{assumption}

\begin{remark}
The gaussian kernel satisfies Assumption~\ref{assump:KernBound} with $\kappa=1$ and satisfies Assumption~\ref{assump:Separable} by Theorem~7 in \cite{hein2004kernels}. 
\end{remark}

Using~\eqref{eq:constraint}, we define the set of admissible functions $f$ as $\Hcal_{\tau}:=\{f\in\Hcal: \|f\|_{\Hcal}\leq \tau\}$, and the set of admissible intercepts $\beta$ as $I_{\tau}:=\{\beta \in \R: |\beta|\leq \|\theta^*\|_{\infty} + \kappa \tau\}$.

%The next two assumptions are needed to formally specify the sets of admissible functions and intercepts.
%\begin{definition}\label{def:AdmissibleFunctions}
%Let $\kappa>0$ be the kernel bound of Assumption \ref{assump:KernBound}.
%The \emph{set of admissible functions} is defined to be 
%\[
%\mathcal{H}_\gamma:=\bigg\{f\in \mathcal{H}: \|f\|_\mathcal{H}\leq 
%\min\bigg[\frac{2\kappa}{\gamma}, %\frac{1}{\sqrt{\gamma}}\bigg]\bigg\}.
%\] The set of \emph{admissible %intercepts} is defined to be
%\[
%I_\gamma:=\bigg\{\beta\in \mathbb{R}: %|\beta|\leq %\|\theta^*\|_\infty+\kappa\min\bigg[ \frac{2\kappa}{\gamma}, \frac{1}{\sqrt{\gamma}}\bigg]\bigg\}.
%\]
%\end{definition}

\begin{remark}
The intercept $\widehat \beta\in I_{\tau}$ by Assumption~\ref{assump:KernBound}. The extra term $\|\theta^*\|_{\infty}$ comes from minimizing the modified empirical risk.
\end{remark}

Let 
\begin{equation}\label{eq:tilde}
(\widetilde{f} , \widetilde{\beta}):=\argmin_{f\in \mathcal{H}_\tau\,,\, \beta\in I_\tau} \widetilde{R}_{\text{emp}}(f, \beta).
\end{equation}
be the minimizers of the modified empirical risk over the set of admissible functions and intercepts, and let 
\begin{equation}\label{eq:star}
(f^\ast, \beta^\ast)=\argmin_{f\in \mathcal{H}_\tau \,,\, \beta\in I_\tau} R(f, \beta)
\end{equation}
be the minimizers of the expected risk over the set of admissible functions and intercepts. Our proofs rely on characterizing (i) the difference between ~\eqref{eq:constraint} and~\eqref{eq:tilde}, and (ii) the difference between~\eqref{eq:tilde} and~\eqref{eq:star}. The detailed proofs are in the Supplement, and below we state the main results.

%\begin{remark}\label{rem:AdmisisbleIntercept}
%For a fixed $f\in \mathcal{H}$, the intercept $\beta$ which minimizes \eqref{eq:ModEmpRisk} is
%$\widetilde{\beta}:=\overline{y\theta^*}-\left< \overline{\Phi}, f\right>$, where $\overline{y\theta^*}:= \frac{1}{n}\sum_{i=1}^{n} y_{i}\theta^{0}$. If $f$ satisfies one of the bounds of Lemma \ref{lem:Bounds}, the triangle and Cauchy-Schwarz inequalities prove that
%$|\widetilde{\beta}|\leq \|\theta^*\|_\infty+\kappa\min(2\kappa/\gamma, 1/\sqrt{\gamma})$.
%\end{remark}

%The estimation error and approximation error are defined as follows:
%\begin{align*}
%&R(\widehat{f},\widehat{\beta})-\inf_{f\in \mathcal{H}, \beta\in I}R(f,\beta)=\\
%&\underbrace{R(\widehat{f},\widehat{\beta})- R(f^*,\beta^*)}_{\text{Estimation Error}}
%+\underbrace{
% R(f^*,\beta^*)-
%\inf_{f\in \mathcal{H}\,,\, \beta\in \mathbb{R}} %R(f,\beta)}_{\text{Approximation Error}}.
%\end{align*}

%\subsection{Main Results}\label{sec:MainResults}
%This subsection presents probabilistic upper bounds on the risk and empirical risk of the learned classifier $(\widehat f, \widehat \beta)$.

\begin{theorem}\label{thm:EstErrorProbBound}
Under Assumptions~\ref{assump:ScoreBound}--\ref{assump:Separable}, there exist constants $C_1, C_2, C_3>0$ such that
\begin{align*}
\mathbb{P}\Big(R(\widehat f, \widehat \beta)&>R(f^*,\beta^*)+\varepsilon 
\Big)\leq C_{1}\mathcal{N}_{\varepsilon}\exp\Big(-\frac{C_3n \varepsilon^2}{(\|\theta^*\|_\infty+\kappa\tau)^4}\Big),
\end{align*}
where $\mathcal{N}_{\varepsilon}=\{1+2(\|\theta^*\|_\infty+\kappa\tau)/\varepsilon\}\exp( C_2\tau^2 \varepsilon^{-2})$.
\end{theorem}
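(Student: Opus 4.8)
\emph{Proof plan.} The plan is to route the comparison of $R(\widehat f,\widehat\beta)$ with $R(f^*,\beta^*)$ through the modified empirical risk $\widetilde R_{\text{emp}}$, which (unlike $R_{\text{emp}}$) is an average of i.i.d.\ terms. With $(\widetilde f,\widetilde\beta)$ the minimizer of $\widetilde R_{\text{emp}}$ over $\mathcal{H}_\tau\times I_\tau$ from~\eqref{eq:tilde}, I would use the decomposition
\begin{align*}
R(\widehat f,\widehat\beta)-R(f^*,\beta^*)
&=\underbrace{\big[R(\widehat f,\widehat\beta)-\widetilde R_{\text{emp}}(\widehat f,\widehat\beta)\big]}_{(\mathrm{A})}
+\underbrace{\big[\widetilde R_{\text{emp}}(\widehat f,\widehat\beta)-\widetilde R_{\text{emp}}(\widetilde f,\widetilde\beta)\big]}_{(\mathrm{B})}\\
&\quad+\underbrace{\big[\widetilde R_{\text{emp}}(\widetilde f,\widetilde\beta)-\widetilde R_{\text{emp}}(f^*,\beta^*)\big]}_{(\mathrm{C})\,\le\,0}
+\underbrace{\big[\widetilde R_{\text{emp}}(f^*,\beta^*)-R(f^*,\beta^*)\big]}_{(\mathrm{D})}.
\end{align*}
Term $(\mathrm{C})$ is nonpositive because $(\widetilde f,\widetilde\beta)$ minimizes $\widetilde R_{\text{emp}}$ over $\mathcal{H}_\tau\times I_\tau$, while $(\widehat f,\widehat\beta)$ and $(f^*,\beta^*)$ both lie in that set (for $\widehat\beta$ by the Remark following~\eqref{eq:star}). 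Terms $(\mathrm{A})$ and $(\mathrm{D})$ are each at most $\sup_{f\in\mathcal{H}_\tau,\,\beta\in I_\tau}|\widetilde R_{\text{emp}}(f,\beta)-R(f,\beta)|$, so it suffices to bound the probability that this supremum exceeds $\varepsilon/3$. Term $(\mathrm{B})$ encodes the replacement of $\widehat\theta$ by $\theta^*$ (part~(i) of the program announced before the theorem): the class counts concentrate, so by Assumption~\ref{assump:ScoreBound} (which keeps $\pi_1\asymp\pi_2$ away from degeneracy) $\|\widehat\theta-\theta^*\|_\infty=O_p(n^{-1/2})$, which makes $R_{\text{emp}}$ and $\widetilde R_{\text{emp}}$ uniformly $O_p(n^{-1/2})$-close on $\mathcal{H}_\tau\times I_\tau$; since $\widehat f$ minimizes $R_{\text{emp}}$, term $(\mathrm{B})$ is then $O_p(n^{-1/2})$ and is absorbed into the constants for $n$ large.

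For the uniform deviation I would first record the boundedness of the loss: for $(f,\beta)\in\mathcal{H}_\tau\times I_\tau$ and any $(x,y)$, Assumptions~\ref{assump:ScoreBound}--\ref{assump:KernBound} give $|y^\top\theta^*-\beta-\langle\Phi(x),f\rangle|\le\|\theta^*\|_\infty+|\beta|+\kappa\tau\le 2(\|\theta^*\|_\infty+\kappa\tau)=:M$, so each summand of $\widetilde R_{\text{emp}}(f,\beta)$ is i.i.d.\ with values in $[0,M^2]$, and Hoeffding's inequality yields, for each fixed $(f,\beta)$,
\[
\mathbb{P}\big(|\widetilde R_{\text{emp}}(f,\beta)-R(f,\beta)|>\varepsilon/3\big)\le 2\exp\!\big(-c\,n\varepsilon^2/M^4\big),
\]
which, since $M^4\asymp(\|\theta^*\|_\infty+\kappa\tau)^4$, reproduces the exponent in the theorem. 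To make this uniform I would (a) take an $\eta$-net of the interval $I_\tau$, of length $2(\|\theta^*\|_\infty+\kappa\tau)$ and hence of cardinality at most $1+2(\|\theta^*\|_\infty+\kappa\tau)/\eta$ --- the first factor of $\mathcal{N}_\varepsilon$; (b) take an $\eta$-net of the function ball $\mathcal{H}_\tau$ in the pseudometric $\rho(f,f')^2=n^{-1}\sum_i\langle\Phi(x_i),f-f'\rangle^2$; (c) check that $(f,\beta)\mapsto\widetilde R_{\text{emp}}(f,\beta)$ and $(f,\beta)\mapsto R(f,\beta)$ are Lipschitz (constant $O(M)$) in these metrics, by expanding the difference of squares and applying Cauchy--Schwarz, so that a bound on the deviation over the product net (at a scale proportional to $\varepsilon$) forces a bound on the supremum; and (d) union-bound the single-point tail over the product net, whose cardinality is at most $\mathcal{N}_\varepsilon$. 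Passing from a fixed sample to the i.i.d.\ randomness in $(\mathrm{A})$ and $(\mathrm{D})$ is handled by the usual symmetrization step and by conditioning on the data before invoking the net for $\mathcal{H}_\tau$.

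The step I expect to be the main obstacle is (b): the ball $\mathcal{H}_\tau$ is \emph{not} totally bounded in $\|\cdot\|_{\mathcal{H}}$ when $\dim\mathcal{H}=\infty$, so a naive norm covering is unavailable. The resolution is that $f$ enters only through the linear images $f\mapsto(\langle\Phi(x_i),f\rangle)_{i=1}^{n}$: by Assumption~\ref{assump:KernBound} the empirical covariance operator $\widehat\Sigma=n^{-1}\sum_i\Phi(x_i)\otimes\Phi(x_i)$ has trace $n^{-1}\sum_i k(x_i,x_i)\le\kappa^2$, so the image of $\mathcal{H}_\tau$ under this map is an ellipsoid whose squared semi-axes sum to at most $\tau^2\kappa^2$; such ellipsoids admit a dimension-free covering bound $\log N(\eta,\mathcal{H}_\tau,\rho)\le C_2\tau^2\eta^{-2}$ (the constant absorbing $\kappa$), which, taking $\eta$ proportional to $\varepsilon$, is precisely the second factor $\exp(C_2\tau^2\varepsilon^{-2})$ of $\mathcal{N}_\varepsilon$. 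The same argument applied to the population operator $\Sigma=\mathbb{E}[\Phi(x)\otimes\Phi(x)]$ controls $R$, and Assumption~\ref{assump:Separable} is what makes the spectral decompositions available and the relevant suprema measurable. Multiplying the Hoeffding tail by the product-net cardinality $\mathcal{N}_\varepsilon$ and relabelling $\varepsilon/3\mapsto\varepsilon$ into the constants $C_1,C_2,C_3$ then gives the claimed bound.
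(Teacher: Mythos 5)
Your decomposition is exactly the paper's: $(\mathrm{A})$ is its $I_1$ (handled by the uniform deviation bound of Theorem~\ref{thm:SymBound} via symmetrization, an $\varepsilon$-net of $I_\tau$ of cardinality $1+2(\|\theta^*\|_\infty+\kappa\tau)/\varepsilon$, an $L^2(T_x)$ net of $\mathcal{H}_\tau$ with log-cardinality $C\tau^2\varepsilon^{-2}$, a difference-of-squares Lipschitz step as in Lemma~\ref{lem:ClosenessOfEmpRisk}, and Hoeffding), $(\mathrm{B})$ is its $I_2$ (Theorem~\ref{thm:ModRiskDiffConverge}), and your $(\mathrm{C})+(\mathrm{D})$ split is precisely how the paper proves its bound on $I_3$ (Theorem~\ref{thm:ModConsistency}), so this is essentially the same proof. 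Two minor points of divergence: the paper gets the entropy bound $\log M\le C\tau^2\varepsilon^{-2}$ by citing Theorem~2.1 of \cite{steinwart_fast_2007} rather than your (also valid, Sudakov-type) trace/ellipsoid argument; and for $(\mathrm{B})$ the phrase ``$O_p(n^{-1/2})$, absorbed for $n$ large'' must be replaced by the explicit exponential tail for $\max(|n_1/n_2-\pi_1/\pi_2|,|n_2/n_1-\pi_2/\pi_1|)$ (Lemma~\ref{lem:OptScoreSupNormBound}) together with the extra $(\overline{Y\theta^*})^2$ term coming from the intercept mismatch between $R_{\text{emp}}(\widehat f)$ and $\widetilde R_{\text{emp}}(\widehat f,\widehat\beta)$ (Lemma~\ref{lem:ytheta}), since the claimed bound is non-asymptotic in $n$ and $\varepsilon$.
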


\begin{theorem}\label{thm:EmpRiskProbBound}
Under Assumptions~\ref{assump:ScoreBound}--\ref{assump:Separable}, there exist constants $C_1, C_2, C_3>0$ such that
\begin{align*}
\mathbb{P}\bigg(R(\widehat f, \widehat \beta) &> R_{\text{emp}}(\widehat f) +\varepsilon \bigg)\leq  C_1 \mathcal{N}_{\varepsilon}\exp\Big(-\frac{C_3n \varepsilon^2}{(\|\theta^*\|_\infty+\kappa\tau)^4}\Big),
\end{align*}
where $\mathcal{N}_{\varepsilon}=\{1+2(\|\theta^*\|_\infty+\kappa\tau)/\varepsilon \}\exp( C_2\tau^2 \varepsilon^{-2})$.
\end{theorem}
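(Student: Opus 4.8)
The plan is to prove Theorem~\ref{thm:EmpRiskProbBound} by bounding $R(\widehat f,\widehat\beta)-R_{\text{emp}}(\widehat f)$ through a decomposition into a fluctuation term and a scores-substitution term. Since the intercept minimizing $R_{\text{emp}}(\widehat f,\beta)$ over $\beta\in\R$ is $\widehat\beta=-\langle\overline\Phi,\widehat f\rangle$ (using $\one^\top Y\Score=0$), we have $R_{\text{emp}}(\widehat f)=R_{\text{emp}}(\widehat f,\widehat\beta)$, so that
\[
R(\widehat f,\widehat\beta)-R_{\text{emp}}(\widehat f)=\underbrace{\big[R(\widehat f,\widehat\beta)-\widetilde R_{\text{emp}}(\widehat f,\widehat\beta)\big]}_{(\mathrm{I})}\;+\;\underbrace{\big[\widetilde R_{\text{emp}}(\widehat f,\widehat\beta)-R_{\text{emp}}(\widehat f,\widehat\beta)\big]}_{(\mathrm{II})},
\]
where the modified empirical risk $\widetilde R_{\text{emp}}$ uses the population scores $\theta^*$ and is therefore an average of i.i.d.\ terms. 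It suffices to bound $\mathbb{P}((\mathrm{I})>\varepsilon/2)$ and $\mathbb{P}((\mathrm{II})>\varepsilon/2)$. Note that minimality of $\widehat f$ is not used, so the resulting estimate in fact holds uniformly over all $f\in\Hcal_\tau$ with intercept $-\langle\overline\Phi,f\rangle$.

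For $(\mathrm{I})$: since $\|\widehat f\|_{\Hcal}\le\tau$ and $|\widehat\beta|\le\|\overline\Phi\|_{\Hcal}\|\widehat f\|_{\Hcal}\le\kappa\tau$ by Assumption~\ref{assump:KernBound}, we have $\widehat f\in\Hcal_\tau$ and $\widehat\beta\in I_\tau$, hence $(\mathrm{I})\le\sup_{f\in\Hcal_\tau,\beta\in I_\tau}\{R(f,\beta)-\widetilde R_{\text{emp}}(f,\beta)\}$. For a fixed admissible $(f,\beta)$ the summands $|y_i^\top\theta^*-\beta-\langle\Phi(x_i),f\rangle|^2$ are i.i.d.\ and, using $|y_i^\top\theta^*|\le\|\theta^*\|_{\infty}$, $|\beta|\le\|\theta^*\|_{\infty}+\kappa\tau$, and $|\langle\Phi(x_i),f\rangle|\le\kappa\tau$, take values in $[0,4(\|\theta^*\|_{\infty}+\kappa\tau)^2]$, so Hoeffding's inequality gives a tail of order $\exp\!\big(-C_3 n\varepsilon^2/(\|\theta^*\|_{\infty}+\kappa\tau)^4\big)$. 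To make this uniform I would pass to a net: the map $(f,\beta)\mapsto|y^\top\theta^*-\beta-\langle\Phi(x),f\rangle|^2$ is Lipschitz, with constant of order $\|\theta^*\|_{\infty}+\kappa\tau$, in $|\beta-\beta'|$ plus the empirical $L^2$ distance $\big(n^{-1}\sum_i\langle\Phi(x_i),f-f'\rangle^2\big)^{1/2}$. By the Representer Theorem the latter is finite dimensional: on the sample $\Hcal_\tau$ maps onto the ellipsoid $\{\mathbf{K}\alpha:\alpha^\top\mathbf{K}\alpha\le\tau^2\}$, whose semi-axes (scaled by $n^{-1/2}$) have squared sum $n^{-1}\operatorname{tr}(\mathbf{K})\,\tau^2=n^{-1}\sum_i k(x_i,x_i)\,\tau^2\le\kappa^2\tau^2$ by Assumption~\ref{assump:KernBound}; the standard ellipsoid covering estimate then bounds its $\varepsilon$-covering number by $\exp(C_2\tau^2\varepsilon^{-2})$ \emph{deterministically}, while $I_\tau$ needs at most $1+2(\|\theta^*\|_{\infty}+\kappa\tau)/\varepsilon$ points. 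Their product is $\mathcal{N}_\varepsilon$, and a Pollard-type uniform-deviation inequality — stated in terms of exactly this empirical covering number — combines the union bound with the pointwise Hoeffding estimate to control $(\mathrm{I})$; separability of $\Hcal$ (Assumption~\ref{assump:Separable}) ensures the supremum is measurable.

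For $(\mathrm{II})$: the two risks differ only by replacing $\Score$ with $\theta^*$ in the term $y_i^\top(\cdot)$. Writing $|a^2-b^2|=|a-b|\,|a+b|$ with $|a-b|=|y_i^\top(\Score-\theta^*)|\le\|\Score-\theta^*\|_{\infty}$ and $|a+b|$ of order $\|\theta^*\|_{\infty}+\kappa\tau$ on the event that $\|\Score-\theta^*\|_{\infty}$ is small (which in turn controls $\|\Score\|_{\infty}$), we get $|(\mathrm{II})|\le C(\|\theta^*\|_{\infty}+\kappa\tau)\,\|\Score-\theta^*\|_{\infty}$ uniformly in $(f,\beta)$. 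Now $\Score$ and $\theta^*$ are the same fixed function of $n_1/n$ and of $\pi_1$ respectively, and Assumption~\ref{assump:ScoreBound} keeps $\pi_1$ bounded away from $0$ and $1$, so that function is Lipschitz in a neighborhood of $\pi_1$; since $n_1\sim\mathrm{Binomial}(n,\pi_1)$, Hoeffding gives $\mathbb{P}(\|\Score-\theta^*\|_{\infty}>\delta)\le2\exp(-c\,n\delta^2)$. Taking $\delta$ of order $\varepsilon/(\|\theta^*\|_{\infty}+\kappa\tau)$ and using $\|\theta^*\|_{\infty}\ge1$ (so $(\|\theta^*\|_{\infty}+\kappa\tau)^2\le(\|\theta^*\|_{\infty}+\kappa\tau)^4$), the bound on $\mathbb{P}((\mathrm{II})>\varepsilon/2)$ is again of the form $\exp(-c\,n\varepsilon^2/(\|\theta^*\|_{\infty}+\kappa\tau)^4)$; adding the two contributions and relabeling $C_1,C_3$ yields the statement.

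The main obstacle is the uniform control in $(\mathrm{I})$: $\Hcal_\tau$ is an infinite-dimensional ball, hence not totally bounded in $\|\cdot\|_{\Hcal}$ (nor, for a general bounded kernel on $\R^p$, in sup norm), so one cannot cover $\Hcal$ itself. The key is to cover only the finite-dimensional trace of the loss class on the observed sample — an ellipsoid whose size is pinned down \emph{deterministically} by $n^{-1}\sum_i k(x_i,x_i)\le\kappa^2$ — and to invoke an empirical-process inequality built around empirical covering numbers, so that no population-metric net of an infinite-dimensional set is ever needed. A secondary delicate point is reconciling the empirical $L^2$ geometry used for covering with the population expectation defining $R$; this is absorbed either by the Pollard-type inequality (whose proof handles it via symmetrization) or, if one prefers a net-plus-union-bound argument throughout, by an auxiliary concentration of the empirical covariance operator $n^{-1}\sum_i\Phi(x_i)\otimes\Phi(x_i)$ around its population counterpart.
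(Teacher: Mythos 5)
Your proposal is correct and follows essentially the same route as the paper: the identical decomposition $R(\widehat f,\widehat\beta)-R_{\text{emp}}(\widehat f)=\{R(\widehat f,\widehat\beta)-\widetilde R_{\text{emp}}(\widehat f,\widehat\beta)\}+\{\widetilde R_{\text{emp}}(\widehat f,\widehat\beta)-R_{\text{emp}}(\widehat f)\}$, with the first term controlled uniformly over $\Hcal_\tau\times I_\tau$ by symmetrization plus empirical $L^2(T_x)$ covering numbers of size $\mathcal{N}_\varepsilon$ and Hoeffding's inequality (the paper's Theorem~\ref{thm:SymBound} via Lemmas~\ref{lem:EpsilonNetInSymmetrization}--\ref{lem:MaxBound} and~\ref{lem:CoveringNum}), and the second reduced to concentration of $n_1/n$ around $\pi_1$ (the paper's Theorem~\ref{thm:DifferenceRiskSupConverge} and Lemma~\ref{lem:OptScoreSupNormBound}). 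Your treatment of the second term --- comparing the two empirical risks at the common intercept $\widehat\beta$ via $a^2-b^2=(a-b)(a+b)$, so that only $\|\Score-\theta^*\|_\infty$ enters and the $\overline{Y\theta^*}$ bookkeeping disappears --- is marginally more streamlined than the paper's term-by-term expansion, but the substance is the same.
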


Theorem \ref{thm:EstErrorProbBound} bounds the expected risk of $\widehat{f}$ compared to the best in-class expected risk, whereas Theorem \ref{thm:EmpRiskProbBound} bounds it in terms of the empirical risk of $\widehat{f}$. 

\section{Sparse Kernel Optimal Scoring}\label{sec:SparseKOS}
The regularized KOS problem~\eqref{eq:RegKernOptScore} performs no feature selection. All $p$ features are used in construction of $\widehat{f}$ and the subsequent classification rule. In many applications, however, it is reasonable to expect that not all features contribute to class separation. Including such noisy features in the discriminant rule can lead to poor classification performance. %See \cite[Section 4.1.2]{allen_automatic_2013} for a particular case of this phenomenon. As another example, 
Figure \ref{fig:TrainTestScatterplot} shows an example based on simulated data with four features. Only the first two features contribute to class separation, while the third and fourth features are noise.

%\begin{figure}
%\centering
%\begin{subfigure}[!t]{.5\linewidth}
%  \centering
%  \includegraphics[scale=.5]{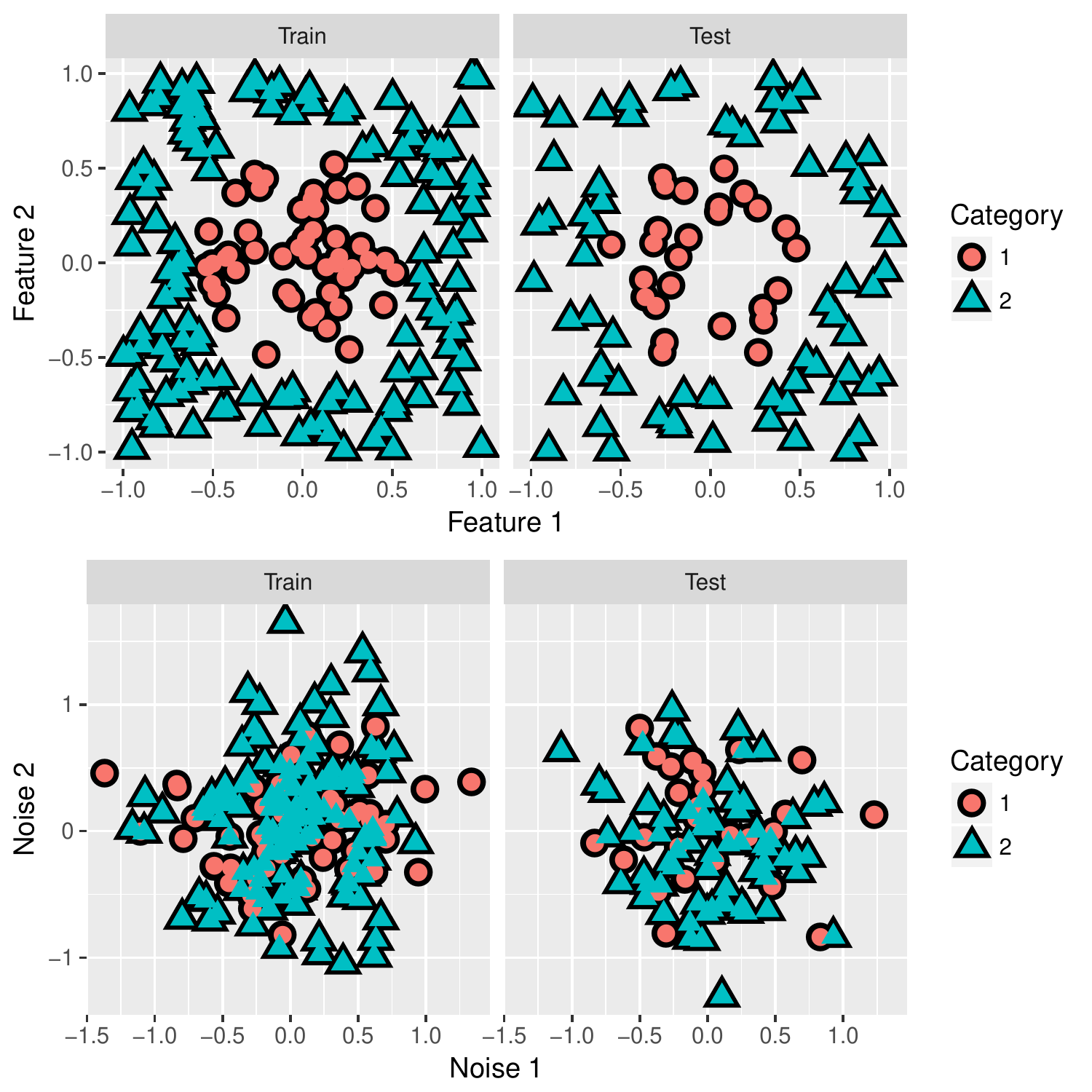}
%\end{subfigure}%
%\begin{subfigure}[t]{.5\linewidth}
%  \includegraphics[scale=.5]{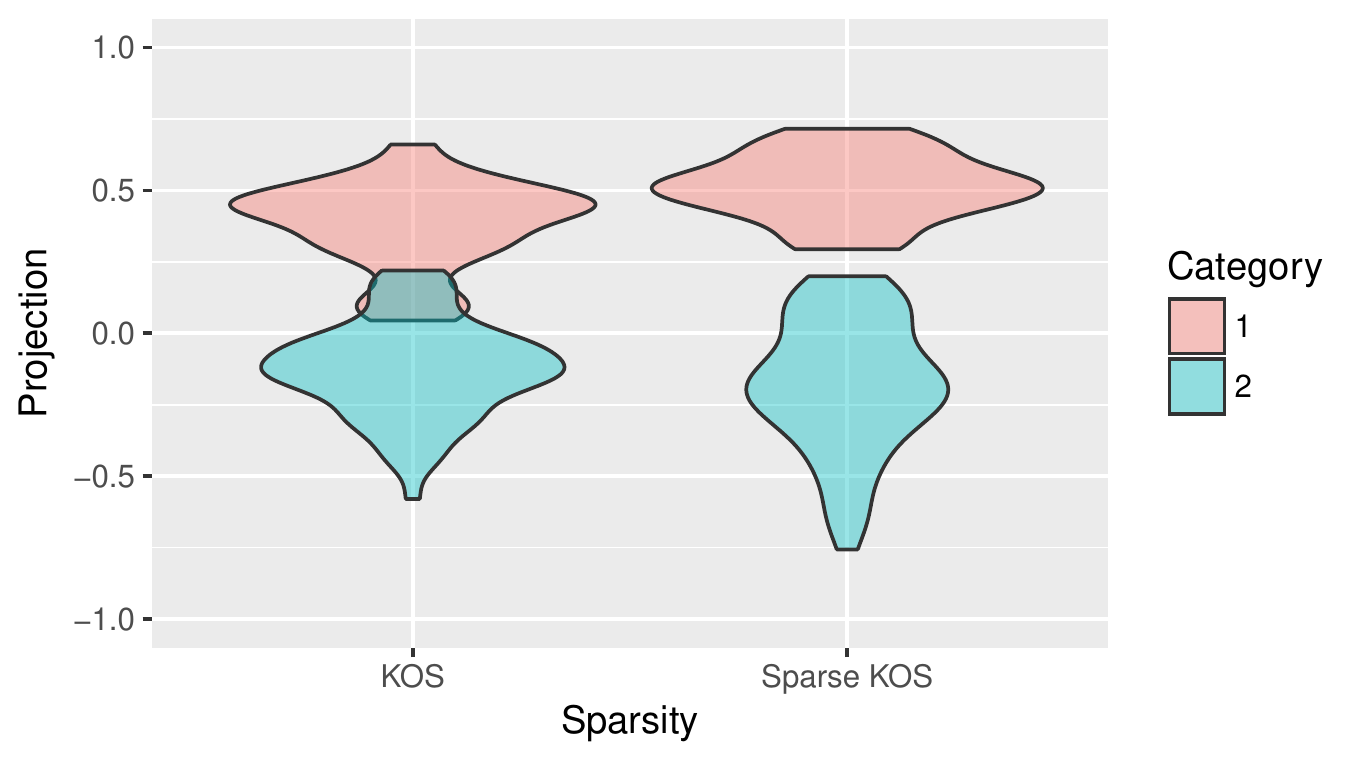}
%\end{subfigure}
%\caption{\textbf{Left: } Simulated training and test data with four features, only features $1$ and $2$ contribute to class separation. \textbf{Right: }Comparing the projection values \eqref{eq:Projection} of the test data in Figure \ref{fig:TrainTestScatterplot} with and without sparsity.}
%\end{figure}

\begin{figure}
\centering
\includegraphics[width=8cm]{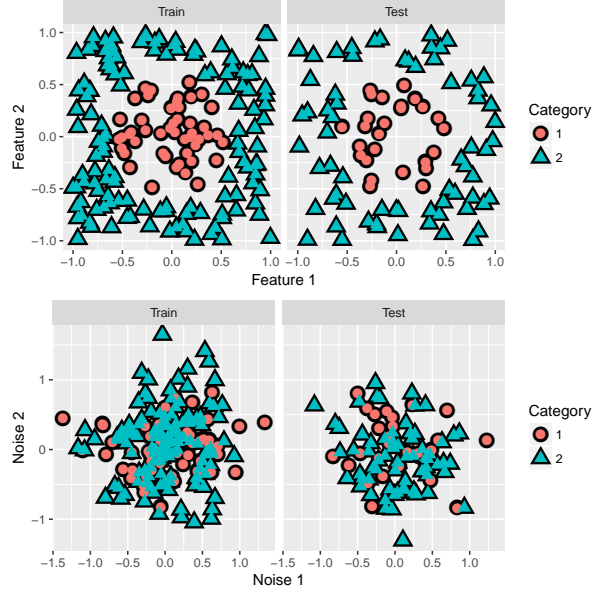}
 \caption{Simulated training and test data with four features, only features $1$ and $2$ contribute to class separation.} 
\label{fig:TrainTestScatterplot}
\end{figure}

 Figure \ref{fig:ProjectionHistograms} shows the projected data values \eqref{eq:Projection} formed by applying KOS to (i) all four features and (ii) only the first two features.
The class separation is perfect based on the two ``true" features, but the projected values overlap with the addition of noisy features, thus illustrating the need for feature selection within KOS.

\begin{figure}[!t]
\centering
\includegraphics[width=8cm]{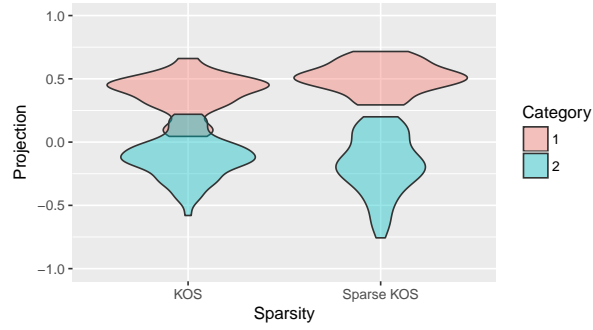}
 \caption{Comparing the projection values \eqref{eq:Projection} of the test data in Figure \ref{fig:TrainTestScatterplot} with and without sparsity.}
\label{fig:ProjectionHistograms}
\end{figure}

To incorporate feature selection, we borrow the ideas from \cite{allen_automatic_2013} and introduce a weight vector $w\in \R^p$, where we restrict each feature as $w_j \in [-1,1]$. The weight vector is used to form the weighted kernel matrix $(\mathbf{K}_{w})_{i,j}=k(wx_i, wx_j)$, where $wx=(w_1 x_1, \dots, w_p x_p)^{\top}$ is the Hadamard product between the weight vector $w$ and observed feature vector $x$. If $w=\one$, $\bK_w = \bK$ from Section~\ref{sec:RegKernOptScore}. Otherwise, $w$ can be used to rescale features with respect to each other, and more importantly perform feature selection. If $w_j = 0$ for some feature $j$, then the kernel matrix $\bK_w$ is formed without the $j$th feature, successfully eliminating that feature from the classification rule. The main difficulty is that the optimal weight vector $w$ is unknown, and therefore has to be learned in addition to learning the discriminant function $f$.

We adjust~\eqref{eq:RegKernOptScore} to perform joint minimization over the coefficient vector $\alpha \in \R^n$ and the weight vector $w\in \R^p$. To encourage feature selection, we add an $\ell_1$-penalty on $w$ as in \cite{allen_automatic_2013} leading to the following minimization problem:
\begin{equation}\label{eq:ObjFun}
\begin{split}
\minimize_{\alpha\in \mathbb{R}^{n},\,w\in \mathbb{R}^{p}}\bigg\{ \frac{1}{n}&\|Y\Score-C\mathbf{K}_wC\alpha\|_{2}^{2}+\lambda\|w\|_{1}+\gamma \alpha^{\top}(C\mathbf{K}_wC+\varepsilon I)\alpha\bigg\}\\
\text{subject to}&\quad-1\leq w_i\leq 1\text{ for }i=1, \dots, p. 
\end{split}
\end{equation}
%Substitute the weighted kernel matrix in to \eqref{eq:RidgeKOS} and add a LASSO penalty on the weight vector $w\in\mathbb{R}^{p}$ to get the objective function
%\begin{equation}\label{eq:ObjectiveFunction}
%\frac{1}{n}\|Y\theta-C\mathbf{K}_wC\alpha\|_{2}^{2}+\lambda\|w\|_{1}+\gamma\alpha^{\top}(C\mathbf{K}_wC+\varepsilon I)\alpha.
%\end{equation}
Here $\lambda \geq 0$ is the tuning parameter that controls the sparsity of the weight vector $w$, with larger values leading to sparser solutions. We call~\eqref{eq:ObjFun} sparse kernel optimal scoring. Given the solution pair $(\widehat w, \widehat \alpha)$, we perform classification as in Section~\ref{sec:Projection} with $\bK_{\widehat w}$ being substituted for $\bK$ and $\widehat w x$ substituted for $x$ in forming the projected values $P(x)$ in~\eqref{eq:Projection}.

\begin{remark} Unlike our restriction $w_k\in [-1,1]$,
\cite{allen_automatic_2013} considers $w_k \in [0,1]$. Both lead to $w_k^2\in [0,1]$, but we found that the latter may force all the weights to zero even when $\lambda = 0$. % and $\gamma$ is sufficiently large.
This behavior is avoided when the weights are allowed to be negative.
\end{remark}

\subsection{Optimization Algorithm}\label{sec:Algorithm}

In this section we describe the optimization algorithm for problem~\eqref{eq:ObjFun} given the fixed values of  $\gamma, \lambda\ge 0$. Methods for parameter selection are presented in Section \ref{sec:ParameterSelection}. We define the objective function in~\eqref{eq:ObjFun} as
\begin{equation}\label{eq:ObjectiveFunction}
\begin{split}
 Obj(w, \alpha)& = \frac{1}{n}\|Y\Score-C\mathbf{K}_wC\alpha\|_{2}^{2}+\lambda\|w\|_{1}+\gamma \alpha^{\top}(C\mathbf{K}_wC+\varepsilon I)\alpha.
 \end{split}
\end{equation} 

There are two challenges in solving~\eqref{eq:ObjFun}: (i) non-convexity of the objective function \eqref{eq:ObjectiveFunction} in $(\alpha, w)$ and (ii) non-convex mapping $w\mapsto \mathbf{K}_w$. \cite{allen_automatic_2013} propose to overcome these challenges by (i) iterative minimization over $\alpha$ 
and $w$ and (ii) linearization of the weighted kernel matrix $\bK_w$ with respect to the current value of the weight vector. We adapt the algorithm from \cite{allen_automatic_2013} to problem~\eqref{eq:ObjFun}.

Given the current value of the weight vector $w$, we form the corresponding weighted kernel matrix $\bK_w$ and update $\alpha$ according to~\eqref{eq:alpha} with $\bK$ substituted with $\bK_w$. Given the current value of the coefficient vector $\alpha$, we update $w$ by linearizing the kernel matrix. Consider the first-order Taylor approximation of 
$\mathbf{K}_{w}$ with respect to $w$ centered at the previous value $w^{(t-1)}$ elementwise: 
%{\color{red} this is a spot we might be able to reduce. Present the linearized kerenel and then state that the linearized Sparse KOS problem is equivalent to minimizing some quadratic form. Remove the definition of $T$, $Q$, and $\beta$. We can put those definitions in the Supplement.}
\begin{align*}
\widetilde{\mathbf{K}}_{w}(x_i,x_j):=\mathbf{K}_{w^{(t-1)}}(x_i, x_j)+\{\nabla_{w}\mathbf{K}_{w^{(t-1)}}(x_i, x_j)\}^{\top}(w-w^{(t-1)}),
\end{align*}
where $\nabla_w\mathbf{K}_{w^{(t-1)}}(x_i, x_j)\in \R^p$ is the gradient of $k(wx_i, wx_j)$ with respect to $w$ evaluated at $w^{(t-1)}$. We substitute $\widetilde{\mathbf{K}}_{w}$ in place of $\mathbf{K}_{w}$ within \eqref{eq:ObjFun}.
Let $T\in \R^{n\times p}$ be
$$
T :=
\begin{pmatrix}
\sum_{\ell=1}^{n}(C\alpha)_{\ell}\nabla_{w}\, \mathbf{K}_{w^{(t-1)}}(x_1, x_\ell)^{\top}\\
\vdots\\
\sum_{\ell=1}^{n}(C\alpha)_{\ell}\nabla_{w}\, \mathbf{K}_{w^{(t-1)}}(x_n, x_\ell)^{\top}
\end{pmatrix}.
$$

For fixed $\alpha$, the minimization problem~\eqref{eq:ObjFun} with respect to $w$ can be written as
\begin{equation}
\begin{split}\label{eq:wlasso}
&\minimize_{w} \bigg\{\frac{1}{2}w^{\top}Qw-\beta^{\top}w+ \frac{\lambda}{2}\|w\|_{1}\bigg\}\\
&\text{subject to }-1\leq w_i\leq 1\text{ for }i=1, \dots, p;
\end{split}
\end{equation}
where 
\begin{equation}\label{eq:QandB}
\begin{split}
&Q=\frac{1}{n}(CT)^{\top}CT \in \R^{p\times p},\\
&\beta=\frac{1}{n}T^{\top}C
[Y\Score-C\mathbf{K}_{w^{(t-1)}}C\alpha+CTw^{(t-1)}]-2^{-1}\gamma T^{\top}C\alpha\in \mathbb{R}^{p}.
\end{split}
\end{equation}

%Section~\ref{sec:UpdateWeights} provides details on kernel linearization and weight vector update, while the full algorithm is presented in Algorithm~\ref{a:algorithm}.

\begin{algorithm}[!t]
 \SetAlgoLined
 \caption{Sparse Kernel Optimal Scoring}\label{a:algorithm}
  \SetKwInOut{Input}{Input}
  \SetKwInOut{Output}{Output}
\DontPrintSemicolon 
  
\Input{$X\in \R^{n\times p}$, $Y\in \R^{n \times 2}$, $\widehat \theta$, $\sigma >0$, $\gamma >0$, $\lambda \geq 0$ , convergence threshold $\varepsilon_{\text{con}}$}
\Output{Discriminant coefficients $\widehat{\alpha}$ and feature weights $\widehat{w}$.}

$t\gets 0$\;
$w^{(0)} \gets \mathbf{1}$\;
$(\mathbf{K}_{w^{(0)}})_{i,j} \gets k(w^0 x_i, w^0 x_j)$, $\bK_{w^{(0)}}\gets \{(\mathbf{K}_{w_0})_{i,j}\}$\;

\Repeat{$\mbox{\textrm{Obj}}(\alpha^{(t)}, w^{(t)})-\mbox{Obj}(\alpha^{(t-1)}, w^{(t-1)})< \varepsilon_{\text{con}}$}{
$t\gets t+1$\;

Update $\alpha^{(t)}$ according to~\eqref{eq:alpha} with $\bK = \bK_{w^{(t-1)}}$\;
 
Update $w^{(t)}$ using coordinate descent with updates according to~\eqref{eq:wupdate}\;

$(\mathbf{K}_{w^{(t)}})_{i,j}\gets k(w^{(t)}x_i, w^{(t)}x_j)$\;
}
\Return{$\widehat{\alpha}=\alpha^{(t)}$, $\widehat{w}=w^{(t)}$}
\end{algorithm}

%\subsection{Update of Weights}\label{sec:UpdateWeights}

%In this section, we describe the update of weight vector using the linearization of kernel matrix as proposed in \cite{allen_automatic_2013}. 

Problem~\eqref{eq:wlasso} is of the same form as the penalized lasso problem \cite[Chapter 5]{hastie2015statistical} with extra convex constraints on $w$. % \cite{boyd2004convex}, and \cite[Chapter 5]{hastie2015statistical}.
%\textbf{... (reference) think of appropriate reference here}
Therefore, we can use the coordinate-descent algorithm to solve~\eqref{eq:wlasso}.

Consider optimizing~\eqref{eq:wlasso} with respect to $w_k$. From the KKT conditions~\cite{boyd2004convex}, the solution must satisfy
\begin{equation}\label{eq:wupdate}
\widehat w_k = \sign(\widetilde w_k)\min(|\widetilde w_k|, 1),
\end{equation}
where
$$
\widetilde{w}_{k}:=\frac{1}{Q_{kk}}S_{\lambda/2}\bigg(\beta_k-\sum_{i\neq k}Q_{ki}w_{i} \bigg),
$$
and $S_{\lambda/2}(x):=\text{sign}(x) \max\{|x|-\lambda/2,\, 0\}$ is the soft-thresholding function. The coordinate-descent algorithm proceeds by applying the update~\eqref{eq:wupdate} on each feature $k$ until convergence.

The full algorithm for~\eqref{eq:ObjFun} is summarized as Algorithm~\ref{a:algorithm}. While the update of $w$ is based on approximation of objective function~\eqref{eq:ObjectiveFunction}, in our experience the objective function is always decreasing at each iteration. In case of convergence issues, one can use a line search along a descent direction of $w$ \cite{allen_automatic_2013}. We refer to \cite{allen_automatic_2013} for further discussion of algorithmic convergence.

\section{Parameter Selection}\label{sec:ParameterSelection}
This section describes the selection of the kernel parameter (tailored to the gaussian kernel parameter $\sigma^2$), ridge parameter $\gamma$, and sparsity parameter $\lambda$.

\subsection{Gaussian Kernel Parameter Selection}\label{sec:KernParameter}

%{\textbf{\textit{Alex, are the methods below specifically for selecting Gaussian kernel parameter or any kernel parameter? The approach you proposing will only work for Gaussian kernel, correct?}}}

%{\color{red} Alex- It could conceivably be used in any context where the kernel parameter scales distances or radius uniformly in all directions. The selection method is guided more by geometric intuition of how the groups of data are spaced in $\mathbb{R}^{p}$ rather than anything specifically gaussian. }

%\cite{li2010automatic} proposes a method based on maximizing the kernel evaluation between points in the same class while also trying to minize the kernel evaluation of points in seperate classes. 
%{\textbf{\textit{I did look into the paper by \cite{li2010automatic}. The approach they propose is actually not restrictive to SVM, and could be used in our framework. Given the time and that the work is already done, we definitely not going to explore it now, but please be more careful in reading the papers in the future, and finding appropriate references to support the statements. \cite{li2010automatic} is an appropriate reference for the new method for kernel parameter selection, it is not an appropriate reference for how common is k-fold cross-validation. Hope this helps in the future.}}}

We propose to use 5-fold cross-validation to minimize the error rate. To reduce computational cost, we only consider five tuning parameters based on the $\{.05, .1, .2, .3,.5\}$ quantiles of the set of squared distances between the classes
$$
\{\|x_{i_1}-x_{i_2}\|_2^2\,:x_{i_1} \in C_1,\,x_{i_2}\in C_2\}.
$$
This approach is similar to the one used in the R package \textsf{kernlab} \cite{karatzoglou2004kernlab}, which takes values between $.1$ and $.9$ quantiles of the distance statistic $\|x-x'\|_2$ between distinct data points taken from a random subset of the full data. \cite{caputo2002appearance} and \cite{karatzoglou2004kernlab} state that good performance can be achieved with any value of $\sigma$ in this range. Our approach is different in that (i) we select one value based on CV, (ii) only look at the distances between classes, and (iii) only consider lower quantiles.
%There is no conventional method for kernel parameter selection.
%Several methods for selecting kernel parameter have been proposed in the literature. $k$-fold cross-validation is a typical approach \cite{mika_fisher_1999}
% \cite{li2010automatic} notes that $k$-fold cross validation is typically used, as in .
%state that the optimal kernel parameter lies between the $.1$ and $.9$ quantiles of the distance statistic $\|x-x'\|$.
%state that ``Pretty much any value within this interval leads to good performance.'' 
%The method used by \cite{karatzoglou2004kernlab} considers all distances between points without regard to group membership, whereas our method only considers distances of points in separate groups. 
 We find that this yields good predictive accuracy, and we conjecture that the reason is the presence of noise features, which inflate the distance values $\|x_{i_1}-x_{i_2}\|_2$.  This is supported by empirical observation that the quantiles based on the full set of features will exceed the corresponding quantiles based on the reduced set of informative features.

%{\textbf{\textit{Very important, I am not confident we understand correctly what exactly kernlab does. I thought originally the distance test statistics is based on two groups, but now I think it's based on every sample in the data (based on reading the code here \url{https://github.com/cran/kernlab/blob/master/R/sigest.R}. I am also not certain whether the selection is based on squared or non-squared distances, sigma or sigma$^2$, and whether the inverse of quantiles is sometimes taken. I suggest we try to reproduce the numbers from the example of sigest to understand what exactly is happening based on their github code. Please also check that the way I adjusted description below actually matched what you are currently implementing. If not, please adjust so it matched exactly}}}

%{\color{red}Alex- I've looked at the code and read the documentation. I think I understand how they select their $\sigma^2$ parameter, but I don't think it's a good idea. 

%As you mentioned, they take a random subset of their data, and then consider the set of pairwise distances between all non-zero distances in that set. They then consider the $.1$, $.5$, and $.9$ quantiles of that distance set. They return the reciprocal of those quantiles. Even though their formulation of the gaussian kernel is different from ours, the two approaches appear to be nearly equivalent because the take the reciprocal of the distance value. The only difference between our method and their's seems to be that they don't consider distances between groups, and we do.

%How do they select a specific parameter value?}

\subsection{Ridge parameter selection}\label{sec:RidgeSelect}

%{\color{blue} Mental note to myself, shorten and move some to introduction}

Due to the computational expense of cross-validation, we propose an alternative approach for ridge parameter selection based on the shrinkage of kernel matrix. \cite{lancewicki_regularization_2017} proposes to stabilize the kernel matrix via shrinkage towards a target matrix and derives an optimal value for the shrinkage parameter. Following~\cite{lancewicki_regularization_2017}, in KOS we want to stabilize $(C\mathbf{K}_wC)^{2}$ with the target matrix $C\mathbf{K}_w C+\varepsilon I$, and therefore consider
$$
(C\mathbf{K}_wC)^{2} + \gamma (C\mathbf{K}_w C+\varepsilon I)
$$
for $\gamma >0$. Let $t = \gamma/(1+\gamma)$, then the optimal value of $t$ is $\widehat t = \min(\max(0, \widetilde t), 1)$, where
$$\widetilde{t}:=\frac{n}{(n-2)}\bigg(\frac{\|\text{diag}(C\mathbf{K}C)\|_{F}^{2}-\frac{1}{n}\|C\mathbf{K}C\|_{F}^{2}}{\|C\mathbf{K}C\|_{F}^{2}}\bigg).$$
Solving back for $\gamma$ gives the ridge penalty $\widehat{\gamma}=\widehat{t}/(1-\widehat{t})$. We call this approach Stabilization.

Generalized cross-validation (GCV) \cite{Craven1978, xiang1996generalized, golub1979generalized} is another common method for selection of ridge parameter, however we found that it performs poorly compared to proposed Stabilization method. Figure~\ref{fig:RidgeError} compares the selected ridge parameters as well as corresponding error rates for two methods. 
We generate 100 training and testing datasets following the model in Section \ref{sec:SimulatedData}. Each time we consider five possible kernel parameters $\sigma^2$ based on the distance quantiles as in Section~\ref{sec:KernParameter}. We then select ridge parameters by either GCV or proposed stabilization method, and choose the best sparsity parameter for each as in Section~\ref{sec:LASSOselect}. We find that GCV consistently selects smaller value for the ridge parameter than our approach leading to higher error rates. We conjecture that surprisingly poor performance of GCV is due to the presence of noise variables, although we do not have the formal justification. 

\begin{figure}[!t]
\centering
\includegraphics[scale=.75]{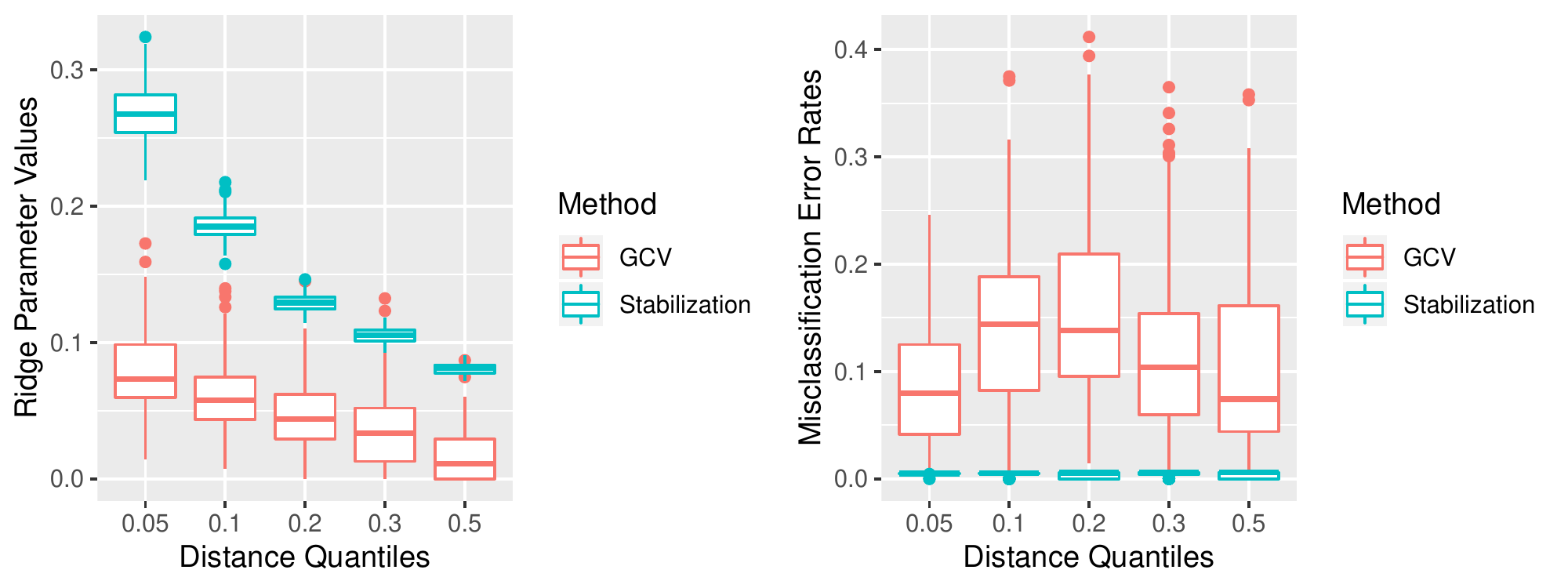}
\caption{Comparison between generalized cross-validation (GCV) and proposed Stabilization method for selection of ridge parameter $\gamma$ over 100 replications.
\textbf{Left:} Selected values of $\gamma$; \textbf{Right:} Misclassification error rates.}
\label{fig:RidgeError}
\end{figure}

\subsection{Sparsity parameter selection}\label{sec:LASSOselect}

We select $\lambda$ using 5-fold cross-validation (CV) to minimize the error rate over a grid of 20 equally-spaced values in $[10^{-10}\lambda_{\text{max}}, \lambda_\text{max}]$. We set $\lambda_{\max} = 2 \|\beta\|_\infty$, where $\beta$ is as in~\eqref{eq:QandB}, since the solution $\widehat w$ to~\eqref{eq:wlasso} is zero if $\lambda \geq \lambda_{\max}$ (see Lemma~1 in the Supplement).

%Let $\beta$ be as in~\eqref{eq:QandB} and let $\lambda_{\max} = 2 \|\beta\|_\infty$. The solution $\widehat w$ to~\eqref{eq:wlasso} is zero if $\lambda \geq \lambda_{\max}$. We select $\lambda$ using $5$-fold cross-validation over a grid of $20$ equally spaced values

%{\textbf{\textit{Usually we use the logarithmic grid in lasso context, I would not rerun any simulations results because of this, but will adjust the grid generation in R package to be logarithmic by default, see notes from sparsity class}}}
%{\color{red} Alex- Okay. Can work on that later. Let's leave this this comment up}

%on the interval $[10^{-10}\lambda_{\text{max}}, \lambda_\text{max}]$ to minimize the misclassification error rate.

%{\color{blue} rather than having the lemma, refer to some lasso literature for this, shorten the paper, note to myself}

\section{Empirical studies}\label{sec:EmpiricalStudies}

We compare the performance of the following methods: (i) sparse kernel optimal scoring (Sparse KOS); (ii) kernel optimal scoring (KOS); (iii) random forests; (iv) kernel support vector machines (kernel SVM); (v)~neural networks; (vi) K-nearest neighbors (KNN); and (vii) sparse linear discriminant analysis (sparse LDA).

We implement sparse KOS using the gaussian kernel with parameters selected as in Section \ref{sec:ParameterSelection}, KOS is implemented by setting $\lambda = 0$ and $w=\one$. We use the R package \textsf{randomForest} \cite{randomForest} to create a classifier with 50 decision trees. We use the R package \textsf{kernlab} \cite{karatzoglou2004kernlab} for kernel SVM using the gaussian kernel with parameter selected as in Section \ref{sec:KernParameter}. We use \textsf{keras} \cite{chollet2017kerasR} to implement a neural network with the ReLU activation function, 50 units, 100 epochs, and the default batch size.
We use \textsf{class} \cite{KNN} for KNN with $K=5$. We use the R package \textsf{MGSDA} \cite{Gaynanova:2016wk} for sparse LDA. 

\subsection{Simulated model 1}\label{sec:SimulatedData}
We generate data as in Figure~\ref{fig:TrainTestScatterplot} with $p=4$ features $(x_1, x_2, x_3, x_4)$. The first two features satisfy $\sqrt{x_{i1}^2 + x_{i2}^2}\ge 2/3$ if the $i$th sample is in class 1, and $\sqrt{x_{i1}^2 + x_{i2}^2}\le 2/3 - 1/10$ if the $i$th sample is in class 2. We generate $300$ samples with each feature from the uniform distribution on $[-1,1]$ and only leave samples that satisfy one of the class requirements ($n\approx 270$). The remaining two features are generated as independent gaussian noise variables, $x_{ij}\sim \Ncal (0, 2^{-1})$ for $j=3,4$ and all samples $i$. We use 2/3 of the samples for training, and 1/3 for testing, maintaining the class proportions. We repeat the data generation process and the split 100 times, the misclassification error rates over test datasets are presented in Figure~\ref{fig:ErrorScatterplot}.

%We construct the data set as follows:
%\begin{enumerate}
%\item Each data point has four coordinates $(x_i, y_i, \varepsilon_i^1, \varepsilon_i^2)$. The first two coordinates $(x_i, y_i)\sim \text{Unif}\,[-1,1]^{2}$ are useful for group classification, and the last two $\varepsilon_i^1, \varepsilon_i^2\sim \mathcal{N}(\mu=0, \sigma^2=\frac{1}{2})$ are independent gaussian noise variables. 
 
%\item All data points such that
%$\sqrt{x_{i}^{2}+y_{i}^{2}}\leq 2/3-1/10\approx .566$ %belong to category $1$. All data points
%such that $\sqrt{x_{i}^{2}+y_{i}^{2}}\geq 2/3$ belong to %category $2$. 

%\item All data points such that $2/3-1/10< \sqrt{x_i^2+y_i^2}<2/3$ are removed. Approximately 270 data points will remain.

%\item We take stratified samples of training and test data so that class proportions are maintained. Two-thirds of the data in both classes are randomly sampled to form the training data, and the rest of the data is used as testing data.

%\end{enumerate}

\begin{figure}
\centering
\begin{subfigure}[t]{.5\linewidth}
  \centering
  \includegraphics[width=7cm]{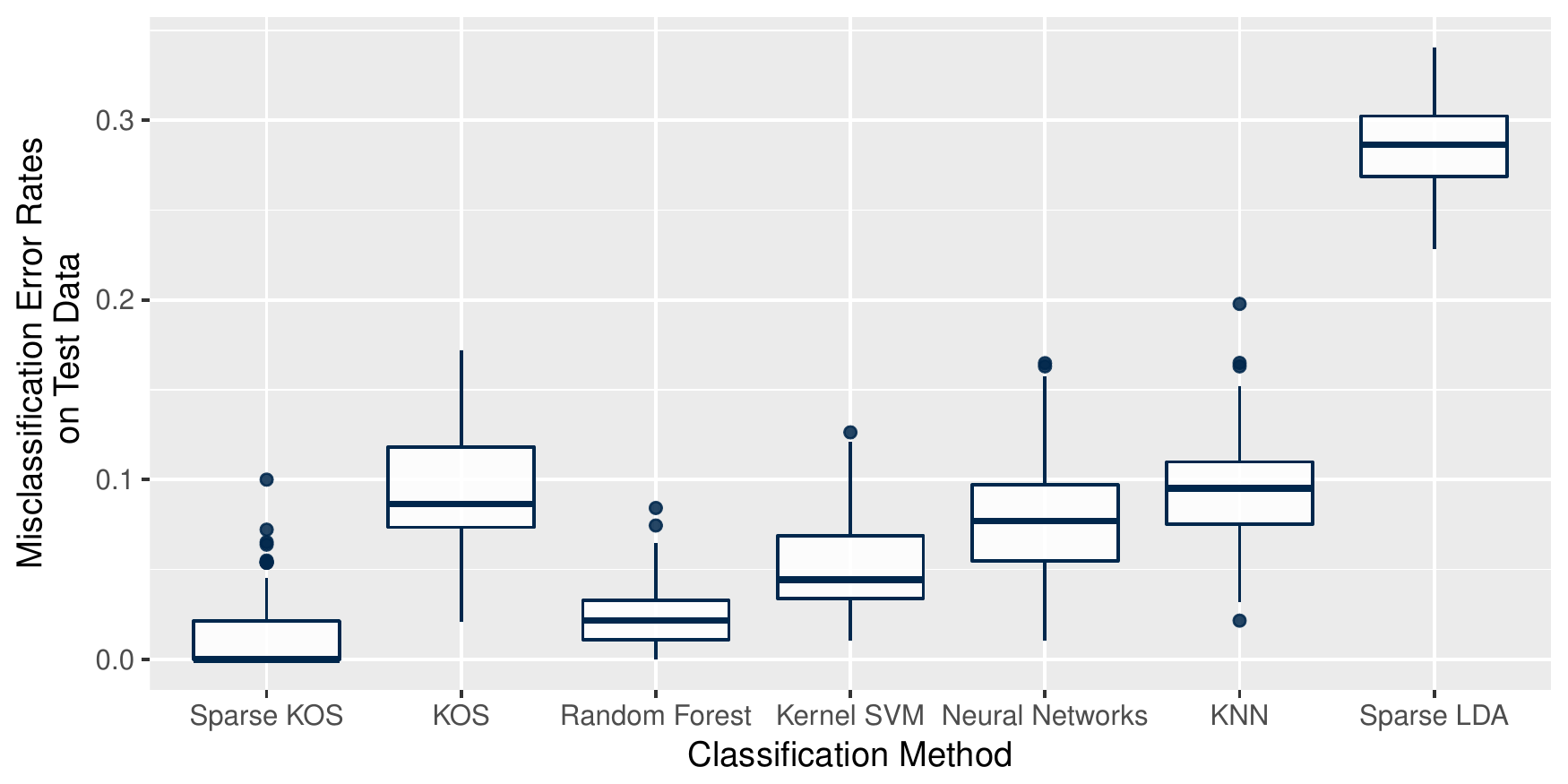}
\end{subfigure}%
\begin{subfigure}[t]{.5\linewidth}
 \includegraphics[width=7cm]{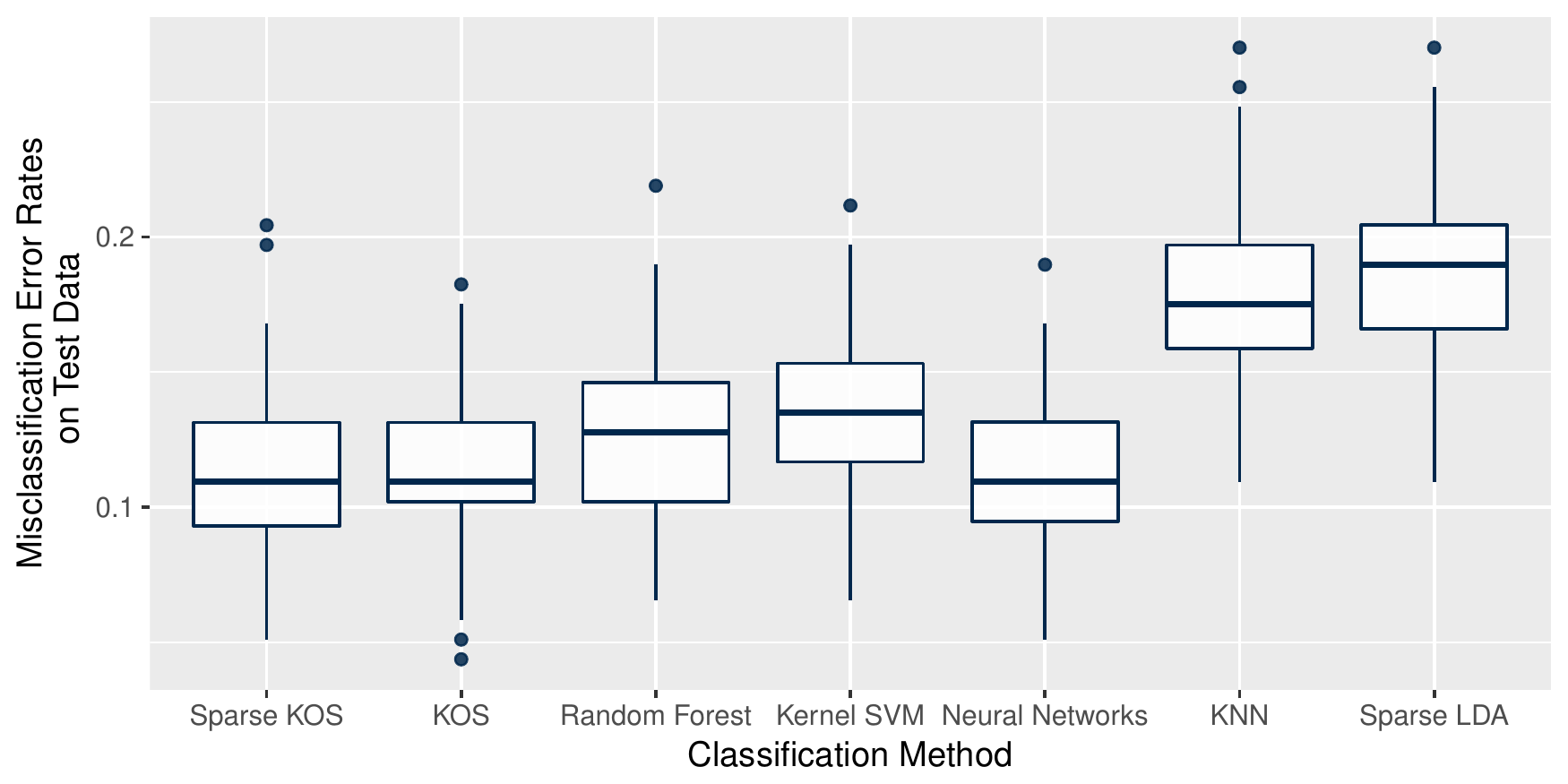}
\end{subfigure}
\caption{\textbf{Left: }Misclassification error rates based on 100 replications of simulated model 1.
\textbf{Right: }Misclassification error rates based on 100 replications of simulated model 2.}
\label{fig:ErrorScatterplot}
\end{figure}

%\begin{figure}
%\centering
%\includegraphics[width=8cm]{Model1Errors.pdf}
%\caption{Misclassification error rates based on 100 replications of simulated model 1.}
%\label{fig:ErrorBoxplot}
%\end{figure}

Sparse KOS performs the best out of all classifiers with random forest being second-best. Sparse LDA performs the worst, likely due to non-linear optimal classification boundary. Sparse KOS has excellent feature selection in this study- giving nonzero weight to the first two features in all $100$ splits while giving $\widehat w_j = 1$ for $j=1,2$ in 98 out of 100 replications and $\widehat w_j = 0$ for $j=3,4$ in $99$ out of 100 replications.

%\begin{figure}[t]
%\includegraphics[width=8cm]{Model1Size.pdf}
%\caption{Average of the absolute values of the weight values for each feature across the 100 independent simulations of model 1. Bars represent plus or minus twice the standard error.}
%\label{fig:WeightsSim1}
%\end{figure}

%The results show that sparse kernel optimal scoring out performs the six other non-parameteric classifiers. The median misclassification error rate for sparse KOS is $0.00\%$, and the upper quartile error rate is $1.11\%$. By comparison, the lower quartile error rate for random forest classification is $1.08\%$ and the median is $2.15\%$. Sparse linear discriminant analysis has a median error rate of $28.65\%$. Kernel SVM has a median error rate of $5.38\%$, while KOS has a median error rate of $8.60\%$. Neural Networks have a median error rate of $7.53\%$.

\subsection{Simulated model 2}
We generate data with $p=10$ features and $n=400$ samples such that $x_{i3}+\sin(x_{i4}+x_{i1})<(x_{i2})^2$ if sample $i$ belongs to class 1, and $x_{i3}+\sin(x_{i4}+x_{i1})\ge (x_{i2})^2$ if sample $i$ belongs to class 2. We use the uniform distribution on $[-1,1]$ for each $x_{ij}$, so that the last 6 features are uniform noise.
%\begin{enumerate}
%\item Let $x_{j}\in \text{Unif}[-1,1]$ for $j=1, \dots, p$. Generate $Y\in \mathbb{R}^{n}$ as $y_{i}=1$ if
%\[
%x_{3}^{i}+\sin(x_{4}^{i}+x_{1}^{i})<(x_2^{i})^2
%\]
%and $2$ otherwise.
%\end{enumerate}
As with the previous example, we use 2/3 of the samples for training, and 1/3 for testing, where the split is performed to maintain the class proportions. We repeat the data generation process and the split 100 times. The misclassification error rates over test datasets are presented in Figure~\ref{fig:ErrorScatterplot}.

%\begin{figure}[!t]
%\centering
%\includegraphics[width=8cm]{Model2Errors.pdf}
%\caption{Misclassification error rates based on 100 replications of simulated model 2.}
%\label{fig:SecondErrorScatterplot}
%\end{figure}

The lowest misclassification error rates are achieved by sparse KOS, KOS, and neural network classifiers. Sparse KOS behaves similarly to KOS because sparse KOS is unable to consistently select true features. Nevertheless, it gives higher weight values to true features as displayed in Figure \ref{fig:WeightsSim2}. As with the previous example, sparse LDA performs the worst.
%due to the classification boundary being non-linear.

\begin{figure}[!t]
\begin{center}
\includegraphics[width=8cm]{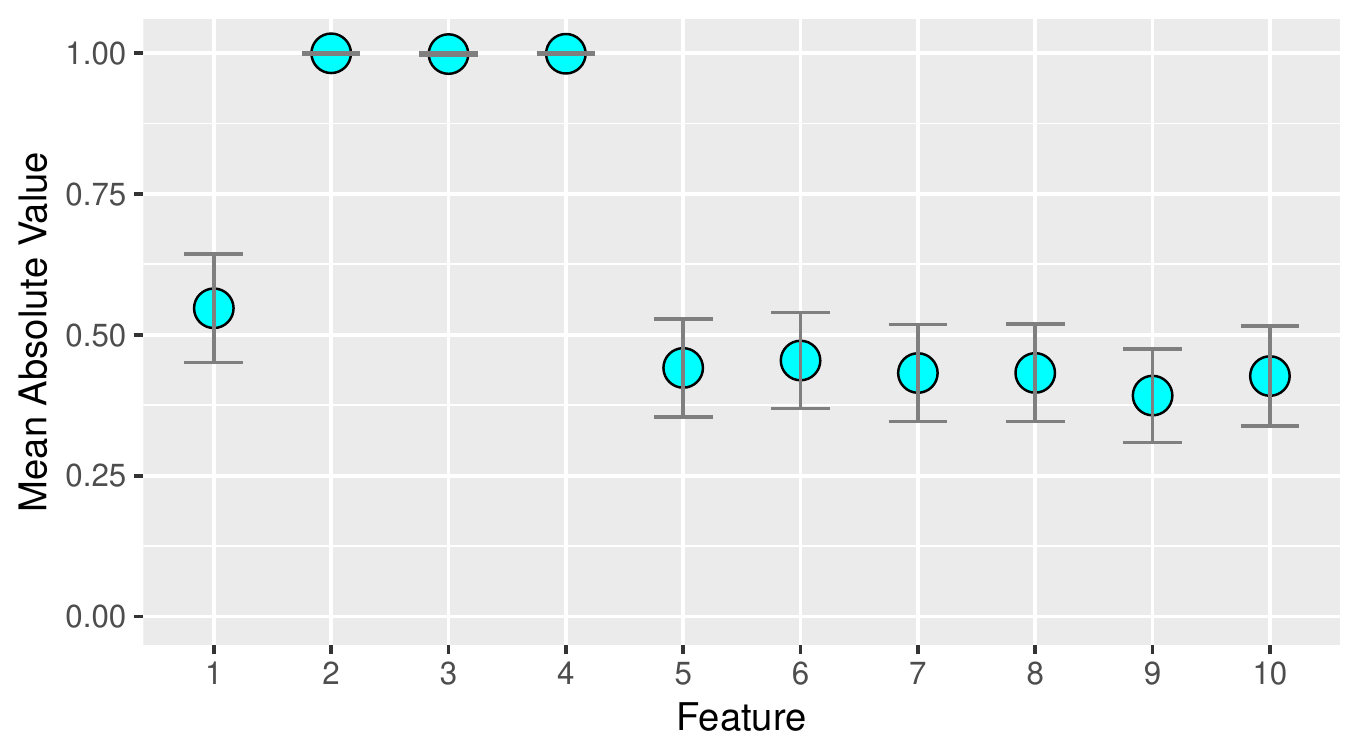}
\caption{The mean absolute values of weights $|w_j|$ for each feature across 100 replications of simulated model 2. The bars represent $\pm 2$ standard errors.}
\label{fig:WeightsSim2}
\end{center}
\end{figure}

\subsection{Benchmark datasets}\label{sec:RealSimulations}

We consider three datasets, summarized in Table~\ref{tab:DataSummary}, which are publicly available from the UCI Machine Learning Repository. We randomly split each dataset 100 times preserving the class proportions and use 2/3 for training and 1/3 for testing. We do not present the error rates for sparse LDA due to its poor performance on these datasets (it classifies every point to the largest of two groups), the misclassification error rates for all other methods are in Table~\ref{tab:Errors}. 

\begin{table}[!t]
\begin{center}
    \begin{tabular}{ | p{3.9cm} | p{1.55cm} | p{1.55cm}  |}
    \hline
    Dataset & Features size & Sample size \\ \hline
    Blood donation  \cite{yeh2009knowledge} & $p=4$ & $n=748$  \\ \hline
    Climate model failure \cite{lucas2013failure} & $p=18$ & $n=540$   \\ \hline
    Credit card default \cite{yeh2009comparisons} & $p=24$ & $n=3,000$ \\
    \hline
    \end{tabular}
     \caption{Description of benchmark datasets} \label{tab:DataSummary} 
\end{center}
\end{table}

\begin{table}[!t]
\begin{center}
    \begin{tabular}{|p{2.5cm} | p{1.7cm} |p{1.5cm} |p{1.7cm} | }
    \hline
    {}& {Blood $\quad$ Donation} & Climate Model & Credit$\quad$  Default\\ \hline
    Sparse KOS & \textbf{22.1} (0.18) & \textbf{4.9} (0.13) & \textbf{18.2} (0.06) \\ \hline
    KOS &\textbf{22.2} (0.20) & 5.4 (0.12) & 19.1 (0.08)\\ \hline
    Random Forest & 24.3 (0.18)& 8.2 (0.06) & 19.1 (0.08)\\ \hline
    Kernel SVM& 22.4 (0.12)&8.7 (0.00)& 20.0 (0.08)\\ \hline
    Neural Network& 23.9 (0.04) &5.4 (0.15)& 21.7 (0.04)\\ \hline
    KNN & 23.5 (0.20) & 7.6 (0.08)& 20.8 (0.08)\\ \hline
    \end{tabular}
     \caption{Mean misclassification errors (\%) over 100 random splits, standard errors are in brackets.} 
      \label{tab:Errors}
\end{center}
\end{table}

In the blood donation study \cite{yeh2009knowledge}, the goal is to determine if a person will donate blood given four features: Recency (months since last donation), Frequency (total number of donations), 
Monetary (total blood donated in cubic centimetres), and Time since first donation. Sparse KOS consistently gives large weights ($|w_j|>0.9$) to every feature but Frequency. The latter gets large weight in only 50\% of splits. %but low or zero weight in the remaining splits. 
Sparse KOS performs similarly to KOS, and we conjecture this is because all features are important for classification.

%\begin{figure}[!t]
%\centering
%\includegraphics[width=8cm]{BloodSize.pdf}
%\caption{ Average of the absolute values of the weight values based on 100 replications of the Blood Donation simulation. Error bars indicate plus or minus two standard errors of the mean.}
%\label{fig:BloodWeights}
%\end{figure}

%\begin{figure}[!t]
%\centering
%\includegraphics[width=8cm]{BloodErrors.pdf}
%\caption{ Misclassification error rates based on 100 replications for the blood donation data %set.}
%\label{fig:BloodError}
%\end{figure}

%The error rates for 100 iterations are shown in Figure \ref{fig:CMFError}. Figure \ref{fig:CMFSize} shows a boxplot of the model sizes over those $100$ iterations. 
%The median number of nonzero coefficients used in sparse KOS is $7$.

%\begin{figure}[!t]
%\centering
%\includegraphics[width=8cm]{CMFErrors.pdf}
%\caption{Misclassification error rates based on 100 replications of the climate model failure %simulation data.}
%\label{fig:CMFError}
%\end{figure}

In the climate model study \cite{lucas2013failure}, the goal is to predict if a climate simulation will crash based on 18 initial parameter values. Sparse KOS consistently selects 4 out of 18: features 1, 2 (variable viscosity parameters), feature 13 (tracer and momentum mixing coefficient), and  feature 14 (base background vertical diffusivity). Sparse KOS has the best classification performance, which is likely due to feature selection.

The credit card data \cite{yeh2009comparisons} has 30,000 data points, but we restrict to $n=3,000$ for computational simplicity. The goal is to predict the default of a customer on credit payments based on 24 features. Sparse KOS has the best classification performance, followed by KOS and random forests. Sparse KOS always selects feature 6 (the repayment status in September, 2005, the latest monthly payment recorded) and rarely selects other features. %. Features 6-11 are records of past payments from April 2005 to September 2005. Sparse KOS is This indicates that 
The most recent payment history is strongly indicative of credit default.

\section{Discussion}

We propose a kernel discriminant classifier with sparse feature selection, called sparse kernel optimal scoring, which is implemented in the R package \texttt{sparseKOS} \cite{sparseKOS}. An advantage of sparsity is that it can improve classification performance (see Section~\ref{sec:EmpiricalStudies}) and lead to more interpretable classification rules. The nonzero weights produced by sparse KOS can be used to judge the importance of features. While we have focused the discussion on the case of two classes, the method can be generalized to multiple classes using optimal scoring formulation in~\cite{gaynanova2018prediction}.

Sparse KOS requires the construction of a $n\times n$ kernel matrix $\mathbf{K}$ and is therefore computationally prohibitive for large $n$ cases. Future research could investigate the appropriate low-dimensional approximations of $\mathbf{K}$ within the kernel optimal scoring framework.

\vspace{0.05in}

\section*{Acknowledgements:} This work was supported in part by \uppercase{NSF-DMS} 1712943.

\bibliographystyle{plain}
\bibliography{AlexReferences,IrinaReferences}

\begin{thebibliography}{}

\bibitem[Allen, 2013]{allen_automatic_2013}
Allen, G.~I. (2013).
\newblock Automatic feature selection via weighted kernels and regularization.
\newblock {\em Journal of Computational and Graphical Statistics},
  22(2):284--299.

\bibitem[Bach, 2008]{bach2008consistency}
Bach, F.~R. (2008).
\newblock Consistency of the group lasso and multiple kernel learning.
\newblock {\em Journal of Machine Learning Research}, 9(Jun):1179--1225.

\bibitem[Bach et~al., 2004]{bach2004multiple}
Bach, F.~R., Lanckriet, G.~R., and Jordan, M.~I. (2004).
\newblock Multiple kernel learning, conic duality, and the smo algorithm.
\newblock In {\em Proceedings of the twenty-first international conference on
  Machine learning}, page~6. ACM.

\bibitem[Baudat and Anouar, 2000]{baudat2000generalized}
Baudat, G. and Anouar, F. (2000).
\newblock Generalized discriminant analysis using a kernel approach.
\newblock {\em Neural Computation}, 12(10):2385--2404.

\bibitem[Bousquet et~al., 2004]{bousquet2004introduction}
Bousquet, O., Boucheron, S., and Lugosi, G. (2004).
\newblock Introduction to statistical learning theory.
\newblock In {\em Advanced lectures on machine learning}, pages 169--207.
  Springer.

\bibitem[Boyd and Vandenberghe, 2004]{boyd2004convex}
Boyd, S. and Vandenberghe, L. (2004).
\newblock {\em Convex optimization}.
\newblock Cambridge university press.

\bibitem[Cai and Liu, 2011]{Cai:2011dm}
Cai, T. and Liu, W. (2011).
\newblock {A direct estimation approach to sparse linear discriminant
  analysis}.
\newblock {\em Journal of the American Statistical Association},
  106(496):1566--1577.

\bibitem[Caputo et~al., 2002]{caputo2002appearance}
Caputo, B., Sim, K., Furesjo, F., and Smola, A. (2002).
\newblock Appearance-based object recognition using svms: which kernel should i
  use?
\newblock In {\em Proceedings of NIPS Workshop on Statistical Methods for
  Computational Experiments in Visual Processing and Computer Vision,
  Whistler}, volume 2002.

\bibitem[Chen et~al., 2017]{chen2017double}
Chen, J., Zhang, C., Kosorok, M.~R., and Liu, Y. (2017).
\newblock Double sparsity kernel learning with automatic variable selection and
  data extraction.
\newblock {\em arXiv preprint arXiv:1706.01426}.

\bibitem[Chollet et~al., 2017]{chollet2017kerasR}
Chollet, F., Allaire, J., et~al. (2017).
\newblock R interface to keras.
\newblock \url{https://github.com/rstudio/keras}.

\bibitem[Clemmensen et~al., 2011]{Clemmensen:2011kr}
Clemmensen, L., Witten, D.~M., Hastie, T., and Ersb{\o}ll, B. (2011).
\newblock {Sparse Discriminant Analysis}.
\newblock {\em Technometrics}, 53(4):406--413.

\bibitem[Craven and Wahba, 1978]{Craven1978}
Craven, P. and Wahba, G. (1978).
\newblock Smoothing noisy data with spline functions.
\newblock {\em Numerische Mathematik}, 31(4):377--403.

\bibitem[Diethe et~al., 2009]{diethe2009matching}
Diethe, T., Hussain, Z., Hardoon, D., and Shawe-Taylor, J. (2009).
\newblock Matching pursuit kernel fisher discriminant analysis.
\newblock In {\em Artificial Intelligence and Statistics}, pages 121--128.

\bibitem[Friedman et~al., 2009]{hastie_elements_2009}
Friedman, J., Hastie, T., and Tibshirani, R. (2009).
\newblock {\em The Elements of statistical learning}.
\newblock Springer Series in Statistics New York, 2 edition.

\bibitem[Gaynanova, 2018]{gaynanova2018prediction}
Gaynanova, I. (2018).
\newblock Prediction and estimation consistency of sparse multi-class penalized
  optimal scoring.
\newblock {\em arXiv preprint arXiv:1809.04669}.

\bibitem[Gaynanova et~al., 2016]{Gaynanova:2016wk}
Gaynanova, I., Booth, J.~G., and Wells, M.~T. (2016).
\newblock {Simultaneous sparse estimation of canonical vectors in the $p>> N$
  setting}.
\newblock {\em Journal of the American Statistical Association}, 111:696--706.

\bibitem[Gaynanova and Wang, 2017]{gaynanova2017sparse}
Gaynanova, I. and Wang, T. (2017).
\newblock Sparse quadratic classification rules via linear dimension reduction.
\newblock {\em arXiv preprint arXiv:1711.04817}.

\bibitem[Golub et~al., 1979]{golub1979generalized}
Golub, G.~H., Heath, M., and Wahba, G. (1979).
\newblock Generalized cross-validation as a method for choosing a good ridge
  parameter.
\newblock {\em Technometrics}, 21(2):215--223.

\bibitem[Hastie et~al., 1995]{hastie_penalized_1995}
Hastie, T., Buja, A., and Tibshirani, R. (1995).
\newblock Penalized discriminant analysis.
\newblock {\em The Annals of Statistics}, pages 73--102.

\bibitem[Hastie et~al., 2015]{hastie2015statistical}
Hastie, T., Tibshirani, R., and Wainwright, M. (2015).
\newblock {\em Statistical learning with sparsity: the lasso and
  generalizations}.
\newblock CRC press.

\bibitem[Hastie et~al., 1994]{Hastie:1994cx}
Hastie, T., Tibshirani, R.~J., and Buja, A. (1994).
\newblock {Flexible discriminant analysis by optimal scoring}.
\newblock {\em Journal of the American Statistical Association},
  89(428):1255--1270.

\bibitem[Hein and Bousquet, 2004]{hein2004kernels}
Hein, M. and Bousquet, O. (2004).
\newblock Kernels, associated structures and generalizations.
\newblock {\em Max-Planck-Institut fuer biologische Kybernetik, Technical
  Report}.

\bibitem[Karatzoglou et~al., 2004]{karatzoglou2004kernlab}
Karatzoglou, A., Smola, A., Hornik, K., and Zeileis, A. (2004).
\newblock Kernlab-an s4 package for kernel methods in r.
\newblock {\em Journal of Statistical Software}, 11(9):1--20.

\bibitem[Kim et~al., 2006]{kim2006robust}
Kim, S.-J., Magnani, A., and Boyd, S. (2006).
\newblock Robust fisher discriminant analysis.
\newblock In {\em Advances in Neural Information Processing Systems}, pages
  659--666.

\bibitem[Kimeldorf and Wahba, 1970]{kimeldorf_correspondence_1970}
Kimeldorf, G.~S. and Wahba, G. (1970).
\newblock A correspondence between bayesian estimation on stochastic processes
  and smoothing by splines.
\newblock {\em The Annals of Mathematical Statistics}, 41(2):495--502.

\bibitem[Lancewicki, 2017]{lancewicki_regularization_2017}
Lancewicki, T. (2017).
\newblock Regularization of the kernel matrix via covariance matrix shrinkage
  estimation.
\newblock {\em arXiv preprint arXiv:1707.06156}.

\bibitem[Lanckriet et~al., 2002]{lanckriet2002robust}
Lanckriet, G.~R., Ghaoui, L.~E., Bhattacharyya, C., and Jordan, M.~I. (2002).
\newblock A robust minimax approach to classification.
\newblock {\em Journal of Machine Learning Research}, 3(Dec):555--582.

\bibitem[Lapanowski and Gaynanova, 2018]{sparseKOS}
Lapanowski, A.~F. and Gaynanova, I. (2018).
\newblock {\em sparseKOS: An R package for Sparse Kernel Optimal Scoring}.
\newblock Available at https://github.com/aflapan/sparseKOS.

\bibitem[Liaw and Wiener, 2002]{randomForest}
Liaw, A. and Wiener, M. (2002).
\newblock Classification and regression by randomforest.
\newblock {\em R News}, 2(3):18--22.

\bibitem[Lucas et~al., 2013]{lucas2013failure}
Lucas, D., Klein, R., Tannahill, J., Ivanova, D., Brandon, S., Domyancic, D.,
  and Zhang, Y. (2013).
\newblock Failure analysis of parameter-induced simulation crashes in climate
  models.
\newblock {\em Geoscientific Model Development}, 6(4):1157--1171.

\bibitem[Mika et~al., 1999]{mika_fisher_1999}
Mika, S., Ratsch, G., Weston, J., Scholkopf, B., and Mullers, K.-R. (1999).
\newblock Fisher discriminant analysis with kernels.
\newblock In {\em Neural networks for signal processing IX, 1999. Proceedings
  of the 1999 IEEE Signal Processing Society Workshop.}, pages 41--48. IEEE.

\bibitem[Nosedal-Sanchez et~al., 2012]{RKHS_tutorial2012}
Nosedal-Sanchez, A., Storlie, C.~B., Lee, T.~C., and Christensen, R. (2012).
\newblock Reproducing kernel hilbert spaces for penalized regression: A
  tutorial.
\newblock {\em The American Statistician}, 66(1):50--60.

\bibitem[Roth and Steinhage, 2000]{roth2000nonlinear}
Roth, V. and Steinhage, V. (2000).
\newblock Nonlinear discriminant analysis using kernel functions.
\newblock In {\em Advances in Neural Information Processing Systems}, pages
  568--574.

\bibitem[Sch{\"o}lkopf and Smola, 2002]{bernhard_scholkopf_learning_2002}
Sch{\"o}lkopf, B. and Smola, A.~J. (2002).
\newblock {\em Learning with kernels: support vector machines, regularization,
  optimization, and beyond}.
\newblock MIT Press.

\bibitem[Sonnenburg et~al., 2006]{sonnenburg2006large}
Sonnenburg, S., R{\"a}tsch, G., Sch{\"a}fer, C., and Sch{\"o}lkopf, B. (2006).
\newblock Large scale multiple kernel learning.
\newblock {\em Journal of Machine Learning Research}, 7(Jul):1531--1565.

\bibitem[Steinwart and Scovel, 2007]{steinwart_fast_2007}
Steinwart, I. and Scovel, C. (2007).
\newblock Fast rates for support vector machines using gaussian kernels.
\newblock {\em The Annals of Statistics}, pages 575--607.

\bibitem[Sun et~al., 2015]{sun2015learning}
Sun, S., Kolar, M., and Xu, J. (2015).
\newblock Learning structured densities via infinite dimensional exponential
  families.
\newblock In {\em Advances in Neural Information Processing Systems}, pages
  2287--2295.

\bibitem[Venables and Ripley, 2002]{KNN}
Venables, W.~N. and Ripley, B.~D. (2002).
\newblock {\em Modern Applied Statistics with S}.
\newblock Springer, New York, fourth edition.

\bibitem[Xiang and Wahba, 1996]{xiang1996generalized}
Xiang, D. and Wahba, G. (1996).
\newblock A generalized approximate cross validation for smoothing splines with
  non-gaussian data.
\newblock {\em Statistica Sinica}, pages 675--692.

\bibitem[Yeh and Lien, 2009]{yeh2009comparisons}
Yeh, I.-C. and Lien, C.-h. (2009).
\newblock The comparisons of data mining techniques for the predictive accuracy
  of probability of default of credit card clients.
\newblock {\em Expert Systems with Applications}, 36(2):2473--2480.

\bibitem[Yeh et~al., 2009]{yeh2009knowledge}
Yeh, I.-C., Yang, K.-J., and Ting, T.-M. (2009).
\newblock Knowledge discovery on rfm model using bernoulli sequence.
\newblock {\em Expert Systems with Applications}, 36(3):5866--5871.

\bibitem[Zhang et~al., 2016]{zhang2016quantile}
Zhang, C., Liu, Y., and Wu, Y. (2016).
\newblock On quantile regression in reproducing kernel hilbert spaces with data
  sparsity constraint.
\newblock {\em Journal of Machine Learning Research}, 17(40):1--45.

\end{thebibliography}
  
\onecolumn 

\appendix

\section{Derivation of projection formula~\eqref{eq:Projection}}

\begin{proof}
 Since $\widehat f = \sum_{i=1}^{n} \widehat \alpha_{i}[\Phi(x_i)-\overline{\Phi}]$,
\begin{align*}
\left<\Phi(x)-\overline{\Phi}, \widehat f \,\right>_\Hcal&=\left<\Phi(x)-\overline{\Phi}\,,\, \sum_{i=1}^{n}\widehat \alpha_{i}[\Phi(x_i)-\overline{\Phi}]\right>_\Hcal\\
&=\sum_{i=1}^n\widehat \alpha_i \left<\Phi(x)-\overline{\Phi}, \Phi(x_i)-\overline{\Phi}\right>_\Hcal\\
&=\sum_{i=1}^n\widehat \alpha_i\left<\Phi(x), \Phi(x_i)\right>_\Hcal- \sum_{i=1}^n\widehat \alpha_i \left<\Phi(x), \overline{\Phi}\right>_\Hcal- \sum_{i=1}^n\widehat \alpha_i \left<\overline{\Phi}, \Phi(x_i)\right>_\Hcal+ \sum_{i=1}^n\widehat \alpha_i \left<\overline{\Phi}, \overline{\Phi}\right>_\Hcal\\
&=\sum_{i=1}^n\widehat \alpha_ik(x, x_i) - (\one^{\top}\widehat \alpha)\frac1{n}\sum_{i=1}^nk(x, x_i) - \frac1{n}\sum_{i=1}^n\sum_{j=1}^n\widehat \alpha_i k(x_j, x_i) + (\one^{\top}\widehat \alpha) \frac1{n^2}\sum_{i=1}^n\sum_{j=1}^n k(x_i, x_j).
\end{align*}
Let $K(X,x):=\begin{pmatrix}
k(x_1, x)&\cdots&k(x_n, x)
\end{pmatrix}^{\top}$. Then from the above display
\begin{align*}
    \left<\Phi(x)-\overline{\Phi}, \widehat f \,\right>_\Hcal &=K(X,x)^{\top}\widehat \alpha - n^{-1}K(X,x)^{\top}\one\one^{\top}\widehat \alpha - n^{-1}\one^{\top}K\widehat \alpha + \frac1{n^2}\one^{\top}K\one(\one^{\top}\widehat \alpha)\\
    &=K(X,x)^{\top}C\widehat \alpha - \frac1{n}\one^{\top}KC\widehat \alpha\\
    &= (K(X,x)^{\top} - \frac1{n}\one^{\top}K)C\widehat \alpha,
\end{align*}
where $C = I - n^{-1}\one\one^{\top}$ is the centering matrix.
%Let us now consider 
%\begin{align*}
%\left<\overline{\Phi}\,,\,\sum_{i=1}^{n}\alpha_{i}[\Phi(x_i)-\overline{\Phi}]\right>_\Hcal&=
%\left<\overline{\Phi}\,,\, \sum_{i=1}^{n}\alpha_{i} \Phi(x_i)\right>_{\Hcal}-\left<\overline{\Phi}\,,\, \sum_{i=1}^{n}\alpha_{i} \overline{\Phi}\right>_{\Hcal}\\
%&=\frac{1}{n}\sum_{j=1}^{n}\sum_{i=1}^{n} \alpha_{i} k(x_j, x_i)-\left<\overline{\Phi}, \overline{\Phi}\right>_\Hcal \mathbf{1}^{\top}\alpha\\
%&=\sum_{i=1}^{n}\bigg[\frac{1}{n}\sum_{j=1}^{n} k(x_j, x_i)\bigg]\alpha_{i}-
%\frac{1}{n^2}\sum_{i,j=1}^{n}k(x_i, x_j) \mathbf{1}^{\top}\alpha\\
%&=\frac{1}{n}\mathbf{1}^{\top} \mathbf{K}\alpha-(\frac{1}{n}\mathbf{1}^{\top}\mathbf{K}\mathbf{1})\frac{1}{n}\mathbf{1}^{\top}\alpha=\frac{1}{n}\mathbf{1}^{\top}\mathbf{K}\bigg(I-\frac{1}{n}\mathbf{1}\mathbf{1}^{\top}\bigg)\alpha=\frac{1}{n}\mathbf{1}^{\top}\mathbf{K}C\alpha
%\end{align*}
%The projection formula follows
%$$
%\left<\Phi(x)-\overline{\Phi}, \widehat %f\right>_\Hcal=
%K(X,x)^{\top}C\alpha
%-\frac{1}{n}\mathbf{1}^{\top}\mathbf{K}C\alpha=
%\bigg(K(X,x)^{\top}-\frac{1}{n}\mathbf{1}^{\top}\mathbf{K}\bigg)C\alpha.
%$$
\end{proof}

\section{Technical Proofs}\label{sec:ProofOfTheorems}

In this section we prove the results stated within the main text. We use $C$, $C_1$, $C_2$, $\dots$ to denote absolute positive constants that do not depend on the sample size $n$ but which may depend on $\|\theta^*\|_\infty, \kappa,$ or $\tau$. Their values may change from line to line. The dependence between the main Theorems and supplementary results is depicted below.

%\begin{figure}
\begin{center}
\begin{tikzpicture}[node distance=3cm, auto]
 
% place nodes
\node [block] (init) {Theorem \ref{thm:EstErrorProbBound}} ;
\node [block, left of=init](block-100) {Theorem \ref{thm:EmpRiskProbBound}} ;
\node [block, below of=block-100] (block-8) {Theorem \ref{thm:DifferenceRiskSupConverge}} ;
\node [block, left of=block-8] (block-1) {Theorem \ref{thm:SymBound}} ;
\node [block, right of=block-8] (block-2) {Theorem \ref{thm:ModRiskDiffConverge}} ;
\node [block, right of=block-2] (block-3) {Theorem \ref{thm:ModConsistency}} ;
\node [block, below of=block-1] (block-4) {Lemma \ref{lem:MaxBound}} ;
\node [block, left of=block-4] (block-5) {Lemma~\ref{lem:EpsilonNetInSymmetrization}} ;
\node [block, below of=block-2] (block-14) {Lemma \ref{lem:ytheta}} ;
\node [block, below of=block-8](block-9) {Lemma \ref{lem:OptScoreSupNormBound}} ;
\node [block, below of=block-9] (block-10) {Lemma \ref{lem:CoveringNum}} ;
\node [block, below of=block-4] (block-13) {Lemma \ref{lem:EpslnNetRestriction}} ;
\node [block, left of=block-13] (block-7) {Lemma  \ref{lem:ClosenessOfEmpRisk}} ;

\path [line] (block-1) -- (init) ;
\path [line] (block-2) -- (init) ;
\path [line] (block-3) -- (init) ;
\path [line] (block-8) -- (block-100) ;
\path [line] (block-7) -- (block-5) ;
\path [line] (block-10) -- (block-4) ;
\path [line] (block-1) -- (block-100) ;
\path [line] (block-4) -- (block-1) ;
\path [line] (block-14) -- (block-8);
\path [line] (block-9) -- (block-8);
\path [line] (block-5) -- (block-1);
\path [line] (block-8) -- (block-2);
\path [line] (block-14) -- (block-2);
\path [line] (block-13) -- (block-5);
\end{tikzpicture}
%\caption{Proof charts for Theorems \ref{thm:EstErrorProbBound} and \ref{thm:EmpRiskProbBound}. }
\end{center}
%\end{figure}

\subsection{Proofs of Theorems 1 and 2}

%\begin{theorem}\label{thm:EstErrorProbBound}
%Under assumptions \ref{assump:KernBound}, \ref{assump:ScoreBound}, and \ref{asusmp:Separable}, 
%for large enough $n$ there exist $C>0$ and $\widetilde C>0$ such that the following inequality holds with probability at least $1-\eta$
%$$
%R(\widehat f, \widehat \beta) \leq R(f^*, \beta^*) + \bigg\{ \frac{C \log(\widetilde C/\eta)}{n}\bigg\}^{1/2}.
%$$
%\end{theorem}

\begin{proof}[Proof of Theorem \ref{thm:EstErrorProbBound}]
Consider
\begin{align*}
R(\widehat{f},\widehat{\beta})- R(f^\ast,\beta^\ast)=\underbrace{R(\widehat{f}, \widehat{\beta})-\widetilde{R}_{\text{emp}}(\widehat{f}, \widehat{\beta})}_{I_1}+\underbrace{\widetilde{R}_{\text{emp}}(\widehat{f}, \widehat{\beta})-\widetilde{R}_{\text{emp}}(\widetilde{f}, \widetilde{\beta})}_{I_2}
+\underbrace{\widetilde{R}_{\text{emp}}(\widetilde{f}, \widetilde{\beta})- R(f^\ast,\beta^\ast)}_{I_3}.
\end{align*}
By the union bound and de Morgan's law,
\begin{align*}
\mathbb{P}\Big(R(\widehat f, \widehat \beta)-R(f^*,\beta^*)>\varepsilon\Big)\,\leq \,\mathbb{P}\Big( I_1 >\frac{\varepsilon}{3}\Big)+\mathbb{P}\Big( I_2 >\frac{\varepsilon}{3}\Big)+
\mathbb{P}\Big( I_3 >\frac{\varepsilon}{3}\Big).
\end{align*}

Applying Theorems~\ref{thm:SymBound},~\ref{thm:ModRiskDiffConverge} and~\ref{thm:ModConsistency} to $I_1$, $I_2$ and $I_3$ correspondingly, there exist constants $C,C_i >0$ such that
\begin{align*}
\mathbb{P}\Big(R(\widehat{f},\widehat{\beta})&- R(f^\ast,\beta^\ast)>\varep\Big)\\
&\leq 2\mathcal{N}_{\varep}\exp\Big(-  \frac{n\varepsilon^2}{128(\|\theta^*\|_\infty+\kappa \tau)^{4}}\Big)+C_2\exp\Big( -\frac{C_3n \varepsilon^2}{1+(\kappa\tau)^2}\Big)+2\exp\Big(-\frac{n \varepsilon^2}{16(\|\theta^*\|_\infty+\kappa \tau)^4}\Big)\\
&\leq C_4\, \mathcal{N_\varepsilon}\exp\Big(-\frac{C_5 n \varepsilon^2}{(\|\theta^*\|_\infty+\kappa \tau)^4}\Big), 
\end{align*}
where $\mathcal{N}_{\varep}=\{1+2(\|\theta^*\|_\infty+\kappa\tau)/\varep\} \exp( C\tau^2 \varepsilon^{-2})$. This concludes the proof of Theorem~\ref{thm:EstErrorProbBound}.
%sing Theorem~\ref{thm:SymBound}, $I_1 \leq C_1\frac{\log (C_2\eta^{-1})}{n}$ with probability at least $1-\eta/3$. Using Th

%Each of (I)-(III) will be given a probabilistic bound which holds with probability $1-\eta/3$. Theorem \ref{thm:SymBound} gives the bound for (I), Theorem \ref{thm:ModRiskDiffConverge} bounds (II), and Theorem \ref{thm:ModConsistency} bounds (III). Thus, the estimation error is bounded above by
%\[
%\bigg\{\frac{C_1 \log(\widetilde C_1/\eta)}{n}\bigg\}+
%\bigg\{\frac{C_2 \log(\widetilde C_2/\eta)}{n}\bigg\}+
%\bigg\{\frac{C_3 \log(\widetilde C_3/\eta)}{n}\bigg\}
%\]
%with probabiliy at least $1-\eta$. Let $C=3\max\{C_1, C_2, C_3\}$, and let
%$\widetilde{C}=\max\{\widetilde C_1, \widetilde C_2, \widetilde C_3\}.$ Then the estimation error is bounded above by
%\[
%R(\widehat{f},\widehat{\beta})- R(f^\ast,\beta^\ast)\leq 
%\bigg\{\frac{C \log(\widetilde C/\eta)}{n}\bigg\}.
%\]
%This proves the theorem. \qedhere
\end{proof}

%\begin{theorem}\label{thm:EmpRiskProbBound}
%Under assumptions \ref{assump:KernBound}, \ref{assump:ScoreBound}, and \ref{asusmp:Separable}, for large enough $n$ there exist $C>0$ and $\widetilde{C}>0$ such that the following inequality holds with probability at least $1-\eta$
%$$
%R(\widehat f, \widehat \beta) \leq R_{\text{emp}}(\widehat f) + \bigg\{\frac{C \log(\widetilde C/\eta)}{n} \bigg\}^{1/2}.
%$$
%\end{theorem}

\begin{proof}[Proof of Theorem \ref{thm:EmpRiskProbBound}]
Consider
\begin{align*}
    R(\widehat f, \widehat \beta)-R_{\text{emp}}(\widehat f)&=
      \underbrace{R(\widehat f, \widehat \beta)-\widetilde R_{\text{emp}}(\widehat f, \widehat \beta)}_{I_1}+\underbrace{\widetilde R_{\text{emp}}(\widehat f, \widehat \beta)-R_{\text{emp}}(\widehat f)}_{I_2}.
\end{align*}
By the union bound and de Morgan's law,
$$
\mathbb{P}\Big(  R(\widehat f, \widehat \beta)-R_{\text{emp}}(\widehat f)>\varep\Big)\leq \mathbb{P}\Big(I_{1}>\frac{\varep}{2} \Big)+\mathbb{P}\Big( I_2 >\frac{\varep}{2}\Big).
$$
%By Theorem~\ref{thm:SymBound}, there exist constants $C>0$ and $c>0$ such that $\mathbb{P}(I_{1}>\varep/2 )\leq 2 \mathcal{N}_{\varep}\exp(-c n \varepsilon^2)$, where $\mathcal{N}_{\varep}=\{1+2(\|\theta^*\|_\infty+\kappa\tau)/\varep\} \exp( C\tau^2 \varepsilon^{-2})$. By Theorem~\ref{thm:DifferenceRiskSupConverge}, there exists constants $\widetilde{C}>0$ and $\widetilde{c}>0$ such that $\mathbb{P}(I_2>\varep/2) \leq \widetilde{C}\exp(- \widetilde{c} n \varepsilon^2)$ for all $\varep>0.$
Applying Theorem~\ref{thm:SymBound} for $I_1$ and Theorem~\ref{thm:DifferenceRiskSupConverge} for $I_2$, the exist constants $C_i > 0$ such that
\begin{align*}
\mathbb{P}\Big(  R(\widehat f, \widehat \beta)-R_{\text{emp}}(\widehat f)>\varep\Big)&\leq 2\mathcal{N}_{\varep}\exp\Big(-  \frac{n\varepsilon^2}{128(\|\theta^*\|_\infty+\kappa \tau)^{4}}\Big)+ C_3 \exp\Big( -\frac{C_4n \varepsilon^2}{1+(\kappa\tau)^2}\Big)\\
&\leq C_5 \mathcal{N}_{\varep}\exp\Big(-\frac{C_6 n \varepsilon^2}{(\|\theta^*\|_\infty+\kappa \tau)^{4}}\Big),
\end{align*}
where $\mathcal{N}_{\varep}=\{1+2(\|\theta^*\|_\infty+\kappa\tau)/\varep\} \exp(C_1\tau^2 \varepsilon^{-2})$.
This concludes the proof of Theorem~\ref{thm:EmpRiskProbBound}. 
\end{proof}

\subsection{Supplementary Theorems}

\begin{theorem}\label{thm:SymBound}
Under Assumptions \ref{assump:ScoreBound}-\ref{assump:Separable}, there exists a constant $C_2 > 0$ such that for all $\varep>0$, 
$$
\mathbb{P}\Big(\sup_{f\in \mathcal{H}_\tau\,,\, \beta\in I_\tau}\{R(f,\beta)-\widetilde R_{\text{emp}}(f, \beta)\}>\varep\Big)
\leq  2\mathcal{N}_{\varep}\exp\Big(-  \frac{n\varepsilon^2}{128(\|\theta^*\|_\infty+\kappa \tau)^{4}}\Big),
$$
where $\mathcal{N}_{\varep}=\{1+2(\|\theta^*\|_\infty+\kappa\tau)/\varep\} 
\exp( C_2\tau^2 \varepsilon^{-2})$.
\end{theorem}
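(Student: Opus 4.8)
The plan is to prove a uniform deviation bound via symmetrization and a covering-number argument, following the standard empirical-process recipe but carefully tracking the dependence on $\tau$ and $\|\theta^*\|_\infty$. First I would note that the loss class consists of functions $g_{f,\beta}(x,y) = |y^\top\theta^* - \beta - \langle\Phi(x), f\rangle|^2$ indexed by $(f,\beta) \in \mathcal{H}_\tau \times I_\tau$, and that Assumptions~\ref{assump:ScoreBound} and~\ref{assump:KernBound} force each $g_{f,\beta}$ to be bounded: $|y^\top\theta^* - \beta - \langle\Phi(x),f\rangle| \leq \|\theta^*\|_\infty + |\beta| + \kappa\tau \leq 2(\|\theta^*\|_\infty + \kappa\tau) =: M$, so $0 \leq g_{f,\beta} \leq M^2$. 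Since $\widetilde R_{\text{emp}}$ is an average of i.i.d.\ terms with mean $R$, the supremum $\sup_{f,\beta}\{R(f,\beta) - \widetilde R_{\text{emp}}(f,\beta)\}$ is exactly a one-sided uniform deviation of an empirical process over a bounded function class.

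Next I would apply the symmetrization inequality: for $n\varepsilon^2 \geq$ some constant (which can be absorbed since for small $n$ the bound is vacuous),
$$
\mathbb{P}\Big(\sup_{f,\beta}\{R - \widetilde R_{\text{emp}}\} > \varepsilon\Big) \leq 2\,\mathbb{P}\Big(\sup_{f,\beta}\Big|\frac{1}{n}\sum_{i=1}^n \sigma_i g_{f,\beta}(x_i,y_i)\Big| > \frac{\varepsilon}{2}\Big),
$$
where $\sigma_i$ are Rademacher. The Rademacher-symmetrized process is then controlled by a covering argument: conditionally on the data, for a fixed configuration the map $(f,\beta)\mapsto \big(g_{f,\beta}(x_i,y_i)\big)_{i=1}^n \in \R^n$ is Lipschitz in the empirical $\ell_2$ metric, so a cover of $\mathcal{H}_\tau \times I_\tau$ yields a cover of the loss class. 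For the intercept, $I_\tau$ is an interval of length $2(\|\theta^*\|_\infty + \kappa\tau)$, giving $1 + 2(\|\theta^*\|_\infty+\kappa\tau)/\delta$ points at scale $\delta$; for $\mathcal{H}_\tau$, I would invoke the covering-number bound (this is presumably Lemma~\ref{lem:CoveringNum} in the dependency graph) for a ball of radius $\tau$ in the RKHS under Assumption~\ref{assump:KernBound}, which contributes the $\exp(C_2\tau^2\varepsilon^{-2})$ factor in $\mathcal{N}_\varepsilon$. The squaring $g = |\cdot|^2$ contributes a Lipschitz constant of order $M = 2(\|\theta^*\|_\infty+\kappa\tau)$, so choosing the net scale $\delta \asymp \varepsilon/M$ propagates the $\tau$-dependence correctly. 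Over each net point, a Hoeffding bound on the bounded Rademacher sum gives $2\exp(-n\varepsilon^2/(C M^4))$ — the $M^4 = 16(\|\theta^*\|_\infty+\kappa\tau)^4$ arising because the terms are bounded by $M^2$ and Hoeffding's exponent scales with the square of the range. A union bound over the net of cardinality $\mathcal{N}_\varepsilon$ then yields the stated inequality with the constant $128$ absorbing the numerical factors from symmetrization, the union over the $\pm$ absolute value, and the net-discretization slack.

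The main obstacle I anticipate is twofold. First, the chaining/discretization must be done so that the Lipschitz constant of the squared loss is handled without losing powers of $\tau$ — one must use that $|a^2 - b^2| = |a-b||a+b| \leq 2M|a-b|$ for $a,b$ in the range $[-M,M]$, and then ensure the net scale is taken as $\varepsilon/(cM)$ rather than $\varepsilon$, which is exactly why $M$ appears in both $\mathcal{N}_\varepsilon$ (through the intercept term $2(\|\theta^*\|_\infty+\kappa\tau)/\varepsilon$) and the exponent. Second, verifying that the RKHS covering number is finite and has the claimed $\exp(C\tau^2/\delta^2)$ form requires separability (Assumption~\ref{assump:Separable}) and the uniform bound $\|\Phi(x)\|_\mathcal{H}\leq\kappa$ — this is where the deferred lemma does the real work, and I would simply cite it. Everything else is bookkeeping: tracking constants through symmetrization ($\times 2$), the absolute value ($\times 2$), Hoeffding, and the union bound, then collapsing them into the single constant $128$ and the single constant $C_2$ inside $\mathcal{N}_\varepsilon$.
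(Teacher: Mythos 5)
Your proposal is correct and follows essentially the same route as the paper: symmetrization, an $\varepsilon$-net over $\mathcal{H}_\tau\times I_\tau$ at scale $\varepsilon/(cM)$ whose cardinality is the product of the interval cover of $I_\tau$ and the RKHS covering bound (Lemma~\ref{lem:CoveringNum}), Lipschitz transfer of the squared loss via $|a^2-b^2|\le 2M|a-b|$ (the content of Lemma~\ref{lem:ClosenessOfEmpRisk}), then Hoeffding plus a union bound. The only cosmetic difference is that you symmetrize directly with Rademacher signs on the single sample rather than via the explicit ghost sample the paper uses (which is why the paper needs Lemma~\ref{lem:EpslnNetRestriction} to restrict the $2n$-point net to $n$ points); this changes only numerical constants such as the $128$.
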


%The term $(II)$ is bounded by
\begin{theorem}\label{thm:ModRiskDiffConverge}
 Let $\widehat{\beta}=-\left<\,\overline{\Phi}, \widehat{f}\,\right>_\Hcal$. Under Assumptions \ref{assump:ScoreBound} and \ref{assump:KernBound}, there exist constants $C_1, C_2>0$ such that for all $\varep>0,$
 \[
\mathbb{P}\Big(\Big| \widetilde{R}_{\text{emp}}(\widehat{f}, \widehat{\beta})-\widetilde{R}_{\text{emp}}(\widetilde{f}, \widetilde{\beta})\Big|>\varep\Big) \leq C_1\exp\Big( -\frac{C_2n \varepsilon^2}{1+(\kappa\tau)^2}\Big).
 \]
\end{theorem}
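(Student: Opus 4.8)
The plan is to exploit that $(\widehat f,\widehat\beta)$ and $(\widetilde f,\widetilde\beta)$ minimize, over the \emph{same} admissible set $\mathcal{H}_\tau\times I_\tau$, two risk functionals that differ only in using the sample scores $\widehat\theta$ versus the population scores $\theta^*$. First I would note that for fixed $f$ the map $\beta\mapsto R_{\text{emp}}(f,\beta)$ is minimized at $-\langle\overline\Phi,f\rangle$ (because $\one^\top Y\widehat\theta=0$), so $R_{\text{emp}}(f)=R_{\text{emp}}\bigl(f,-\langle\overline\Phi,f\rangle\bigr)$; combined with \eqref{eq:constraint} and the fact that $\widehat\beta=-\langle\overline\Phi,\widehat f\rangle\in I_\tau$ (the Remark preceding \eqref{eq:tilde}), this shows $(\widehat f,\widehat\beta)$ minimizes $R_{\text{emp}}(f,\beta)$ over $f\in\mathcal{H}_\tau,\ \beta\in I_\tau$. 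Since $(\widetilde f,\widetilde\beta)$ minimizes $\widetilde R_{\text{emp}}(f,\beta)$ over the same set by \eqref{eq:tilde}, the elementary comparison-of-minimizers argument gives
\[
0\ \le\ \widetilde R_{\text{emp}}(\widehat f,\widehat\beta)-\widetilde R_{\text{emp}}(\widetilde f,\widetilde\beta)\ \le\ 2\Delta,\qquad \Delta:=\sup_{f\in\mathcal{H}_\tau,\ \beta\in I_\tau}\bigl|R_{\text{emp}}(f,\beta)-\widetilde R_{\text{emp}}(f,\beta)\bigr|,
\]
so it suffices to bound $\mathbb{P}(\Delta>\varepsilon/2)$.

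Second, I would expand the difference of squares pointwise: with $c_i:=\beta+\langle\Phi(x_i),f\rangle$ and $u^2-v^2=(u-v)(u+v)$,
\[
R_{\text{emp}}(f,\beta)-\widetilde R_{\text{emp}}(f,\beta)=\frac1n\sum_{i=1}^n\bigl(y_i^\top(\widehat\theta-\theta^*)\bigr)\bigl(y_i^\top(\widehat\theta+\theta^*)-2c_i\bigr).
\]
Since each $y_i$ is a class indicator, $|y_i^\top v|\le\|v\|_\infty$; and on $\mathcal{H}_\tau\times I_\tau$ one has $|\langle\Phi(x_i),f\rangle|\le\kappa\tau$ by Cauchy--Schwarz and Assumption \ref{assump:KernBound} and $|\beta|\le\|\theta^*\|_\infty+\kappa\tau$, hence $|c_i|\le\|\theta^*\|_\infty+2\kappa\tau$. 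This yields the uniform bound
\[
\Delta\ \le\ \|\widehat\theta-\theta^*\|_\infty\bigl(\|\widehat\theta\|_\infty+3\|\theta^*\|_\infty+4\kappa\tau\bigr).
\]

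Third, I would control the $\widehat\theta$-dependent factors. Since $\widehat\theta=(\sqrt{n_2/n_1},\,-\sqrt{n_1/n_2})^\top$ and $\theta^*=(\sqrt{\pi_2/\pi_1},\,-\sqrt{\pi_1/\pi_2})^\top$ with $\pi_1\asymp\pi_2$ (Assumption \ref{assump:ScoreBound}), on the event that $n_1/n$ lies in a fixed compact subinterval of $(0,1)$ around $\pi_1$ the map $p\mapsto\sqrt{(1-p)/p}$ is Lipschitz there, so $\|\widehat\theta-\theta^*\|_\infty\le L\,|n_1/n-\pi_1|$ and $\|\widehat\theta\|_\infty\le C$. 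Because $n_1=\sum_i y_{i1}$ is a binomial, Hoeffding's inequality controls both $\mathbb{P}(|n_1/n-\pi_1|>t)$ and the probability of the complementary event by exponentially small terms; this is precisely the content of Lemma \ref{lem:ytheta}. Combining, on that event $\Delta\le C'(1+\kappa\tau)\,|n_1/n-\pi_1|$, so
\[
\mathbb{P}(\Delta>\varepsilon/2)\ \le\ C_1\exp\!\Bigl(-\tfrac{C_2 n\varepsilon^2}{(1+\kappa\tau)^2}\Bigr)\ \le\ C_1\exp\!\Bigl(-\tfrac{C_2 n\varepsilon^2}{1+(\kappa\tau)^2}\Bigr),
\]
using $(1+\kappa\tau)^2\le2\bigl(1+(\kappa\tau)^2\bigr)$ and relabeling constants, which is the claimed bound.

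The routine parts are the difference-of-squares identity and the uniform bounds over $\mathcal{H}_\tau\times I_\tau$. The step requiring care is that $\|\widehat\theta\|_\infty$ is itself random and multiplies $\|\widehat\theta-\theta^*\|_\infty$, so it must be absorbed on a high-probability event (on which one also needs the Lipschitz constant of $p\mapsto\sqrt{(1-p)/p}$ to be finite) rather than bounded deterministically; once the Bernoulli concentration of $n_1/n$ from Lemma \ref{lem:ytheta} is available this becomes straightforward.
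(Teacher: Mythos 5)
Your proof is correct but follows a genuinely different route from the paper's. The paper does not compare minimizers of the two risk functionals directly; instead it chains through the inequalities $\widetilde{R}_{\text{emp}}(\widehat f,\widehat\beta)\geq\widetilde{R}_{\text{emp}}(\widetilde f,\widetilde\beta)$ and $R_{\text{emp}}(\widehat f)\leq R_{\text{emp}}(\widetilde f)$ to reduce the difference to two pieces: an exact computation showing $|\widetilde{R}_{\text{emp}}(\widehat f,\widehat\beta)-\widetilde{R}_{\text{emp}}(\widehat f,\beta(\widehat f))|=(\overline{Y\theta^*})^2$ (controlled by Lemma~\ref{lem:ytheta}), and the uniform deviation $\sup_{f\in\Hcal_\tau}|R_{\text{emp}}(f)-\widetilde{R}_{\text{emp}}(f,\beta(f))|$, which is delegated to Theorem~\ref{thm:DifferenceRiskSupConverge}. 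You instead observe that $(\widehat f,\widehat\beta)$ and $(\widetilde f,\widetilde\beta)$ minimize $R_{\text{emp}}(f,\beta)$ and $\widetilde R_{\text{emp}}(f,\beta)$ over the \emph{same} set $\Hcal_\tau\times I_\tau$ (your justification that the unconstrained minimizer $-\langle\overline\Phi,f\rangle$ lies in $I_\tau$ is the needed step and is sound), apply the standard two-sided oracle bound to reduce everything to a single supremum $\Delta=\sup_{f,\beta}|R_{\text{emp}}(f,\beta)-\widetilde R_{\text{emp}}(f,\beta)|$, and bound $\Delta$ by a difference-of-squares expansion plus concentration of $\widehat\theta$ about $\theta^*$. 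This is arguably cleaner and self-contained: it bypasses Theorem~\ref{thm:DifferenceRiskSupConverge} entirely, whereas the paper's decomposition reuses machinery it needs anyway for Theorem~\ref{thm:EmpRiskProbBound}. Both approaches ultimately rest on the same $n^{-1/2}$ concentration of $n_k/n$ about $\pi_k$. Two small corrections: the binomial/square-root concentration you invoke is the content of Lemma~\ref{lem:OptScoreSupNormBound}, not Lemma~\ref{lem:ytheta} (the latter concerns $(\overline{Y\theta^*})^2$); and your final absorption of the event where $\|\widehat\theta\|_\infty$ is unbounded into the constants carries the same implicit restriction to bounded $\varepsilon$ that the paper's own constant-merging steps do, so it is not a defect relative to the paper.
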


%The term (III) is bounded by
\begin{theorem}\label{thm:ModConsistency}
Under Assumptions \ref{assump:ScoreBound} and \ref{assump:KernBound}, for all $\varep>0$
$$
\mathbb{P}\Big(\widetilde R_{\text{emp}}(\widetilde f, \widetilde \beta)-R(f^\ast, \beta^\ast) >\varep \Big)
\leq 2\exp\Big( -\frac{n\varepsilon^2}{16(\|\theta^*\|_\infty+\kappa\tau)^{4}}\Big).
$$
\end{theorem}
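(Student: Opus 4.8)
The plan is to reduce this to a one-sided concentration statement at the single \emph{deterministic} pair $(f^\ast,\beta^\ast)$. The crucial observation is that $(\widetilde f,\widetilde\beta)$ minimizes $\widetilde R_{\text{emp}}$ over $\mathcal{H}_\tau\times I_\tau$ while $(f^\ast,\beta^\ast)\in\mathcal{H}_\tau\times I_\tau$, so $\widetilde R_{\text{emp}}(\widetilde f,\widetilde\beta)\le \widetilde R_{\text{emp}}(f^\ast,\beta^\ast)$. Consequently the event $\{\widetilde R_{\text{emp}}(\widetilde f,\widetilde\beta)-R(f^\ast,\beta^\ast)>\varep\}$ is contained in $\{\widetilde R_{\text{emp}}(f^\ast,\beta^\ast)-R(f^\ast,\beta^\ast)>\varep\}$, and it suffices to bound the probability of the latter.

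Next I would exploit that $(f^\ast,\beta^\ast)$ minimizes the deterministic functional $R(\cdot,\cdot)$, hence is a fixed, non-random element of $\mathcal{H}_\tau\times I_\tau$. Therefore $\widetilde R_{\text{emp}}(f^\ast,\beta^\ast)=\tfrac1n\sum_{i=1}^n Z_i$ is an average of the i.i.d.\ variables $Z_i:=|y_i^\top\theta^\ast-\beta^\ast-\langle\Phi(x_i),f^\ast\rangle|^2$, each with mean $R(f^\ast,\beta^\ast)$. I would then bound $Z_i$ deterministically: since $y_i$ is a class indicator, $|y_i^\top\theta^\ast|\le\|\theta^\ast\|_\infty$; since $\beta^\ast\in I_\tau$, $|\beta^\ast|\le\|\theta^\ast\|_\infty+\kappa\tau$; and by Cauchy--Schwarz with Assumption~\ref{assump:KernBound}, $|\langle\Phi(x_i),f^\ast\rangle|\le\|\Phi(x_i)\|_\Hcal\,\|f^\ast\|_\Hcal\le\kappa\tau$. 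Assumption~\ref{assump:ScoreBound} makes this finite, and altogether $0\le Z_i\le 4(\|\theta^\ast\|_\infty+\kappa\tau)^2$. A one-sided Hoeffding inequality for this bounded i.i.d.\ sum then yields $\mathbb P(\widetilde R_{\text{emp}}(f^\ast,\beta^\ast)-R(f^\ast,\beta^\ast)>\varep)\le \exp(-n\varep^2/(8(\|\theta^\ast\|_\infty+\kappa\tau)^4))$, which is stronger than, and hence implies, the claimed $2\exp(-n\varep^2/(16(\|\theta^\ast\|_\infty+\kappa\tau)^4))$; the factor $2$ and the constant $16$ are harmless slack (and also accommodate the two-sided version $|\widetilde R_{\text{emp}}(f^\ast,\beta^\ast)-R(f^\ast,\beta^\ast)|>\varep$ if one prefers to state it that way).

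This is the most elementary of the supplementary results, so I do not expect a genuine obstacle. The one point that must be handled with care is the standard but essential fact that $(f^\ast,\beta^\ast)$ does not depend on the sample: this is exactly what turns $\widetilde R_{\text{emp}}(f^\ast,\beta^\ast)$ into a true i.i.d.\ average and legitimizes the direct application of Hoeffding. By contrast the analogous statement for $\widehat f$ fails, which is precisely why the uniform-deviation bound of Theorem~\ref{thm:SymBound} and the stability bound of Theorem~\ref{thm:ModRiskDiffConverge} are needed elsewhere. The only quantitative content here is the boundedness estimate $Z_i\le 4(\|\theta^\ast\|_\infty+\kappa\tau)^2$, into which Assumptions~\ref{assump:ScoreBound} and~\ref{assump:KernBound} enter.
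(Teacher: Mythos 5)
Your proposal is correct and matches the paper's own argument essentially step for step: both exploit $\widetilde R_{\text{emp}}(\widetilde f,\widetilde\beta)\le \widetilde R_{\text{emp}}(f^\ast,\beta^\ast)$ to reduce to concentration of the i.i.d.\ average $\widetilde R_{\text{emp}}(f^\ast,\beta^\ast)$ around $R(f^\ast,\beta^\ast)$, bound each summand by $4(\|\theta^\ast\|_\infty+\kappa\tau)^2$, and apply Hoeffding. The only (immaterial) difference is that you invoke the one-sided Hoeffding bound while the paper uses the two-sided version with a slightly looser exponent; both imply the stated inequality.
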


\begin{theorem}\label{thm:DifferenceRiskSupConverge}
Let Assumptions \ref{assump:ScoreBound} and \ref{assump:KernBound} be true, and let $\beta(f):=n^{-1}\sum_{i=1}^ny_i^{\top}\theta^*-\left<\overline{\Phi},f\right>_\Hcal= \overline{Y\theta^*} - \left<\overline{\Phi},f\right>_\Hcal$ be the minimizing $\beta\in I_\tau$ for fixed $f\in \mathcal{H}_{\tau}$ in the modified empirical risk. There exists constants $C_1,C_2>0$ such that for all $\varep>0$
\[
\mathbb{P}\Big(\sup_{f\in \mathcal{H}_\tau} |R_{\text{emp}}(f)-\widetilde{R}_{\text{emp}}(f, \beta(f))|>\varepsilon \Big)\leq C_1 \exp\Big( -\frac{C_2n \varepsilon^2}{1+(\kappa\tau)^2}\Big).
\]
\end{theorem}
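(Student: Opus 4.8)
The plan is to reduce the uniform statement over $f\in\Hcal_\tau$ to a single scalar concentration inequality for the empirical class proportion $\widehat\pi_1:=n_1/n$ around $\pi_1$.

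First I would line up the two risks. Substituting $\beta(f)=\overline{Y\theta^*}-\langle\overline\Phi,f\rangle_\Hcal$ into the definition of $\widetilde R_{\text{emp}}$ and writing $\phi_i(f):=\langle\Phi(x_i)-\overline\Phi,f\rangle_\Hcal$, both risks take the centered form
\[
R_{\text{emp}}(f)=\frac1n\sum_{i=1}^n\bigl(y_i^\top\widehat\theta-\phi_i(f)\bigr)^2,\qquad \widetilde R_{\text{emp}}(f,\beta(f))=\frac1n\sum_{i=1}^n\bigl(y_i^\top\theta^*-\overline{Y\theta^*}-\phi_i(f)\bigr)^2.
\]
Setting $\delta_i:=y_i^\top\widehat\theta-y_i^\top\theta^*+\overline{Y\theta^*}$ and expanding each squared difference via $a^2-b^2=(a-b)(a+b)$ gives
\[
R_{\text{emp}}(f)-\widetilde R_{\text{emp}}(f,\beta(f))=\frac1n\sum_{i=1}^n\Bigl[(y_i^\top\widehat\theta)^2-(y_i^\top\theta^*-\overline{Y\theta^*})^2\Bigr]-\frac2n\sum_{i=1}^n\delta_i\,\phi_i(f),
\]
in which only the last sum depends on $f$. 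By Cauchy--Schwarz and Assumption~\ref{assump:KernBound} (so that $\|\Phi(x_i)-\overline\Phi\|_\Hcal\le 2\kappa$), $\sup_{f\in\Hcal_\tau}\bigl|\tfrac2n\sum_i\delta_i\phi_i(f)\bigr|\le 4\kappa\tau\cdot\tfrac1n\sum_i|\delta_i|$, which removes the supremum.

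Next I would argue that everything left is governed by $|\widehat\pi_1-\pi_1|$. Here $\delta_i$ takes only the two values $\delta^{(k)}=\widehat\theta_k-\theta^*_k+\overline{Y\theta^*}$, $k=1,2$, and the spurious intercept $\overline{Y\theta^*}=\widehat\pi_1\theta^*_1+\widehat\pi_2\theta^*_2$ has population value $\pi_1\theta^*_1+\pi_2\theta^*_2=0$. On the event $A:=\{|\widehat\pi_1-\pi_1|\le\pi_{\min}/2\}$ the map $\widehat\pi_1\mapsto\widehat\theta$ is Lipschitz --- its only singularities, at $\widehat\pi_1\in\{0,1\}$, are excluded on $A$ because $\pi_{\min}>0$ (Assumption~\ref{assump:ScoreBound}) --- so $\max_k|\delta^{(k)}|\le C|\widehat\pi_1-\pi_1|$; combining this with the scoring constraint $\tfrac1n\sum_i(y_i^\top\widehat\theta)^2=1$ and the same Lipschitz reasoning applied to $\tfrac1n\sum_i(y_i^\top\theta^*)^2$ one also gets $\bigl|\tfrac1n\sum_i[(y_i^\top\widehat\theta)^2-(y_i^\top\theta^*-\overline{Y\theta^*})^2]\bigr|\le C|\widehat\pi_1-\pi_1|$. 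Together with the previous step, on $A$,
\[
\sup_{f\in\Hcal_\tau}\bigl|R_{\text{emp}}(f)-\widetilde R_{\text{emp}}(f,\beta(f))\bigr|\le C(1+\kappa\tau)\,|\widehat\pi_1-\pi_1|.
\]

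Finally I would conclude with Hoeffding's inequality $\mathbb{P}(|\widehat\pi_1-\pi_1|>t)\le 2e^{-2nt^2}$, used with $t=\pi_{\min}/2$ to bound $\mathbb{P}(A^c)$ and with $t=\varep/[C(1+\kappa\tau)]$ to bound the probability that the right-hand side above exceeds $\varep$; for $\varep$ large relative to $1+\kappa\tau$ the target event is empty because $\sup_{f\in\Hcal_\tau}|R_{\text{emp}}(f)-\widetilde R_{\text{emp}}(f,\beta(f))|\le C(1+\kappa\tau)$ holds for every realization (here one uses $\tfrac1n\sum_i|y_i^\top\widehat\theta|=2\sqrt{n_1n_2}/n\le1$, $\tfrac1n\sum_i(y_i^\top\widehat\theta)^2=1$, and that the difference vanishes identically when one class is empty). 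Adding the estimates, and replacing $(1+\kappa\tau)^2$ by $1+(\kappa\tau)^2$ at the cost of a universal factor, yields the claimed bound with $C_1,C_2$ depending only on $\pi_{\min}$ and $\|\theta^*\|_\infty$. I expect the main obstacle to be this middle step: showing that the empirical-minus-population score vector $\delta$ is genuinely of order $|\widehat\pi_1-\pi_1|$ requires both the Lipschitz estimate for $\widehat\pi_1\mapsto\widehat\theta$ at the non-degenerate point $(\pi_1,\pi_2)$ --- legitimate only because $\pi_{\min}>0$ --- and the exact cancellation $\pi_1\theta^*_1+\pi_2\theta^*_2=0$ that keeps $\overline{Y\theta^*}$ small; the remaining bookkeeping near degenerate class counts is harmless since there the two risks coincide.
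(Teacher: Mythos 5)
Your proof is correct and follows essentially the same route as the paper's: both expand the difference of squares, isolate the single $f$-dependent cross term and bound it by $4\kappa\tau$ times quantities controlled by $\widehat\theta-\theta^*$ and $\overline{Y\theta^*}$, and reduce everything to concentration of the class proportion $n_1/n$ (the paper routes this through a three-way union bound with Lemmas~\ref{lem:OptScoreSupNormBound} and~\ref{lem:ytheta}, while you bundle the terms into $\delta_i$ and apply a single Hoeffding bound on the event $\{|n_1/n-\pi_1|\le \pi_{\min}/2\}$). Your explicit localization event and large-$\varepsilon$ truncation make the Lipschitz step somewhat more careful than the Taylor-expansion argument in the paper's Lemma~\ref{lem:OptScoreSupNormBound}, but the underlying ideas are identical.
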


\begin{definition} 
The \emph{empirical measure} $T_x$ with respect to  $\{x_i\}_{i=1}^{n}$ is defined as $T_{x}:=n^{-1}\sum_{i=1}^{n} \delta(x_i),$ where $\delta(x_i)$ is the point mass at $x_i$. The space $L^{2}(T_{x})$ is the set $\mathcal{H}_{\tau}$ equipped with the semi-norm
\[
\|f \|_{L^2(T_{x})}:=\sqrt{\frac{1}{n}\sum_{i=1}^{n} |f(x_i)|^{2}}=\sqrt{\frac{1}{n}\sum_{i=1}^{n}|\left<\Phi(x_i),f\right>_\Hcal|^{2}}.
\]
\end{definition}

\begin{definition}
Let $(X,d)$ be a pseudometric space. An $\varepsilon$-net is any subset $\widetilde{X}\subset X$ such that for any $x\in X$, there exists a $\widetilde{x}\in \widetilde{X}$ satisfying $d(x,\widetilde{x})<\varepsilon$. The $\varepsilon$-\emph{covering number} of $(X,d)$ is the minimum size of an $\varepsilon$-net for $X$.
\end{definition}

\begin{remark}
Distances in $\mathcal{H}_\tau$ are given by the semi-norm generated by $L^2(T_x)$. Distances in $I_\tau$ are given by the Euclidean distance $d(\beta_1, \beta_2)=|\beta_1-\beta_2|.$
\end{remark}

\subsection{Proofs of Supplementary Theorems}

\begin{proof}[Proof of Theorem \ref{thm:SymBound}]
Let $\{(x_j, y_j)\}_{j=n+1}^{2n}$ be independent from $\{(x_i, y_i)\}_{i=1}^{n}$ and identically distributed set of $n$ pairs, and let $T_x$ be the empirical measure on $\{(x_i, y_i)\}_{i=1}^{2n}$. Let $\widetilde{R}_{\text{emp}}(f,\beta)$ be the modified empirical risk on $\{(x_i, y_i)\}_{i=1}^{n}$, and $\widetilde{R}'_{\text{emp}}(f,\beta)$ on $\{(x_j, y_j)\}_{i=n+1}^{2n}$. By symmetrization lemma (see, for example, Lemma 2 in \cite{bousquet2004introduction}), for $n\varepsilon^2\geq 2$
\begin{align*}
\mathbb{P}\bigg( \sup_{f\in \mathcal{H}_\tau\,,\, \beta\in I_\tau}  \{R(f,\beta)-\widetilde{R}_\text{emp}(f,\beta)\}>\varepsilon\bigg)\leq 2\mathbb{P}\bigg( \sup_{f\in \mathcal{H}_\tau,\, \beta\in I_\tau}  \{\widetilde{R}_{\text{emp}}'(f,\beta)-\widetilde{R}_\text{emp}(f,\beta)\}>\frac{\varepsilon}{2}\bigg).
\end{align*}
Let $c = 64(\|\theta^*\|_{\infty} + \kappa \tau)$, and let $\{f_1, \dots, f_M\}$ be the smallest $L^2(T_x)$ $\varepsilon/\sqrt{2}c$-net of $\mathcal{H}_\tau$ and $\{\beta_1, \dots, \beta_K\}$ an $\varepsilon/c$-net of $I_\tau$. Applying Lemma~\ref{lem:EpsilonNetInSymmetrization} to the above display
\begin{align*}
\mathbb{P}\bigg( \sup_{f\in \mathcal{H}_\tau\,,\, \beta\in I_\tau}  \{R(f,\beta)-\widetilde{R}_\text{emp}(f,\beta)\}>\varepsilon\bigg)&\leq 2\mathbb{P}\bigg(\maximize_{\substack{f\in \{f_1, \dots, f_M\}\\
\beta\in \{\beta_1, \dots, \beta_K\}}}\{\widetilde{R}_{\text{emp}}'(f,\beta)-\widetilde{R}_\text{emp}(f,\beta)\}>\frac{\varepsilon}{4}\bigg).
\end{align*}
Applying Lemma~\ref{lem:MaxBound} to the right-hand expression gives the final inequality
\begin{align*}
&\mathbb{P}\bigg( \sup_{f\in \mathcal{H}_\tau\,,\, \beta\in I_\tau}  \{R(f,\beta)-\widetilde{R}_\text{emp}(f,\beta)\}>\varepsilon\bigg)\\
&\qquad\qquad\leq 2\{1+2(\|\theta^*\|_\infty+\kappa\tau)/\varep\} 
\exp\Big( \frac{C_1\tau^2}{\varepsilon^2}\Big) \exp\Big(-  \frac{n\varepsilon^2}{128(\|\theta^*\|_\infty+\kappa \tau)^{4}}\Big).
\end{align*}
This completes the proof of Theorem~\ref{thm:SymBound}.
\end{proof}

\begin{proof}[Proof of Theorem \ref{thm:ModRiskDiffConverge}]
Let $\beta(f) = \overline{Y\theta^*}-\left<\overline{\Phi},f\right>_\Hcal$. By definition of $\widetilde f$, $\widetilde \beta = \beta(\widetilde f)$, $\widetilde{R}_{\text{emp}}(\widehat{f}, \widehat{\beta}) \geq \widetilde{R}_{\text{emp}}(\widetilde {f}, \widetilde{\beta})$. On the other hand, since $R_{\text{emp}}(\widehat f)\leq R_{\text{emp}}(\widetilde f)$,
\begin{align*}
 \widetilde{R}_{\text{emp}}(\widehat{f}, \widehat{\beta})-\widetilde{R}_{\text{emp}}(\widetilde{f}, \widetilde{\beta})&= \widetilde{R}_{\text{emp}}(\widehat{f}, \widehat{\beta})-R_{\text{emp}}(\widehat{f})+R_{\text{emp}}(\widehat{f})-R_{\text{emp}}(\widetilde{f})+
 R_{\text{emp}}(\widetilde{f})-\widetilde{R}_{\text{emp}}(\widetilde{f}, \widetilde{\beta})\\
 &\leq \widetilde{R}_{\text{emp}}(\widehat{f}, \widehat{\beta})-R_{\text{emp}}(\widehat{f}) + R_{\text{emp}}(\widetilde{f})-\widetilde{R}_{\text{emp}}(\widetilde{f}, \widetilde{\beta})\\
&\leq \widetilde R_{\text{emp}}(\widehat f, \widehat \beta) - \widetilde R_{\text{emp}}(\widehat f, \beta(\widehat f)) + \widetilde R_{\text{emp}}(\widehat f, \beta(\widehat f)) -R_{\text{emp}}(\widehat{f})+
 R_{\text{emp}}(\widetilde{f})-\widetilde{R}_{\text{emp}}(\widetilde{f}, \widetilde{\beta})\\
&\leq 
\underbrace{\Big|\widetilde{R}_{\text{emp}}(\widehat{f}, \widehat{\beta})-\widetilde{R}_{\text{emp}}(\widehat{f}, \beta(\widehat f))\Big|}_{I_1}
+2\underbrace{\sup_{f\in \mathcal{H}_\tau} \Big| R_{\text{emp}}(f)-\widetilde{R}_{\text{emp}}(f, \beta(f))\Big|}_{I_2}.
\end{align*}
The union bound and de Morgan's law proves
$$
\mathbb{P}\Big(\widetilde{R}_{\text{emp}}(\widehat{f}, \widehat{\beta})-\widetilde{R}_{\text{emp}}(\widetilde{f}, \widetilde{\beta})>\varep\Big)\leq \mathbb{P}\Big( I_1>\frac{\varep}{2}\Big)+\mathbb{P}\Big( I_2>\frac{\varep}{2}\Big).
$$

Consider $I_1$
\begin{align*}
&\Big|\widetilde{R}_{\text{emp}}(\widehat{f}, \widehat{\beta})-\widetilde{R}_{\text{emp}}(\widehat{f}, \beta(\widehat f))\Big|\\
&=
\Big|
\frac{1}{n}\sum_{i=1}^{n}\Big(y_i^{\top}\theta^*-\left<\Phi(x_i)-\overline{\Phi}\,,\, \widehat f \,\right>_\Hcal\Big)^{2}-\frac{1}{n}\sum_{i=1}^{n}\Big(y_i^{\top}\theta^* - \overline{Y\theta^*}-\left<\Phi(x_i)-\overline{\Phi}\,,\, \widehat{f}\,\right>_\Hcal\Big)^{2}
\Big|\\
&=\Big|2\frac1{n}\sum_{i=1}^n\overline{Y\theta^*}\Big(y_i^{\top}\theta^* - \left<\Phi(x_i) - \overline{\Phi}, \widehat f \,\right>_\Hcal\Big) - \frac1{n}\sum_{i=1}^n(\overline{Y\theta^*})^2\Big|\\
&=\Big|(\overline{Y\theta^*})^2 - 2(\overline{Y\theta^*})\frac1{n}\sum_{i=1}^n\left<\Phi(x_i) - \overline{\Phi}, \widehat f \,\right>_\Hcal\Big|\\
&= |\overline{Y\theta^*}|^2.
\end{align*}

By Lemma~\ref{lem:ytheta}, there exists $C_1>0$ such that $\mathbb{P}(I_1>\varep/2)\leq 2\exp(-C_1 n \varepsilon)$ for all $\varep>0.$ By Theorem~\ref{thm:DifferenceRiskSupConverge}, there exists constants $C_2, C_3>0$ such that $\mathbb{P}(I_2>\varep/2)\leq C_2 \exp[-C_3 (n \varepsilon^2)/\{1+(\kappa\tau)^2\}].$ Combining the bounds for $I_1$ and $I_2$ gives
\begin{align*}
\mathbb{P}\Big(\widetilde{R}_{\text{emp}}(\widehat{f}, \widehat{\beta})-\widetilde{R}_{\text{emp}}(\widetilde{f}, \widetilde{\beta})>\varep\Big)
&\leq 2\exp(-C_1 n \varepsilon)+C_2 \exp\Big(-\frac{C_3 n \varepsilon^2}{1+(\kappa\tau)^2}\Big)\\
&\leq C_4\exp\Big(-\frac{C_5 n \varepsilon^2}{1+(\kappa\tau)^2}\Big)
\end{align*}
for some constants $C_i>0$.
This completes the proof of Theorem~\ref{thm:ModRiskDiffConverge}.
\end{proof}

\begin{proof}[Proof of Theorem \ref{thm:ModConsistency}]
Consider
\begin{align*}
    \widetilde{R}_{\text{emp}}(\widetilde{f}, \widetilde{\beta})-R(f^\ast, \beta^\ast) &= \widetilde{R}_{\text{emp}}(\widetilde{f}, \widetilde{\beta}) - \widetilde R_{emp}(f^*, \beta^*) + \widetilde R_{emp}(f^*, \beta^*) - R(f^*, \beta^*)\\
    &\leq \widetilde R_{emp}(f^*, \beta^*) - R(f^*, \beta^*),
\end{align*}
where the last inequality follows since $\widetilde{R}_{\text{emp}}(\widetilde{f}, \widetilde{\beta})\leq \widetilde R_{emp}(f^*, \beta^*)$ by the definition of $\widetilde f$, $\widetilde \beta$.

Let $z_{i}:=|y_i^{\top}\theta^*-\beta^\ast-\left<\Phi(x_i),f^\ast\right>_\Hcal|^2$, then $\widetilde{R}_{\text{emp}}(f^\ast, \beta^\ast)=n^{-1}\sum_{i=1}^{n} z_{i}$ is the average of i.i.d. random variables with $\mathbb{E}z_i=R(f^\ast, \beta^\ast)$ by definition of expected risk. Since $|z_i|\leq4(\|\theta^*\|_\infty+\kappa \tau)^{2}$, by Hoeffding's inequality
$$
\mathbb{P}(| \widetilde{R}_{\text{emp}}(f^*, \beta^*)-R(f^*, \beta^*)|>\varepsilon)=\mathbb{P}\Big(\Big|n^{-1}\sum_{i=1}^{n} (z_{i}-\mathbb{E}z_i)\Big|>\varepsilon \Big)\leq 2\exp \Big(-\frac{n \varepsilon^2}{16(\|\theta^*\|_\infty+\kappa \tau)^{4}} \Big).
$$
\end{proof}

\begin{proof}[Proof of Theorem~\ref{thm:DifferenceRiskSupConverge}]
By definition of $R_{\text{emp}}(f)$ and $\widetilde{R}_{\text{emp}}(f, \beta(f))$,
\begin{align*}
    R_{\text{emp}}(f)-\widetilde{R}_{\text{emp}}(f, \beta(f))&=\frac{1}{n}\sum_{i=1}^{n}|y_i^{\top}\widehat \theta-\left<\Phi(x_i)-\overline{\Phi},f\right>_\Hcal|^{2} - \frac{1}{n}\sum_{i=1}^{n}|y_i^{\top}\theta^*-\beta(f)-\left<\Phi(x_i), f\right>_\Hcal|^{2}\\
    &=\frac{1}{n}\sum_{i=1}^{n}|y_i^{\top}\widehat \theta-\left<\Phi(x_i)-\overline{\Phi},f\right>_\Hcal|^{2}-\frac{1}{n}\sum_{i=1}^{n}|y_i^{\top}\theta^*-\overline{Y\theta^*}-\left<\Phi(x_i)-\overline{\Phi}, f\right>_\Hcal|^{2}.
\end{align*}
%The difference $|R_{\text{emp}}(f)-\widetilde{R}_{\text{emp}}(f, \beta(f))|$ is bounded above by
%\[
%\bigg|\frac{1}{n}\sum_{i=1}^{n}|y_i\theta^*-\overline{y\theta^*}-\left<\Phi(x_i)-\overline{\Phi}, f\right>_\Hcal|^{2}-
%\frac{1}{n}\sum_{i=1}^{n}|y_i\theta-\left<\Phi(x_i)-\overline{\Phi},f\right>_\Hcal|^{2}\bigg|
%\]
Expanding the squares and cancelling equal terms yields
\begin{align*}
&R_{\text{emp}}(f)-\widetilde{R}_{\text{emp}}(f, \beta(f))\\
&\quad =
\frac1{n}\sum_{i=1}^n\Big\{(y_i^{\top}\widehat \theta)^{2} - (y_i^{\top}\theta^*)^{2}- 2y_{i}^{\top}(\widehat \theta-\theta^*)\left<\Phi(x_i)-\overline{\Phi},f\right>_\Hcal-2\overline{Y\theta^*}\left<\Phi(x_i)-\overline{\Phi},f\right>_\Hcal+2y_{i}^{\top}\theta^*\overline{Y\theta^*}
-(\overline{Y\theta^*})^{2}\Big\}\\
&\quad = \frac1{n}\sum_{i=1}^n\Big\{(y_i^{\top}\widehat \theta)^{2}  - (y_i^{\top}\theta^*)^{2}\Big\}- \frac1{n}\sum_{i=1}^n\Big\{2y_{i}^{\top}(\widehat \theta-\theta^*)\left<\Phi(x_i)-\overline{\Phi},f\right>_\Hcal\Big\} + (\overline{Y\theta^*})^2\\
&\quad = I_1 + I_2(f) + I_3,
\end{align*}
where $I_1$ and $I_3$ are independent of $f$. By the union bound and de Morgan's law,
$$
\mathbb{P}\Big(\sup_{f\in \Hcal_\tau}|R_{\text{emp}}(f)-\widetilde{R}_{\text{emp}}(f, \beta(f))|>\varepsilon\Big)\leq \mathbb{P}\Big(|I_1|>\frac{\varepsilon}{3}\Big)+
\mathbb{P}\Big(\sup_{f\in \Hcal_\tau}|I_2(f)|>\frac{\varepsilon}{3}\Big)+\mathbb{P}\Big(|I_3|>\frac{\varepsilon}{3}\Big).
$$
We bound each probability separately. Since $y_i\in \R^2$ is an indicator vector of class membership for sample $i$, using the definition of $\widehat \theta$ and $\theta^*$
\[
|I_1|=\bigg|\frac{1}{n}\sum\Big\{(y_i^{\top}\widehat\theta)^{2}-(y_i^{\top}\theta^*)^{2}\Big\}\big|\leq \max_i|(y_i^{\top}\widehat\theta)^{2}-(y_i^{\top}\theta^*)^{2}| = \max\Big(|n_1/n_2-\pi_1/\pi_2|,|n_2/n_1-\pi_2/\pi_1|\Big).
\]
By Lemma~\ref{lem:OptScoreSupNormBound}, there exist $C_1, C_2>0$ such that $\mathbb{P}(|I_1|>\varep/3)\leq C_1\exp(-C_2 n \varep^2)$.

By H{\'o}lder's and Cauchy-Schwarz inequalities
\begin{align*}
|I_2(f)|&=
\bigg|\frac{1}{n}\sum_{i=1}^{n}2y_i^{\top}(\widehat \theta-\theta^*)\left<\Phi(x_i)-\overline{\Phi},f\right>_\Hcal\bigg|\\
&\leq\frac{1}{n}\sum_{i=1}^{n} 2|y_i^{\top}(\widehat \theta-\theta^*)|\cdot|\left<\Phi(x_i)-\overline{\Phi},f\right>_\Hcal|\\
&\leq 2 \|\widehat \theta-\theta^*\|_{\infty}\max_i|\left<\Phi(x_i)-\overline{\Phi},f\right>_\Hcal|\\
&\leq 2 \max\Big(|\sqrt{n_1/n_2}-\sqrt{\pi_1/\pi_2}|,|\sqrt{n_2/n_1}-\sqrt{\pi_2/\pi_1}|\Big)\max_i\|\Phi(x_i)-\overline{\Phi}\|_{\mathcal{H}}\,\|f\|_{\mathcal{H}}\\
&\leq 4\max\Big(|\sqrt{n_1/n_2}-\sqrt{\pi_1/\pi_2}|,|\sqrt{n_2/n_1}-\sqrt{\pi_2/\pi_1}|\Big)  \kappa\tau,
\end{align*}
where we used Assumption~\ref{assump:KernBound} in the last inequality. Since the upper bound does not depend on $f$, the same bound holds for $\sup_{f\in\Hcal_{\tau}}|I_2(f)|$. Combining the bound with Lemma~\ref{lem:OptScoreSupNormBound} gives for some $C_3, C_4>0$
$$
\mathbb{P}\Big(\sup_{f\in\Hcal_{\tau}}|I_2(f)|>\varep\Big)\leq
\mathbb{P}\bigg(\max\Big(|\sqrt{n_1/n_2}-\sqrt{\pi_1/\pi_2}|,|\sqrt{n_2/n_1}-\sqrt{\pi_2/\pi_1}|>\frac{\varep}{4\kappa \tau}\Big) 
\leq C_3\exp(-C_4 \frac{n \varepsilon^2}{(\kappa\tau)^2}).
$$

% The Cauchy-Schwarz inequality gives
%\[
%\|\left<\Phi(x_i)-\overline{\Phi},f\right>_\Hcal\|_{\mathcal{H}}\leq %\|\Phi(x_i)-\overline{\Phi}\|_{\mathcal{H}}\,\|f\|_{\mathcal{H}}\leq 
%\frac{2\kappa}{\sqrt{\gamma}},
%\]
%where $\kappa$ is the kernel bound of Assumption \ref{assump:KernBound}. We have used the inequality $\|f\|_{\mathcal{H}}\leq \frac{1}{\sqrt{\gamma}}$ from Lemma \ref{lem:Bounds}. Applying this bound, we have

By Lemma~\ref{lem:ytheta}, there exists $C_5>0$ such that $\mathbb{P}(|I_3|>\varep/3)\leq 2 \exp(-C_5 n \varep)$.

Combining the bounds for $I_1$, $I_2$ and $I_3$ gives
\begin{align*}
\mathbb{P}\big(\sup_{f\in \Hcal_{\tau}}| R_{\text{emp}}(f)-\widetilde{R}_{\text{emp}}(f, \beta(f))|>\varep\Big)
&\leq C_1\exp(-C_2 n \varepsilon^2)+C_3\exp(-C_4 \frac{n \varepsilon^2}{(\kappa\tau)^2})+2 \exp(-C_5 n \varep)\\
&\leq C_6\exp\Big(-C_7 \frac{n \varepsilon^2}{1 + (\kappa\tau)^2}\Big)
\end{align*}
for some $C_6, C_7>0$. This completes the proof of Theorem~\ref{thm:DifferenceRiskSupConverge}.
\end{proof}

\section{Supplementary Lemmas}\label{sec:ProofOfLemmas}

\begin{lemma}\label{lem:lambdamax}
Consider minimizing
$f(w) = 2^{-1}w^{\top} Q w-\beta^Tw +2^{-1}\lambda\|w\|_{1}$ with respect to $w\in \R^p$ with $w_i\in[-1,1]$, where $Q$ is positive semi-definite and $\lambda\geq 0$. If $\lambda\geq 2\|\beta\|_\infty$, then the minimizing $w$ is the zero vector.
\end{lemma}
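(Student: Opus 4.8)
The plan is to show directly that the value $f(0)=0$ is a global lower bound for $f$ over all of $\R^p$, hence a fortiori over the box $\{w : w_i\in[-1,1]\}$; since $w=0$ is feasible, it is then a minimizer. In particular the box constraints play no role in the argument and can simply be ignored — the conclusion is really a statement about the unconstrained problem.

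First I would discard the quadratic term using that $Q$ is positive semi-definite: $w^{\top}Qw\ge 0$, so $f(w)\ge -\beta^{\top}w+\tfrac{\lambda}{2}\|w\|_1$ for every $w$. Next I would apply the $\ell^1$–$\ell^\infty$ duality (H\"older's inequality) to the linear term, $-\beta^{\top}w\ge -|\beta^{\top}w|\ge -\|\beta\|_\infty\|w\|_1$. Combining the two bounds gives $f(w)\ge \bigl(\tfrac{\lambda}{2}-\|\beta\|_\infty\bigr)\|w\|_1$, and the hypothesis $\lambda\ge 2\|\beta\|_\infty$ makes the coefficient nonnegative, so $f(w)\ge 0=f(0)$. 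This establishes that $w=0$ attains the minimum.

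If the stronger reading that $0$ is the \emph{unique} minimizer is intended, I would observe that in the strict case $\lambda>2\|\beta\|_\infty$ equality throughout the chain forces $\|w\|_1=0$; in the boundary case $\lambda=2\|\beta\|_\infty$ uniqueness may genuinely fail, so the statement should be understood as ``$w=0$ is a minimizer.'' There is essentially no hard step: the only points requiring care are using positive semi-definiteness (not definiteness) and getting the direction of H\"older's inequality right. An alternative route via convexity is also available — $f$ is convex and $0$ is interior to the box, so the normal cone there is $\{0\}$, and $0$ is optimal iff $0\in\partial f(0)=\{-\beta+\tfrac{\lambda}{2}z:\ z\in[-1,1]^p\}$, which is exactly the condition $\|\beta\|_\infty\le\lambda/2$ — but the elementary lower-bound argument above is shorter and self-contained.
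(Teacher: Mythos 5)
Your argument is correct and is essentially identical to the paper's: the paper likewise drops the quadratic term via positive semi-definiteness and shows $\tfrac{\lambda}{2}\|w\|_1-\beta^{\top}w=\sum_{i=1}^{p}(\tfrac{\lambda}{2}|w_i|-\beta_i w_i)\geq 0$ coordinatewise, which is just your H\"older step carried out term by term. Your added remarks on uniqueness and the irrelevance of the box constraints are accurate but not needed for the statement as claimed.
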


\begin{proof}
Consider $2^{-1}\lambda\|w\|_{1}-\beta^{\top}w=\sum_{i=1}^{p} (\lambda/2|w_i|-\beta_i w_i).$ If $\lambda \geq 2\|\beta\|_\infty$, this expression is non-negative for all $w\in \mathbb{R}^{p}$ and a minimum occurs at $w=0$. Since $Q$ is positive semi-definite, $w^{\top} \frac{1}{2} Q w$ is always non-negative with a minimum at $w=0$. It follows that for $\lambda \geq 2\|\beta\|_\infty$ the sum of these terms attains minimum at $w=0$.
\end{proof}

\begin{lemma}\label{lem:OpNormBound}
Let $M = [(C\mathbf{K}C)^2+n\gamma (C\mathbf{K}C)]^{-}C\mathbf{K}C$, then $\|M\|_{\text{op}}\leq (n\gamma)^{-1}$. 
\end{lemma}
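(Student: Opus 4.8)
The plan is to diagonalize and reduce everything to a scalar bound on eigenvalues. Write $A:=C\mathbf{K}C$. Since $\mathbf{K}$ is a kernel (hence symmetric positive semidefinite) matrix and $C=C^\top$, the matrix $A$ is symmetric positive semidefinite, so it admits a spectral decomposition $A=U\Lambda U^\top$ with $U$ orthogonal and $\Lambda=\mathrm{diag}(\lambda_1,\dots,\lambda_n)$, $\lambda_i\ge 0$. The key observation is that $A^2+n\gamma A=U(\Lambda^2+n\gamma\Lambda)U^\top$ is diagonalized by the \emph{same} $U$, with eigenvalues $\lambda_i^2+n\gamma\lambda_i=\lambda_i(\lambda_i+n\gamma)$.

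Next I would compute the generalized inverse explicitly in this basis: $(A^2+n\gamma A)^{-}=UD U^\top$, where $D=\mathrm{diag}(d_1,\dots,d_n)$ with $d_i=\big(\lambda_i(\lambda_i+n\gamma)\big)^{-1}$ when $\lambda_i>0$ and $d_i=0$ when $\lambda_i=0$ (the Moore--Penrose convention). Then
\[
M=(A^2+n\gamma A)^{-}A=U D\Lambda U^\top,
\]
so $M$ is symmetric with eigenvalues $d_i\lambda_i$, which equal $(\lambda_i+n\gamma)^{-1}$ when $\lambda_i>0$ and $0$ when $\lambda_i=0$.

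Finally, since $M$ is symmetric, $\|M\|_{\mathrm{op}}=\max_i|d_i\lambda_i|$. For indices with $\lambda_i>0$ we have $(\lambda_i+n\gamma)^{-1}\le(n\gamma)^{-1}$ because $\lambda_i\ge 0$ and $\gamma>0$; for the remaining indices the eigenvalue is $0$. Hence $\|M\|_{\mathrm{op}}\le(n\gamma)^{-1}$.

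The only point requiring care — and the main ``obstacle'', such as it is — is the handling of the generalized inverse on the null space of $A$: one must verify that $A$ and $A^2+n\gamma A$ are simultaneously diagonalizable (immediate here, since the second is a polynomial in the first with zero constant term) so that the pseudoinverse composes eigenvalue-by-eigenvalue, and that the product $M$ remains symmetric. Once that is in place the bound is just the elementary inequality $(\lambda+n\gamma)^{-1}\le(n\gamma)^{-1}$ for $\lambda\ge 0$.
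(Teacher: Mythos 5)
Your proof is correct and follows essentially the same route as the paper: both diagonalize $C\mathbf{K}C$, observe that the nonzero eigenvalues of $M$ are $\lambda_i/(\lambda_i^2+n\gamma\lambda_i)=(\lambda_i+n\gamma)^{-1}$, and bound these by $(n\gamma)^{-1}$. Your treatment is somewhat more explicit about the pseudoinverse acting eigenvalue-by-eigenvalue on the null space, but the argument is the same.
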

\begin{proof}[Proof of Lemma~\ref{lem:OpNormBound}]
The kernel matrix $K$ is positive semi-definite since by the reproducing property for any $\alpha \in \R^n$ 
$$
\alpha^{\top}\mathbf{K}\alpha=\left<\sum_{i=1}^{n} \alpha_{i}\Phi(x_i)\,,\, \sum_{i=1}^{n}\alpha_{i}\Phi(x_i)\right>_\Hcal=\Big\|\sum_{i=1}^{n} \alpha_{i}\Phi(x_i)\Big\|^{2}_\Hcal\geq 0.
$$
It follows that $C\mathbf{K}C$ is also positive semi-definite. Let $\{\lambda_{i}\}_{i=1}^{k}$ be the set of non-zero eigenvalues of $C\mathbf{K}C$, then 
$\{\lambda_{i}/(\lambda_{i}^{2}+n\gamma \lambda_{i})\}_{i=1}^{k}$ are the non-zero eigenvalues of $M=[(C\mathbf{K}C)^2+n\gamma (C\mathbf{K}C)]^{-}C\mathbf{K}C$. The function $t\mapsto t/(t^2+n\gamma t)$ is bounded above by $(n\gamma)^{-1}$ for $t>0$, hence $\|M\|_{\text{op}}\leq (n\gamma)^{-1}$.
\end{proof}

\begin{lemma}\label{lem:Bounds} Let $\gamma >0$. The minimizer $\widehat f$ in~\eqref{eq:RegKernOptScore} satisfies $\|\widehat{f}\|_{\mathcal{H}}\leq 1/\sqrt{\gamma}$. Additionally, if Assumption~\ref{assump:KernBound} holds for $\kappa >0$, then $\|\widehat{f}\|_{\mathcal{H}}\leq 2\kappa/\gamma$.
\end{lemma}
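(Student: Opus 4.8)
The goal is to bound $\|\widehat f\|_{\mathcal{H}}$ where $\widehat f$ minimizes the regularized objective in~\eqref{eq:RegKernOptScore}. The plan is to exploit the optimality of $\widehat f$ by comparing the objective value at $\widehat f$ against the objective value at the trivial competitor $f = 0$. Since $f = 0$ gives objective value $\frac1n\|Y\widehat\theta\|_2^2$, optimality yields
\[
\frac1n\|Y\widehat\theta - C\mathbf{K}C\widehat\alpha\|_2^2 + \gamma\|\widehat f\|_{\mathcal{H}}^2 \le \frac1n\|Y\widehat\theta\|_2^2,
\]
using the identity $\|\widehat f\|_{\mathcal{H}}^2 = \widehat\alpha^\top C\mathbf{K}C\widehat\alpha$. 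Dropping the nonnegative data-fit term gives $\gamma\|\widehat f\|_{\mathcal{H}}^2 \le \frac1n\|Y\widehat\theta\|_2^2$, so it remains to compute $\frac1n\|Y\widehat\theta\|_2^2$. Since $Y$ is the indicator matrix and $\widehat\theta = (\sqrt{n_2/n_1}, -\sqrt{n_1/n_2})^\top$, we have $\|Y\widehat\theta\|_2^2 = n_1 \cdot (n_2/n_1) + n_2 \cdot (n_1/n_2) = n_2 + n_1 = n$, hence $\frac1n\|Y\widehat\theta\|_2^2 = 1$. This gives $\|\widehat f\|_{\mathcal{H}}^2 \le 1/\gamma$, i.e. $\|\widehat f\|_{\mathcal{H}} \le 1/\sqrt{\gamma}$, the first claim.

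For the second claim, the factor $1/\sqrt{\gamma}$ is improved to $2\kappa/\gamma$ under Assumption~\ref{assump:KernBound}, which should be the relevant bound when $\gamma$ is not too small. The idea is to again use optimality, but now extract a first-order (stationarity) consequence rather than just the zeroth-order comparison. Setting the gradient of the objective to zero gives $\widehat\alpha = \{(C\mathbf{K}C)^2 + n\gamma(C\mathbf{K}C)\}^{-} C\mathbf{K}C Y\widehat\theta$ (ignoring the $\varepsilon I$ stabilization, or absorbing it), so that $C\mathbf{K}C\widehat\alpha = (C\mathbf{K}C)\{(C\mathbf{K}C)^2 + n\gamma(C\mathbf{K}C)\}^{-}C\mathbf{K}C Y\widehat\theta = M^\top (C\mathbf{K}C) Y\widehat\theta$ type expressions, where $M$ is the matrix from Lemma~\ref{lem:OpNormBound}. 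Then
\[
\|\widehat f\|_{\mathcal{H}}^2 = \widehat\alpha^\top C\mathbf{K}C\widehat\alpha = (Y\widehat\theta)^\top C\mathbf{K}C\{(C\mathbf{K}C)^2 + n\gamma C\mathbf{K}C\}^{-}(C\mathbf{K}C)\{(C\mathbf{K}C)^2 + n\gamma C\mathbf{K}C\}^{-}C\mathbf{K}C(Y\widehat\theta),
\]
and bounding via eigenvalues of $C\mathbf{K}C$: writing $\lambda_i$ for the nonzero eigenvalues, the quadratic form is $\sum_i \frac{\lambda_i^3}{(\lambda_i^2 + n\gamma\lambda_i)^2}\langle \cdot\rangle^2 = \sum_i \frac{\lambda_i}{(\lambda_i + n\gamma)^2}\langle\cdot\rangle^2$. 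The scalar function $t \mapsto t/(t+n\gamma)^2$ on $t > 0$ is maximized at $t = n\gamma$ with value $1/(4n\gamma)$. Combined with $\|C\mathbf{K}C\|_{\mathrm{op}} \le \|\mathbf{K}\|_{\mathrm{op}} \le \mathrm{tr}(\mathbf{K}) \le n\kappa^2$ (from Assumption~\ref{assump:KernBound}) applied to one of the $C\mathbf{K}C$ factors, and $\|Y\widehat\theta\|_2^2 = n$, one gets a bound of the shape $\|\widehat f\|_{\mathcal{H}}^2 \le \frac{n\kappa^2}{4n\gamma} \cdot \frac{n \cdot (\text{something})}{\cdots}$; tracking the powers of $n$ carefully should collapse to $\|\widehat f\|_{\mathcal{H}}^2 \le 4\kappa^2/\gamma^2$, yielding $\|\widehat f\|_{\mathcal{H}} \le 2\kappa/\gamma$.

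An alternative and cleaner route for the second bound avoids the spectral bookkeeping: note that $\|\widehat f\|_{\mathcal{H}}$ relates to the fitted values $\widehat g_i := \langle \Phi(x_i) - \overline\Phi, \widehat f\rangle = (C\mathbf{K}C\widehat\alpha)_i$. The stationarity condition can be rewritten as $C\mathbf{K}C(C\mathbf{K}C\widehat\alpha - Y\widehat\theta) + n\gamma C\mathbf{K}C\widehat\alpha = 0$, i.e. $C\mathbf{K}C\,\widehat\alpha = $ (residual mapped through $\mathbf{K}$)$/n\gamma$; pairing with $\widehat\alpha$ gives $\|\widehat f\|_{\mathcal{H}}^2 \le \frac{1}{n\gamma}|(Y\widehat\theta - C\mathbf{K}C\widehat\alpha)^\top C\mathbf{K}C\widehat\alpha| \le \frac{1}{n\gamma}\|Y\widehat\theta - C\mathbf{K}C\widehat\alpha\|_2 \cdot \|C\mathbf{K}C\widehat\alpha\|_2$, and then $\|C\mathbf{K}C\widehat\alpha\|_2^2 \le \|C\mathbf{K}C\|_{\mathrm{op}}\|\widehat f\|_{\mathcal{H}}^2 \le n\kappa^2\|\widehat f\|_{\mathcal{H}}^2$, while $\|Y\widehat\theta - C\mathbf{K}C\widehat\alpha\|_2 \le \|Y\widehat\theta\|_2 = \sqrt n$ from the zeroth-order bound above (the residual norm is no larger than $\|Y\widehat\theta\|_2$). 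Substituting gives $\|\widehat f\|_{\mathcal{H}}^2 \le \frac{1}{n\gamma}\cdot\sqrt n\cdot\sqrt n\,\kappa\|\widehat f\|_{\mathcal{H}} = \frac{\kappa}{\gamma}\|\widehat f\|_{\mathcal{H}}$, hence $\|\widehat f\|_{\mathcal{H}} \le \kappa/\gamma$ — even slightly sharper than stated, so the factor $2$ comfortably absorbs the $\varepsilon I$ stabilization term. The main obstacle is just keeping the powers of $n$ straight and handling the $\varepsilon I$ correction (the operator $(C\mathbf{K}C)^2 + n\gamma(C\mathbf{K}C + \varepsilon I)$ rather than $(C\mathbf{K}C)^2 + n\gamma C\mathbf{K}C$), which only helps invertibility and adds a negligible $n\gamma\varepsilon\|\widehat\alpha\|_2^2 \ge 0$ term on the good side of the optimality inequality, so all the displayed bounds go through unchanged.
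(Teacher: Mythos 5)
Your proof of the first bound is exactly the paper's: compare the objective at $\widehat f$ with the objective at $f=0$ and use $n^{-1}\widehat\theta^{\top}Y^{\top}Y\widehat\theta=1$. For the second bound you take a genuinely different route. The paper writes $\widehat f=\sum_i\widehat\alpha_i(\Phi(x_i)-\overline\Phi)$, applies the triangle inequality to get $\|\widehat f\|_{\mathcal{H}}\leq 2\kappa\|\widehat\alpha\|_1\leq 2\kappa\sqrt n\|\widehat\alpha\|_2$, and then bounds $\|\widehat\alpha\|_2\leq (n\gamma)^{-1}\|Y\widehat\theta\|_2=1/(\sqrt n\gamma)$ via the operator-norm bound on the resolvent (its Lemma~\ref{lem:OpNormBound}); the factor $2$ in $2\kappa/\gamma$ is an artifact of the crude bound $\|\Phi(x_i)-\overline\Phi\|_{\mathcal{H}}\leq 2\kappa$. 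Your second (``cleaner'') route instead pairs the stationarity condition with $\widehat\alpha$ to get $n\gamma\|\widehat f\|_{\mathcal{H}}^2\leq\|C\mathbf{K}C\widehat\alpha\|_2\,\|Y\widehat\theta-C\mathbf{K}C\widehat\alpha\|_2$, controls the first factor by $\sqrt{n}\kappa\|\widehat f\|_{\mathcal{H}}$ using $\|C\mathbf{K}C\|_{\mathrm{op}}\leq n\kappa^2$, and the second by $\sqrt n$ from the part-one optimality inequality; this is correct and in fact yields the sharper constant $\kappa/\gamma$, and your observation that the $n\gamma\varepsilon\|\widehat\alpha\|_2^2\geq 0$ term only helps is right. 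Your first spectral route is also sound in outline but is left with vague bookkeeping: the stationary-point bound $\lambda/(\lambda+n\gamma)^2\leq 1/(4n\gamma)$ you quote gives only the $1/(2\sqrt\gamma)$-type bound, whereas the $\kappa/\gamma$-type bound requires the alternative estimate $\lambda/(\lambda+n\gamma)^2\leq\lambda/(n\gamma)^2\leq\kappa^2/(n\gamma^2)$; you gesture at both ingredients without carrying either to the end, so on its own that route is incomplete — but the pairing argument stands on its own and fully establishes (indeed improves) the stated bound.
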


\begin{proof}[Proof of Lemma \ref{lem:Bounds}]
Comparing the value of objective function in~\eqref{eq:RegKernOptScore} at $f = \widehat f$ with the value at $f=0$ gives
\begin{align*}
\gamma\|\widehat{f}\|_{\mathcal{H}}^{2}&\leq \frac{1}{n}\sum_{i=1}^{n}\Big|y_i^{\top}\widehat \theta-\left<\Phi(x_i)-\overline{\Phi},\widehat{f}\,\right>_\Hcal\Big|^{2}+\gamma\|\widehat{f}\|_{\mathcal{H}}^{2}\leq \frac{1}{n}\sum_{i=1}^{n}|y_i^{\top}\widehat \theta|^{2}=1.,
\end{align*}
where the last equality follows since $n^{-1}\widehat \theta Y^{\top}Y\widehat \theta = 1$.
It follows that $\|\widehat f\|_{\Hcal}\leq 1/\sqrt{\gamma}$. 

On the other hand, since $\widehat{f} = \sum_{i=1}^{n}\alpha_{i}(\Phi(x_i)-\overline{\Phi})$, by the triangle inequality and Assumption~\ref{assump:KernBound}
\begin{align*}
\|\widehat{f}\|_\mathcal{H}=
\Big\|\sum_{i=1}^{n}\alpha_{i}(\Phi(x_i)-\overline{\Phi})\Big\|_\mathcal{H}\leq \sum_{i=1}^{n}|\alpha_i|\|\Phi(x_i)-\overline{\Phi}\|_{\mathcal{H}}\leq \max_i \|\Phi(x_i)-\overline{\Phi}\|_{\mathcal{H}} \|\alpha\|_1
&\leq 2\kappa \|\alpha\|_{1}\leq 2\kappa\sqrt{n}\|\alpha\|_{2}.
\end{align*}
Since $\alpha = \{(C\mathbf{K}C)^{2}+\gamma n C\mathbf{K}C\}^{-}C\mathbf{K}C Y\widehat \theta$, applying Lemma \ref{lem:OpNormBound} and using $\|Y\widehat \theta\|_2 = \sqrt{\widehat \theta Y^{\top}Y\widehat \theta} = \sqrt{n}$ gives
\begin{align*}
\|\alpha\|_{2}\leq 
\| \{(C\mathbf{K}C)^{2}+\gamma n C\mathbf{K}C\}^{-}C\mathbf{K}C\|_{\text{op}}\| Y\widehat \theta\|_{2}
\leq\frac{\|Y\widehat \theta\|_{2}}{n\gamma} \leq \frac1{\sqrt{n}\gamma}.
\end{align*}
Combining the above two displays gives $\|\widehat{f}\|_{\mathcal{H}}\leq 2\kappa/\gamma$.
\end{proof}

\begin{lemma}\label{lem:EpsilonNetInSymmetrization}
Under Assumptions \ref{assump:ScoreBound} and \ref{assump:KernBound},
let $\{(x_i, y_i)\}_{i=1}^{n}$ and $\{(x_j, y_j)\}_{j=n+1}^{2n}$ be two independent copies of i.i.d. data, and let $T_x$ be the empirical measure on their union. Let $\widetilde{R}_{\text{emp}}(f,\beta)$ be the modified empirical risk on $\{(x_i, y_i)\}_{i=1}^{n}$, and $\widetilde{R}'_{\text{emp}}(f,\beta)$ on $\{(x_j, y_j)\}_{i=n+1}^{2n}$. Let $c=64(\|\theta^\ast\|_\infty+\tau \kappa)$, and let $\{f_1, \dots, f_M\}$ be the smallest $L^2(T_x)$ $\varepsilon/\sqrt{2}c$-net of $\mathcal{H}_\tau$, and let  $\{\beta_1, \dots, \beta_K\}$ be an $\varepsilon/c$-net of $I_\tau$. Then
\begin{align*}%\label{eq:EpslnNetSymmetrization}
\begin{split}
\mathbb{P}\bigg(\sup_{\substack{f\in H_\tau\\
\beta\in I_\tau}}\{\widetilde{R}_{\text{emp}}(f,\beta)-\widetilde{R}_{\text{emp}}'(f,\beta)\}>\frac{\varepsilon}{2} \bigg) 
\leq\mathbb{P}\bigg( 
\maximize_{\substack{f\in \{f_1, \dots, f_M\}\\ \beta\in \{\beta_1, \dots, \beta_K\}}}
\{\widetilde{R}_{\text{emp}}(f,\beta)-\widetilde{R}_{\text{emp}}'(f,\beta)\}>\frac{\varepsilon}{4}
\bigg).
\end{split}
\end{align*}
\end{lemma}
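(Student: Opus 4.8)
The plan is to reduce the supremum over the infinite sets $\Hcal_\tau$ and $I_\tau$ to a maximum over the finite nets $\{f_1,\dots,f_M\}$ and $\{\beta_1,\dots,\beta_K\}$ by controlling how much the quantity $\widetilde R_{\text{emp}}(f,\beta)-\widetilde R'_{\text{emp}}(f,\beta)$ changes when $(f,\beta)$ is replaced by the nearest net point. Concretely, fix $f\in\Hcal_\tau$, $\beta\in I_\tau$, and pick $f_j$ with $\|f-f_j\|_{L^2(T_x)}<\varep/(\sqrt2 c)$ and $\beta_\ell$ with $|\beta-\beta_\ell|<\varep/c$. I would then bound
\[
\big|\{\widetilde R_{\text{emp}}(f,\beta)-\widetilde R'_{\text{emp}}(f,\beta)\}-\{\widetilde R_{\text{emp}}(f_j,\beta_\ell)-\widetilde R'_{\text{emp}}(f_j,\beta_\ell)\}\big|\le \frac{\varep}{4},
\]
which immediately gives the containment of events and hence the probability inequality.

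The key step is the Lipschitz-type estimate on the map $(f,\beta)\mapsto \widetilde R_{\text{emp}}(f,\beta)$ with respect to the $L^2(T_x)$ semi-norm in $f$ and the Euclidean distance in $\beta$. Writing $a_i(f,\beta):=y_i^\top\theta^*-\beta-\left<\Phi(x_i),f\right>_\Hcal$, one has $\widetilde R_{\text{emp}}(f,\beta)=n^{-1}\sum a_i^2$ and $\widetilde R'_{\text{emp}}(f,\beta)=n^{-1}\sum_{i=n+1}^{2n} a_i^2$, so by the identity $u^2-v^2=(u-v)(u+v)$ and Cauchy--Schwarz over the $2n$ points,
\[
|\widetilde R_{\text{emp}}(f,\beta)-\widetilde R_{\text{emp}}(f_j,\beta_\ell)|
\le \Big(\tfrac1n\sum_{i=1}^{2n}|a_i(f,\beta)-a_i(f_j,\beta_\ell)|^2\Big)^{1/2}\Big(\tfrac1n\sum_{i=1}^{2n}|a_i(f,\beta)+a_i(f_j,\beta_\ell)|^2\Big)^{1/2},
\]
and similarly for the primed risk. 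Here $a_i(f,\beta)-a_i(f_j,\beta_\ell)=-(\beta-\beta_\ell)-\left<\Phi(x_i),f-f_j\right>_\Hcal$, so the first factor is at most $\sqrt2\,(|\beta-\beta_\ell|+\|f-f_j\|_{L^2(T_x)})$ by the triangle inequality on $L^2(T_x)$ (using that the constant function and $f-f_j$ both have $L^2(T_x)$ norm bounded by the respective net radii). The second factor is bounded using Assumptions~\ref{assump:ScoreBound} and~\ref{assump:KernBound}: $|a_i|\le \|\theta^*\|_\infty+|\beta|+\kappa\|f\|_\Hcal\le 2(\|\theta^*\|_\infty+\kappa\tau)$ since $|\beta|\le\|\theta^*\|_\infty+\kappa\tau$ on $I_\tau$, so the sum-average is at most $16(\|\theta^*\|_\infty+\kappa\tau)^2$. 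Putting the pieces together and using $\|f-f_j\|_{L^2(T_x)}+|\beta-\beta_\ell|< \varep/(\sqrt2 c)+\varep/c \le (1+1/\sqrt2)\varep/c$ with $c=64(\|\theta^*\|_\infty+\kappa\tau)$ yields the per-risk bound $\le \varep/8$, hence $\le\varep/4$ for the difference of the two risks after the triangle inequality.

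Given this deterministic estimate, the argument concludes as follows: on the event $\{\sup_{f,\beta}(\widetilde R_{\text{emp}}-\widetilde R'_{\text{emp}})>\varep/2\}$ there is a pair $(f,\beta)$ achieving a value exceeding $\varep/2$, and replacing it by its net approximation $(f_j,\beta_\ell)$ changes the value by at most $\varep/4$, so the maximum over net points exceeds $\varep/4$; this proves the event inclusion and the probability inequality follows by monotonicity. The main obstacle I anticipate is bookkeeping the constant $c$: one must choose the net radii and $c$ so that the accumulated error from both the $\beta$-perturbation and the $f$-perturbation, after multiplication by the "$a_i+a_i'$" factor of order $(\|\theta^*\|_\infty+\kappa\tau)$, lands at exactly $\varep/4$ rather than some unspecified multiple of it; the specific value $c=64(\|\theta^*\|_\infty+\kappa\tau)$ and the radii $\varep/(\sqrt2 c)$, $\varep/c$ are reverse-engineered to make this work, so the proof is mostly a matter of carefully tracking these factors through the Cauchy--Schwarz splitting. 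A minor subtlety is that $L^2(T_x)$ is only a semi-norm, but this causes no difficulty since all the inequalities used are valid for semi-norms.
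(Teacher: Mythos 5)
Your overall strategy is exactly the paper's: replace $(f,\beta)$ by the nearest net point and show the resulting change in each empirical risk is at most $\varepsilon/8$, so the difference of the two risks moves by at most $\varepsilon/4$ and the event inclusion follows. The paper packages the two ingredients as Lemma~\ref{lem:ClosenessOfEmpRisk} (a Lipschitz-type bound on $\widetilde R_{\text{emp}}$ with constant $8(\|\theta^*\|_\infty+\kappa\tau)$ per unit of net radius) and Lemma~\ref{lem:EpslnNetRestriction} (the $\varepsilon/\sqrt2 c$-net over all $2n$ points is an $\varepsilon/c$-net in the empirical norm of each half), and your $u^2-v^2$ plus Cauchy--Schwarz factorization is a perfectly acceptable substitute for the paper's two-term expansion in Lemma~\ref{lem:ClosenessOfEmpRisk}.

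However, the constant tracking as you have written it does not close, and since the entire content of this lemma is the constant tracking, this matters. Enlarging both Cauchy--Schwarz sums to $2n$ points while keeping the $1/n$ normalization inflates each factor by $\sqrt2$: the first factor is bounded by $\sqrt2\,|\beta-\beta_\ell|+\sqrt2\,\|f-f_j\|_{L^2(T_x)}<(\sqrt2+1)\varepsilon/c$, and the second is bounded by $\sqrt{32}\,(\|\theta^*\|_\infty+\kappa\tau)=4\sqrt2\,(\|\theta^*\|_\infty+\kappa\tau)$, not $4(\|\theta^*\|_\infty+\kappa\tau)$ as your ``at most $16(\|\theta^*\|_\infty+\kappa\tau)^2$'' would give. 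The product is $(8+4\sqrt2)\varepsilon/64\approx 0.21\varepsilon$ per risk, hence about $0.43\varepsilon$ for the difference of the two risks --- this exceeds $\varepsilon/4$, so the claimed per-risk bound of $\varepsilon/8$ does not follow from your displayed inequality. The fix is to apply Cauchy--Schwarz separately on each half of the sample: since $\tfrac1n\sum_{i=1}^{n}|f(x_i)-f_j(x_i)|^2\le \tfrac2{2n}\sum_{i=1}^{2n}|f(x_i)-f_j(x_i)|^2<(\varepsilon/c)^2$ (this is precisely Lemma~\ref{lem:EpslnNetRestriction}), the first factor on each half is at most $\varepsilon/c+\varepsilon/c=2\varepsilon/c$ and the second at most $4(\|\theta^*\|_\infty+\kappa\tau)$, giving exactly $\varepsilon/8$ per risk and $\varepsilon/4$ in total.
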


\begin{proof}[Proof of Lemma~\ref{lem:EpsilonNetInSymmetrization}]

%Let $\{(x_i,y_i)\}_{i=1}^{2n}$ be i.i.d. data. Let $\{f_1, \dots, f_M\}$ be an $L^2(T_x)$ $\varepsilon/\sqrt{2}c$-net of $\mathcal{H}_\tau$, and let $\{\beta_1, \dots,\beta_K\}$ be an $\varepsilon/c$ net of $I_\tau$, with $c=64(\|\theta^*\|_\infty+\tau\kappa)$. Let $\widetilde{R}_\text{emp}$ and $\widetilde{R}_\text{emp}'$ be the modified empirical risks on  $\{(x_i,y_i)\}_{i=1}^{n}$ and $\{(x_i,y_i)\}_{i=n+1}^{2n}$, respectively. 

Let $f\in \mathcal{H}_\tau$, $\beta\in I_\tau$ be such that $\widetilde R_{\text{emp}}(f,\beta)-\widetilde R'_{\text{emp}}(f,\beta)>\varepsilon/2$. There exists $f_j\in \{f_1, \dots, f_M\}$ and $\beta_\ell\in \{\beta_1, \dots, \beta_K\}$ such that $\|f_j-f\|_{L^2(T_x)}<\varepsilon/\sqrt{2}c$ and $|\beta-\beta_\ell|<\varepsilon/c$. Applying Lemma \ref{lem:EpslnNetRestriction} gives
\[
\sqrt{\frac{1}{n}\sum_{i=1}^{n}|f(x_i)-f_j(x_i)|^{2}}<\frac{\varepsilon}{c}\quad\text{and}\quad
\sqrt{\frac{1}{n}\sum_{i=n+1}^{2n}|f(x_i)-f_j(x_i)|^{2}}<\frac{\varepsilon}{c}.
\]
Applying Lemma \ref{lem:ClosenessOfEmpRisk} yields
\[
|\widetilde{R}_{\text{emp}}(f,\beta)-\widetilde{R}_{\text{emp}}(f_j, \beta_\ell)|<8\frac{\varepsilon}{c}(\|\theta^*\|_{\infty} + \kappa \tau)=
\frac{\varepsilon}{8},
\]
and similarly $|\widetilde{R}'_{\text{emp}}(f,\beta)-\widetilde{R}'_{\text{emp}}(f_j, \beta_\ell)|< \varepsilon/8$.
Therefore, $\widetilde R'_{\text{emp}}(f,\beta)-\widetilde R_{\text{emp}}(f,\beta)>\varepsilon/2$ for some $f\in \Hcal_{\tau}$, $\beta \in I_{\tau}$ implies $
\widetilde{R}'_{\text{emp}}(f_j, \beta_\ell)-\widetilde{R}_{\text{emp}}(f_j, \beta_\ell)>\varepsilon/4
$ for some $f_j$ and $\beta_\ell$. Therefore,
\[
\mathbb{P}\bigg(\sup_{f\in \mathcal{H}_\tau,\, \beta\in I_\tau}\{\widetilde R'_{\text{emp}}(f,\beta)-\widetilde R_{\text{emp}}(f,\beta)\}>\frac{\varepsilon}{2}\bigg)\leq 
\mathbb{P}\bigg( \maximize_{\substack{f\in \{f_1, \dots, f_M\}\\
\beta\in \{\beta_1, \dots, \beta_K\}}}\{\widetilde{R}'_{\text{emp}}(f_j, \beta_\ell)-\widetilde{R}_{\text{emp}}(f_j, \beta_\ell)\}>\frac{\varepsilon}{4}
\bigg).
\]
\end{proof}

%Lemma \ref{lem:CoveringNum} bounds the $L^2(T_x)$ $\varepsilon$-covering number of $\mathcal{H}_\tau. This bound is used in proving the following theorem. 

\begin{lemma}\label{lem:MaxBound}
Under Assumptions \ref{assump:ScoreBound}-\ref{assump:Separable}, let $\{f_1, \dots, f_M\}$ and $\{\beta_1, \dots, \beta_K\}$ be as in Lemma~\ref{lem:EpsilonNetInSymmetrization}. %, and let $4(\|\theta^*\|_\infty+\kappa\tau)^2$ be the bound on the empirical risk error terms. 
There exist a constant $C_1>0$ such that for all $\varep>0$,
\begin{align*}
\mathbb{P}\bigg( 
\maximize_{\substack{f\in \{f_1, \dots, f_M\}\\ \beta\in \{\beta_1, \dots, \beta_K\}}}
\{\widetilde{R}_{\text{emp}}(f,\beta)-\widetilde{R}_{\text{emp}}'(f,\beta)\}>\frac{\varepsilon}{4}
\bigg)\leq 
\mathcal{N}_{\varep} \,\exp\Big(-  \frac{n\varepsilon^2}{128(\|\theta^*\|_\infty+\kappa \tau)^{4}}\Big),
\end{align*}
where $\mathcal{N}_{\varep}=\{1+2(\|\theta^*\|_\infty+\kappa\tau)/\varep\} 
\exp( C_1\tau^2 \varepsilon^{-2})$.
\end{lemma}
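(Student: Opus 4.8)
The plan is to reduce the maximum over the finite net to a union bound over individual deviations, and then bound each individual deviation using Hoeffding's inequality. First, by the union bound, the probability that the maximum over $f \in \{f_1,\dots,f_M\}$ and $\beta \in \{\beta_1,\dots,\beta_K\}$ exceeds $\varepsilon/4$ is at most $MK$ times the worst individual probability $\mathbb{P}(\widetilde R_{\text{emp}}(f_j,\beta_\ell)-\widetilde R'_{\text{emp}}(f_j,\beta_\ell)>\varepsilon/4)$. For a \emph{fixed} pair $(f_j,\beta_\ell)$, the quantity $\widetilde R_{\text{emp}}(f_j,\beta_\ell)-\widetilde R'_{\text{emp}}(f_j,\beta_\ell)$ is an average of $n$ i.i.d.\ mean-zero terms of the form $\tfrac1n\sum_{i=1}^n(z_i - z_i')$, where $z_i=|y_i^\top\theta^*-\beta_\ell-\langle\Phi(x_i),f_j\rangle_\Hcal|^2$ and $z_i'$ is the analogous term on the second sample. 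Each summand $z_i-z_i'$ is bounded in absolute value by $4(\|\theta^*\|_\infty+\kappa\tau)^2$, using $|y_i^\top\theta^*|\le\|\theta^*\|_\infty$, $|\langle\Phi(x_i),f_j\rangle_\Hcal|\le\kappa\tau$ by Cauchy--Schwarz together with Assumptions~\ref{assump:ScoreBound} and~\ref{assump:KernBound}, and $|\beta_\ell|\le\|\theta^*\|_\infty+\kappa\tau$ since $\beta_\ell\in I_\tau$; hence $z_i, z_i' \in [0, 4(\|\theta^*\|_\infty+\kappa\tau)^2]$ and the difference lies in an interval of length $4(\|\theta^*\|_\infty+\kappa\tau)^2$. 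Applying Hoeffding's inequality to the average of these $n$ bounded i.i.d.\ differences gives $\mathbb{P}(\widetilde R_{\text{emp}}(f_j,\beta_\ell)-\widetilde R'_{\text{emp}}(f_j,\beta_\ell)>\varepsilon/4)\le\exp\!\big(-n\varepsilon^2/\{128(\|\theta^*\|_\infty+\kappa\tau)^4\}\big)$, where the constant $128$ absorbs the $(\varepsilon/4)^2$ and the factor from the squared range.

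It then remains to bound $MK=\mathcal{N}_\varepsilon$. Here $K$ is the size of the smallest $\varepsilon/c$-net of the interval $I_\tau=[-(\|\theta^*\|_\infty+\kappa\tau),\,\|\theta^*\|_\infty+\kappa\tau]$ under Euclidean distance, so $K\le 1+2(\|\theta^*\|_\infty+\kappa\tau)/(\varepsilon/c)$; after folding the absolute constant $c$ into the generic constant this is of the stated form $1+2(\|\theta^*\|_\infty+\kappa\tau)/\varepsilon$ up to constants. The factor $M$ is the $L^2(T_x)$ $\varepsilon/\sqrt{2}c$-covering number of $\mathcal{H}_\tau$; this is where Lemma~\ref{lem:CoveringNum} is invoked (it bounds this covering number by $\exp(C\tau^2/\varepsilon^2)$ for some absolute constant $C$, which is exactly the surviving $\exp(C_1\tau^2\varepsilon^{-2})$ factor in $\mathcal{N}_\varepsilon$). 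The separability assumption (Assumption~\ref{assump:Separable}) is what guarantees measurability of the supremum and that such finite nets exist. Multiplying the union-bound factor $\mathcal{N}_\varepsilon=MK$ by the per-pair Hoeffding bound yields the claimed inequality.

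The main obstacle here is not conceptual but bookkeeping: getting the constants to line up so that the conclusion reads with $128$ in the exponent and the covering-number factor exactly matching the $\mathcal{N}_\varepsilon$ used in Theorem~\ref{thm:SymBound} and Lemma~\ref{lem:EpsilonNetInSymmetrization}. In particular one must be careful that the net resolutions $\varepsilon/\sqrt{2}c$ and $\varepsilon/c$ chosen in Lemma~\ref{lem:EpsilonNetInSymmetrization} are the ones feeding into the covering-number estimates here, so that the $c=64(\|\theta^*\|_\infty+\kappa\tau)$ dependence is consistently tracked (it disappears into $C_1$ in the $\mathcal{H}_\tau$ covering factor and into the constant multiplying $(\|\theta^*\|_\infty+\kappa\tau)/\varepsilon$ in $K$). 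The probabilistic content — one application of Hoeffding plus one union bound — is routine once the range bound $4(\|\theta^*\|_\infty+\kappa\tau)^2$ on the summands is established from Assumptions~\ref{assump:ScoreBound} and~\ref{assump:KernBound}.
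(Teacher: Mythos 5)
There is a genuine gap in your argument, and it is exactly the point this lemma exists to handle: the net $\{f_1,\dots,f_M\}$ is an $L^2(T_x)$-net, where $T_x$ is the \emph{empirical} measure on the $2n$ data points. The net elements $f_j$ and their number $M=M(Z)$ are therefore random, being functions of the sample $Z=\{(x_i,y_i)\}_{i=1}^{2n}$. Your two key steps both break down because of this: (i) the union bound ``probability of the max $\leq MK$ times the worst individual probability'' is not valid when the collection being maximized over is itself data-dependent; and (ii) for a data-dependent $f_j$, the summands $z_i-z_i'$ are neither independent nor mean-zero (each $z_i$ involves $f_j$, which involves all of $x_1,\dots,x_{2n}$), so Hoeffding's inequality cannot be applied to them directly. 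If the net were a fixed deterministic net (e.g.\ in the RKHS norm) your argument would go through, but then Lemma~\ref{lem:CoveringNum} --- a bound on the empirical covering number, stated as a supremum over data realizations --- would not be the relevant covering bound.

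The paper resolves this with a second symmetrization: it introduces i.i.d.\ Rademacher signs $\sigma_i$ and uses that $\xi_i=z_i-z_i'$ and $\sigma_i\xi_i$ have the same joint distribution (swapping $(x_i,y_i)$ with $(x_{n+i},y_{n+i})$ leaves the pooled empirical measure, hence the net, invariant). It then conditions on $Z$: given $Z$, the net is fixed, the only randomness left is in $\sigma$, and the terms $\psi_i=\sigma_i\xi_i$ are genuinely independent, mean-zero, and bounded by $4(\|\theta^*\|_\infty+\kappa\tau)^2$, so Hoeffding applies \emph{conditionally}. The union bound is likewise taken conditionally on $Z$, producing $\E_Z\{M(Z)K\,\mathbb{P}_\sigma(\cdot\mid Z)\}$, and only then is $\E_Z\{M(Z)\}$ bounded by $\sup_Z M(Z)\leq\exp(C_1\tau^2\varepsilon^{-2})$ via Lemma~\ref{lem:CoveringNum}. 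Your identification of the range bound, of $K\leq 1+2(\|\theta^*\|_\infty+\kappa\tau)/\varepsilon$, and of the role of Lemma~\ref{lem:CoveringNum} is correct, but the conditioning-plus-Rademacher step is the essential missing idea, not bookkeeping.
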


\begin{proof}[Proof of Lemma \ref{lem:MaxBound}]
 Let $\sigma = \{\sigma_i\}_{i=1}^n$ be $i.i.d.$ Radamacher random variables,  $\mathbb{P}(\sigma_i=1)=\mathbb{P}(\sigma_i=-1)=1/2$. Let $$
 \widetilde{R}_{\text{emp}}^{\sigma}=
 \frac{1}{n}\sum_{i=1}^{n}\sigma_{i}|y_i^{\top}\theta^*-\beta-\left<\Phi(x_i),f\right>_\Hcal|^{2}, \quad \widetilde{R}_{\text{emp}}^{'\sigma}=
  \frac{1}{n}\sum_{i=n+1}^{2n}\sigma_{i}|y_i^{\top}\theta^*-\beta-\left<\Phi(x_i),f\right>_\Hcal|^{2}.
 $$
Since $(y_i, x_i)$ and $(y_{n+i}, x_{n+i})$ are independent, and have the same distribution, the distribution of $\xi_{i}:=(|y_{i}^\top \theta^*-\beta-\left<\Phi(x_{i}), f\right>_\Hcal|^{2}-
|y_{n+i}^\top\theta^*-\beta-\left<\Phi(x_{n+i}), f\right>_\Hcal|^{2})$ is the same as distribution of $\sigma_i\xi_i$. Let $Z=\{(x_i, y_i)\}_{i=1}^{2n}$, then
\[
\mathbb{P}_Z\bigg( 
\max_{\substack{f\in \{f_1, \dots, f_M\}\\ \beta\in \{\beta_1, \dots, \beta_K\}}}
\{\widetilde{R}_{\text{emp}}(f,\beta)-\widetilde{R}_{\text{emp}}'(f,\beta)\}>\frac{\varepsilon}{4}
\bigg)=\mathbb{P}_{Z,\sigma}\bigg(\max_{\substack{f\in \{f_1, \dots, f_M\}\\\beta\in \{\beta_1, \dots, \beta_K\}}}\{\widetilde{R}_{\text{emp}}^{\sigma}(f,\beta)-\widetilde{R}_{\text{emp}}^{'\sigma}(f,\beta)\}>\frac{\varepsilon}{4}\bigg).
\]
%Radamacher variables $\sigma_i$ are typically used within expectations in order to bound the Radamacher complexity of a class of functions, as in \cite[Chapter 26]{shalev2014understanding}. 
%\cite[Chapter 5]{bernhard_scholkopf_learning_2002} uses a similar procedure as we do but uses random permutations of the data rather than Radamacher variables. Our use of Radamacher variables allows for direct use of Hoeffding's inequality.
Let $\Acal_{m,k}$ be the event $\Acal_{m,k} = \{\widetilde{R}_{\text{emp}}^{\sigma}(f_m,\beta_k)-\widetilde{R}_{\text{emp}}^{'\sigma}(f_m,\beta_k) > \varepsilon/4\}$ for $m=1,\dots,M(Z)$; $k=1,\dots, K$; where $M(Z)$ emphasizes the dependence of $M$ on $Z$. Using properties of conditional expectation and union bound
\begin{align*}
\mathbb{P}_{Z,\sigma}\bigg(\max_{\substack{f\in \{f_1, \dots, f_M\}\\\beta\in \{\beta_1, \dots, \beta_K\}}}\{\widetilde{R}_{\text{emp}}^{\sigma}(f,\beta)-\widetilde{R}_{\text{emp}}^{'\sigma}(f,\beta)\}>\frac{\varepsilon}{4}\bigg) &= \mathbb{P}_{Z, \sigma}(\cup_{m=1}^{M(Z)}\cup_{k=1}^K \Acal_{m,k})\\
&=\E_{Z}\left\{\mathbb{P}_{\sigma}(\cup_{m=1}^{M(Z)}\cup_{k=1}^K \Acal_{m,k}|Z)\right\}\\
&\leq \E_Z\left\{M(Z)K\mathbb{P}_{\sigma}(\Acal_{m,k}|Z)\right\}.
\end{align*}

For fixed $f_m$, $\beta_k$ and conditionally on $Z$, the terms
$\psi_{i}:=\sigma_{i}(|y_{i}^\top \theta^*-\beta_{k}-\left<\Phi(x_{i}), f_{m}\right>_\Hcal|^{2}-
|y_{n+i}^\top\theta^*-\beta_{k}-\left<\Phi(x_{n+i}), f_{m}\right>_\Hcal|^{2})$, $i=1, \dots, n$, 
are independent, mean-zero random variables with $|\psi_i|\leq  4(\|\theta^*\|_\infty+\kappa \tau)^{2}$.
Applying Hoeffding's inequality gives
$$
\mathbb{P}_{\sigma}(\Acal_{m,k}|Z) = \mathbb{P}_{\sigma}\Big(\frac1{n}\sum_{i=1}^n \psi_i > \varepsilon / 4\,\Big|\, Z\Big) \leq \exp\Big(-  \frac{n\varepsilon^2}{128(\|\theta^*\|_\infty+\kappa \tau)^{4}}\Big).%,
$$

%where $C>0$ is a constant depending on $\|\theta^*\|_{\infty}$, $\kappa$ and $\tau$.
On the other hand, since $I_{\tau}$ is a one-dimensional sphere of radius $\|\theta^*\| + \kappa \tau$, $K$ is independent of the data and $K \leq 1 + 2(\|\theta^*\|_\infty+\kappa \tau)/\varepsilon$. Combining this with the above two displays gives 
\begin{align*}
&\mathbb{P}_{Z,\sigma}\bigg(\max_{\substack{f\in \{f_1, \dots, f_M\}\\\beta\in \{\beta_1, \dots, \beta_K\}}}\{\widetilde{R}_{\text{emp}}^{\sigma}(f,\beta)-\widetilde{R}_{\text{emp}}^{'\sigma}(f,\beta)\}>\frac{\varepsilon}{4}\bigg)\\ &\leq \{1 + 2(\|\theta^*\|_\infty+\kappa \tau)/\varepsilon\}\,\E_Z\{M(Z)\}\exp\Big(-  \frac{n\varepsilon^2}{128(\|\theta^*\|_\infty+\kappa \tau)^{4}}\Big).
\end{align*}

Recall that $\{f_1, \dots, f_M\}$ is the smallest $L^2(T_x)$ $\varepsilon/\sqrt{2}c$-net of $\mathcal{H}_\tau$, with $c=64(\|\theta^\ast\|_\infty+\tau \kappa).$ By Lemma~\ref{lem:CoveringNum}
\begin{equation}\label{eq:UpperBoundCoveringNumber}
\mathbb{E}_Z\{M(Z)\}\leq \sup_{Z=\{(x_i, y_i)\}_{i=1}^{2n}} M(Z)\leq\exp\bigg(\frac{C_1\tau^2}{ \varepsilon^2}\bigg)
\end{equation}
for some constant $C_1>0$. Setting $\mathcal{N}_{\varep}=\{1+2(\|\theta^*\|_\infty+\kappa\tau)/\varep\} 
\exp( C_1\tau^2 \varepsilon^{-2})$ completes the proof of Lemma~\ref{lem:MaxBound}.
\end{proof}

\begin{lemma}\label{lem:OptScoreSupNormBound}
Under Assumption \ref{assump:ScoreBound} there exist constants $C_1, C_2>0$ such that for all $\varep>0$,
$$
\mathbb{P}\bigg(\max\Big(|n_1/n_2-\pi_1/\pi_2|,|n_2/n_1-\pi_2/\pi_1|\Big)>\varep\bigg)\leq C_1 \exp\Big(- C_2 n \varepsilon^2 \Big),
$$
$$
\mathbb{P}\bigg(\max\Big(|\sqrt{n_1/n_2}-\sqrt{\pi_1/\pi_2}|,|\sqrt{n_2/n_1}-\sqrt{\pi_2/\pi_1}|\Big)>\varep\bigg)\leq C_1 \exp\Big(- C_2 n \varepsilon^2 \Big).
$$

%$$
%|n_1/n_2-\pi_1/\pi_2|\leq C_1\bigg\{\log(\eta^{-1})/n\bigg\}^{1/2}, \quad %|\sqrt{n_1/n_2}-\sqrt{\pi_1/\pi_2}|\leq C_2\bigg\{\log(\eta^{-1})/n\bigg\}^{1/2}.
%$$
\end{lemma}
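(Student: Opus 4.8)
The plan is to reduce both inequalities to Hoeffding's inequality applied to the empirical class proportions $n_k/n$. Writing $y_{i1}\in\{0,1\}$ for the class-$1$ indicator, the pairs $(x_i,y_i)$ are i.i.d., so $n_1=\sum_{i=1}^n y_{i1}$ is a sum of i.i.d.\ Bernoulli$(\pi_1)$ variables and $n_2/n=1-n_1/n$; Hoeffding's inequality gives $\mathbb{P}(|n_k/n-\pi_k|>t)\le 2\exp(-2nt^2)$ for $k=1,2$ and all $t>0$.

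The first step is the exact identity, obtained from $\pi_1+\pi_2=1$ and $n_1+n_2=n$,
$$
\frac{n_1}{n_2}-\frac{\pi_1}{\pi_2}=\frac{n_1\pi_2-n_2\pi_1}{n_2\pi_2}=\frac{n_1/n-\pi_1}{(n_2/n)\,\pi_2},
$$
together with the analogous identity obtained by swapping the two classes. The only obstacle is that the denominator $n_2/n$ (resp.\ $n_1/n$) may be atypically small, which I would handle by working on the event $E:=\{n_1/n\ge\pi_1/2\}\cap\{n_2/n\ge\pi_2/2\}$. By Assumption~\ref{assump:ScoreBound} the priors $\pi_1,\pi_2$ are bounded below by a positive constant $\pi_{\min}$, so Hoeffding gives $\mathbb{P}(E^c)\le 4\exp(-n\pi_{\min}^2/2)$, and on $E$ the identity yields $|n_1/n_2-\pi_1/\pi_2|\le (2/\pi_2^2)\,|n_1/n-\pi_1|$ and $|n_2/n_1-\pi_2/\pi_1|\le(2/\pi_1^2)\,|n_2/n-\pi_2|$.

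Combining these with a union bound gives
$$
\mathbb{P}\Big(\max\big(|n_1/n_2-\pi_1/\pi_2|,\ |n_2/n_1-\pi_2/\pi_1|\big)>\varep\Big)\le \mathbb{P}(E^c)+4\exp\!\big(-c_1 n\varep^2\big)\le C_1\exp\!\big(-C_2 n\varep^2\big),
$$
where $c_1$ and the final constants depend only on $\pi_{\min}$ (equivalently on $\|\theta^*\|_\infty$, per the conventions of this appendix), and the $\mathbb{P}(E^c)$ term---whose exponent is linear in $n$---is absorbed for $\varep$ below any fixed threshold, the only regime in which the lemma is invoked. For the second inequality, I would use that $x\mapsto\sqrt x$ is Lipschitz on $[\delta,\infty)$ with constant $(2\sqrt\delta)^{-1}$: by Assumption~\ref{assump:ScoreBound} the quantities $\pi_1/\pi_2$ and $\pi_2/\pi_1$ are bounded below by $\|\theta^*\|_\infty^{-2}>0$, and on $E$ so are $n_1/n_2$ and $n_2/n_1$ (by a fixed positive constant), hence $|\sqrt{n_1/n_2}-\sqrt{\pi_1/\pi_2}|\le C\,|n_1/n_2-\pi_1/\pi_2|$ on $E$ and similarly for the other term; applying the first bound with $\varep$ rescaled by $C$ yields the claim. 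The main work is entirely in setting up the truncation event $E$; after that the argument is two applications of Hoeffding's inequality plus elementary algebra.
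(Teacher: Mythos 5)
Your proof is correct, but it takes a genuinely different and more self-contained route than the paper's. The paper disposes of the first inequality by citing Lemma~1 of \cite{gaynanova2017sparse}, and handles the second by a Taylor expansion of $\sqrt{\cdot}$ about $\pi_1/\pi_2$ combined with the $O_p(n^{-1/2})$ rate from the first inequality; that argument leaves the $o(n_1/n_2-\pi_1/\pi_2)$ remainder uncontrolled in a way that is not fully rigorous. You instead prove everything from scratch: Hoeffding on the Binomial count $n_1$, the exact identity $n_1/n_2-\pi_1/\pi_2=(n_1/n-\pi_1)/\{(n_2/n)\pi_2\}$, a truncation event $E$ on which the denominators are bounded below, and a Lipschitz bound for $\sqrt{\cdot}$ away from zero in place of the Taylor step. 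This buys a complete, citation-free argument with explicit constants. The one caveat you correctly flag --- that $\mathbb{P}(E^c)$ has exponent linear in $n$ and is only absorbed into $C_1\exp(-C_2n\varep^2)$ for $\varep$ in a bounded range --- is not a defect of your proof but of the lemma as stated: on the event $n_2=0$ (probability $\pi_1^n>0$) the ratio $n_1/n_2$ is infinite, so no bound decaying in $\varep$ can hold uniformly over all $\varep>0$, and the paper's own argument is subject to the same implicit restriction. Since the lemma is only invoked inside the theorems where the bound is vacuous for large $\varep$ anyway, your version suffices.
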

\begin{proof}[Proof of Lemma~\ref{lem:OptScoreSupNormBound}]
We provide the proof for $n_1/n_2$, the proof for $n_2/n_1$ is analogous. The first inequality is equivalent to Lemma~1 in~\cite{gaynanova2017sparse}. For the second inequality, by Taylor expansion of the square root function
%for a fixed $x>0$
%$$
%\sqrt{x+h}-\sqrt{x}=\frac1{2\sqrt{x}}h+o(h)\quad\mbox{as}\quad h\to 0.
%$$
centered at $\pi_{1}/\pi_{2}$
\begin{align*}
    \sqrt{n_1/n_2} - \sqrt{\pi_1/\pi_2} = 2^{-1}\sqrt{\pi_2/\pi_1}(n_1/n_2-\pi_1/\pi_2) + o(n_1/n_2-\pi_1/\pi_2).
\end{align*}
Since $|n_1/n_2-\pi_1/\pi_2|= O_p(n^{-1/2})$ by the first inequality, it follows that there exist a constant $C_3>0$ such that $|\sqrt{n_1/n_2} - \sqrt{\pi_1/\pi_2}|\leq C_2\{\log(\eta^{-1})/n\}^{1/2}$ with probability at least $1-\eta$. Setting $\varep=C_3\{\log(\eta^{-1})/n\}^{1/2}$ and solving for $\eta$ completes the proof. 
\end{proof}

\begin{lemma}\label{lem:ytheta} Let Assumption \ref{assump:ScoreBound} be true. For all $\varep>0$, we have $\mathbb{P}\big((\overline{Y\theta^*})^2>\varep \big)\leq  2 \exp(- n \varepsilon/\|\theta^*\|_\infty)$.
\end{lemma}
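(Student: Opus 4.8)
The plan is to observe that $\overline{Y\theta^*}$ is an average of i.i.d., bounded, mean-zero scalars, and to apply Hoeffding's inequality after rewriting the squared-deviation event as an ordinary deviation event.

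First I would set $Z_i := y_i^{\top}\theta^*$, so that $\overline{Y\theta^*}=n^{-1}\sum_{i=1}^n Z_i$. Since the pairs $(x_i,y_i)$ are i.i.d., the $Z_i$ are i.i.d.; by Definition~\ref{eq:PopulationScores} each $Z_i$ equals $\sqrt{\pi_2/\pi_1}$ with probability $\pi_1$ and $-\sqrt{\pi_1/\pi_2}$ with probability $\pi_2$. Consequently $\mathbb{E}[Z_i]=\pi_1\sqrt{\pi_2/\pi_1}-\pi_2\sqrt{\pi_1/\pi_2}=\sqrt{\pi_1\pi_2}-\sqrt{\pi_1\pi_2}=0$, and $|Z_i|\le\max(\sqrt{\pi_2/\pi_1},\sqrt{\pi_1/\pi_2})=\|\theta^*\|_{\infty}<\infty$ by Assumption~\ref{assump:ScoreBound}; equivalently, each $Z_i$ lies in an interval of length at most $2\|\theta^*\|_{\infty}$. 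It is also convenient to note the identity $\overline{Y\theta^*}=(\pi_1\pi_2)^{-1/2}(n_1/n-\pi_1)$, which expresses the quantity in terms of the centered $\mathrm{Binomial}(n,\pi_1)$ count $n_1$.

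Next I would reduce the event. Because $|\overline{Y\theta^*}|\le n^{-1}\sum_i|Z_i|\le\|\theta^*\|_{\infty}$, the event $\{(\overline{Y\theta^*})^2>\varepsilon\}$ coincides with $\{|\overline{Y\theta^*}|>\sqrt{\varepsilon}\}$ (and is empty unless $\varepsilon<\|\theta^*\|_{\infty}^2$). Applying the two-sided Hoeffding inequality to the i.i.d. mean-zero sum $n^{-1}\sum_i Z_i$ at threshold $\sqrt{\varepsilon}$ then produces a bound of the form $2\exp(-cn\varepsilon)$ with $c$ governed by the range of $Z_i$; carrying out this step with the range of $Z_i$ tracked explicitly (or, equivalently, applying Hoeffding to $n_1$ via the identity above) yields the stated inequality, the prefactor $2$ being exactly the cost of the two-sided bound.

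This argument is essentially routine and I do not anticipate a genuine obstacle. The one point that needs care is using the reduction to $\{|\overline{Y\theta^*}|>\sqrt{\varepsilon}\}$ — so that $\varepsilon$ enters the exponent linearly — rather than a crude linearization such as $a^2\le\|\theta^*\|_{\infty}|a|$, and checking the mean-zero property, which relies on the exact form of $\theta^*$ from Definition~\ref{eq:PopulationScores}.
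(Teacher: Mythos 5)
Your proposal is correct and follows essentially the same route as the paper's own proof: set $z_i = y_i^{\top}\theta^*$, verify these are i.i.d., mean-zero, and bounded by $\|\theta^*\|_\infty$, rewrite $\{(\overline{Y\theta^*})^2>\varepsilon\}$ as $\{|\overline{Y\theta^*}|>\sqrt{\varepsilon}\}$, and apply two-sided Hoeffding. The only caveat (shared by the paper) is that a literal application of Hoeffding with range $2\|\theta^*\|_\infty$ gives exponent $-n\varepsilon/(2\|\theta^*\|_\infty^2)$ rather than $-n\varepsilon/\|\theta^*\|_\infty$, but this affects only the unimportant constant in the exponent.
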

\begin{proof}[Proof of Lemma~\ref{lem:ytheta}]
Let $z_i=y_i^{\top}\theta^*$, then $z_i$ are independent,
$$
\E(z_i) = \E(y_i)^{\top}\theta^* =
\pi_1\sqrt{\frac{\pi_2}{\pi_1}} -\pi_2\sqrt{\frac{\pi_1}{\pi_2}}=
\sqrt{\pi_1 \pi_2}-\sqrt{\pi_1 \pi_2}=0
$$
and
$$
(\overline{Y\theta^*})^2 = (n^{-1}\sum_{i=1}^ny_i^{\top}\theta^*)^2=(n^{-1}\sum_{i=1}^nz_i)^2.
$$
Since $|z_i|\leq \|\theta^*\|_{\infty}=\sqrt{\pi_{\max}/\pi_{\min}}$, by Hoeffding's inequality for $\varepsilon >0$ 
\begin{align*}
\mathbb{P}\Big(\Big|n^{-1}\sum_{i=1}^nz_i\Big|^2>\varepsilon\Big) = \mathbb{P}\Big(\Big|n^{-1}\sum_{i=1}^nz_i\Big|>\sqrt{\varepsilon}\Big)\leq 2\exp(-n \varepsilon/\|\theta^\ast\|_\infty).
\end{align*}
\end{proof}

\begin{lemma}\label{lem:ClosenessOfEmpRisk}
Let Assumptions \ref{assump:ScoreBound} and \ref{assump:KernBound} be true, and suppose that $\{f_1, \dots, f_{M}\}$ is an $L^2 (T_{x})$ $\varepsilon$-net of $\mathcal{H}_\tau$ and that $\{\beta_1, \dots, \beta_K\}$ be an $\varepsilon$-net of $I_\tau$. 
Then for any admissible $f$  and $\beta$, let $f_j$ and $\beta_\ell$ be members of the $\varepsilon$-nets so that $\|f-f_j\|_{L^2(T_x)}<\varepsilon$ and 
$|\beta-\beta_\ell|<\varepsilon$. Then
\begin{equation}\label{eq:DiffLoss}
\left|\widetilde R_{emp}(f, \beta) - \widetilde R_{emp}(f_j, \beta_l)\right| \leq 8\varepsilon \Big(\|\theta^\ast\|_\infty +\kappa\tau \Big).
\end{equation}
%\begin{equation}\label{eq:DiffLoss}
%\bigg|\frac{1}{n}\sum_{i=1}^{n}|y_i^{\top}\theta^*-\beta-\left<\Phi(x_i),f\right>_\Hcal|^{2}-\frac{1}{n}\sum_{i=1}^{n}|y_i^{\top}\theta^*-\beta_\ell-\left<\Phi(x_i),f_j\right>_\Hcal|^{2}\bigg|<
%4\varepsilon \Big(\|\theta^\ast\|_\infty +(1+\kappa) \tau\Big).
%\end{equation}
\end{lemma}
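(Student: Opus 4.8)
The plan is to expand the difference of the two squared-residual averages and bound it factor by factor. Write $a_i := y_i^{\top}\theta^\ast - \beta - \left<\Phi(x_i), f\right>_\Hcal$ and $b_i := y_i^{\top}\theta^\ast - \beta_\ell - \left<\Phi(x_i), f_j\right>_\Hcal$, so that the left-hand side of \eqref{eq:DiffLoss} equals $\big|\tfrac1n\sum_{i=1}^n (a_i^2 - b_i^2)\big| = \big|\tfrac1n\sum_{i=1}^n (a_i - b_i)(a_i + b_i)\big| \le \tfrac1n\sum_{i=1}^n |a_i - b_i|\,|a_i + b_i|$. The proof then reduces to two estimates: (i) a uniform-in-$i$ bound $|a_i|, |b_i| \le 2(\|\theta^\ast\|_\infty + \kappa\tau)$, hence $|a_i + b_i| \le 4(\|\theta^\ast\|_\infty + \kappa\tau)$; and (ii) an averaged bound $\tfrac1n\sum_{i=1}^n |a_i - b_i| < 2\varepsilon$.

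For (i), I would use that $y_i$ is a $0/1$ indicator vector, so $y_i^{\top}\theta^\ast$ is a single coordinate of $\theta^\ast$ and $|y_i^{\top}\theta^\ast| \le \|\theta^\ast\|_\infty$; that $\beta \in I_\tau$ gives $|\beta| \le \|\theta^\ast\|_\infty + \kappa\tau$; and that Cauchy–Schwarz together with Assumption~\ref{assump:KernBound} and $f \in \Hcal_\tau$ gives $|\left<\Phi(x_i), f\right>_\Hcal| \le \|\Phi(x_i)\|_\Hcal\|f\|_\Hcal \le \kappa\tau$. Summing, $|a_i| \le \|\theta^\ast\|_\infty + (\|\theta^\ast\|_\infty + \kappa\tau) + \kappa\tau = 2(\|\theta^\ast\|_\infty + \kappa\tau)$, and the identical argument applied to $f_j, \beta_\ell$ — which are themselves admissible, being members of nets of $\Hcal_\tau$ and $I_\tau$ — yields the same bound on $|b_i|$.

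For (ii), I would write $a_i - b_i = (\beta_\ell - \beta) + \left<\Phi(x_i), f_j - f\right>_\Hcal = (\beta_\ell - \beta) + (f_j(x_i) - f(x_i))$ by the reproducing property, so $|a_i - b_i| \le |\beta - \beta_\ell| + |f(x_i) - f_j(x_i)|$. Averaging over $i$ and applying Jensen's inequality (equivalently Cauchy–Schwarz) to the second term gives $\tfrac1n\sum_{i=1}^n |f(x_i) - f_j(x_i)| \le \sqrt{\tfrac1n\sum_{i=1}^n |f(x_i) - f_j(x_i)|^2} = \|f - f_j\|_{L^2(T_x)} < \varepsilon$, while $|\beta - \beta_\ell| < \varepsilon$ by assumption; hence $\tfrac1n\sum_{i=1}^n |a_i - b_i| < 2\varepsilon$. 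Multiplying the bounds from (i) and (ii) then gives $|\widetilde R_{\mathrm{emp}}(f,\beta) - \widetilde R_{\mathrm{emp}}(f_j,\beta_\ell)| \le 4(\|\theta^\ast\|_\infty + \kappa\tau)\cdot 2\varepsilon = 8\varepsilon(\|\theta^\ast\|_\infty + \kappa\tau)$, which is exactly \eqref{eq:DiffLoss}.

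I do not expect a genuine obstacle here; this is a routine stability estimate. The only points that require a little care are: ensuring the crude uniform bound in step (i) is applied to $f_j,\beta_\ell$ as well (licensed by their admissibility), and remembering that $L^2(T_x)$ is only a semi-norm, which is harmless since all we use is the identity $\|f-f_j\|_{L^2(T_x)}^2 = \tfrac1n\sum_{i=1}^n |f(x_i)-f_j(x_i)|^2$. No distributional arguments or concentration inequalities are needed.
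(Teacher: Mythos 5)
Your proof is correct and rests on the same two ingredients as the paper's: the uniform bound $|y_i^{\top}\theta^\ast|+|\beta|+|\left<\Phi(x_i),f\right>_\Hcal|\leq 2(\|\theta^\ast\|_\infty+\kappa\tau)$ from admissibility and Assumption~\ref{assump:KernBound}, and Jensen's inequality to convert the averaged $\ell^1$ discrepancy into the $L^2(T_x)$ net radius. The only cosmetic difference is that you apply the difference-of-squares factorization once to the full residual, whereas the paper first expands and regroups into two pieces before bounding each; both land on the identical constant $8\varepsilon(\|\theta^\ast\|_\infty+\kappa\tau)$.
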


\begin{proof}[Proof of Lemma \ref{lem:ClosenessOfEmpRisk}]
By the reproducing property of $\Hcal$, $\left<\Phi(x_i), f\right>_\Hcal= f(x_i)$, and
\begin{align*}
&\Big|\widetilde R_{emp}(f, \beta) - \widetilde R_{emp}(f_j, \beta_l)\Big|\\
&=\Big|\frac{1}{n}\sum_{i=1}^{n}|y_i^{\top}\theta^*-\beta-\left<\Phi(x_i),f\right>_\Hcal|^{2}-\frac{1}{n}\sum_{i=1}^{n}|y_i^{\top}\theta^*-\beta_\ell-\left<\Phi(x_i),f_j\right>_\Hcal|^{2}\Big|\\
&=\Big|\frac{1}{n}\sum_{i=1}^{n}|y_i^{\top}\theta^*-\beta-f(x_i)|^{2}-\frac{1}{n}\sum_{i=1}^{n}|y_i^{\top}\theta^*-\beta_\ell-f_j(x_l)|^{2}\Big|\\
&=\Big|-2\frac{1}{n}\sum_{i=1}^{n}y_i^{\top}\theta^*\{\beta+ f(x_i)-\beta_\ell-f_j(x_i)\}+\frac{1}{n}\sum_{i=1}^n[\{\beta+f(x_i)\}^2-\{\beta_\ell+f_j(x_i)\}^{2}]\Big| \\
&\leq \underbrace{2\|\theta^*\|_{\infty}\Big|\beta-\beta_l+\frac1{n}\sum_{i=1}^{n}\{f(x_i)-f_j(x_i)\}\Big|}_{I_1}+\underbrace{\Big|\frac{1}{n}\sum_{i=1}^n[\{\beta+f(x_i)\}^2-\{\beta_\ell+f_j(x_i)\}^{2}] \Big|}_{I_2}.
\end{align*}

Consider 
\begin{align*}
I_1 = 2\|\theta^*\|_{\infty}\bigg|\beta-\beta_l+\frac1{n}\sum_{i=1}^{n}\{f(x_i)-f_j(x_i)\}\bigg|&\leq 2\|\theta^*\|_{\infty}\left\{|\beta -\beta_l|+\frac{1}{n}\sum_{i=1}^{n}|f(x_i)-f_j(x_i)|\right\}\\
&\leq 2\|\theta^*\|_{\infty}\left\{\varepsilon + \Big[\frac1{n}\sum_{i=1}^n|f(x_i)-f_j(x_i)|^2\Big]^{1/2}\right\}\\
&\leq 4\|\theta^*\|_{\infty}\varepsilon,
\end{align*}
where we used $n^{-1}\sum_{i=1}^{n}[|f(x_i)-f_j(x_i)|^2]^{1/2} \leq [n^{-1}\sum_{i=1}^n|f(x_i)-f_j(x_i)|^2]^{1/2}$ due to Jensen's inequality, and that $\|f-f_j\|_{L^2(T_x)}<\varepsilon$ and 
$|\beta-\beta_\ell|<\varepsilon$.

Consider $I_2$. Using $a^2-b^2=(a+b)(a-b)$, the Cauchy-Schwarz inequalty, and Jensen's inequality,
\begin{align*}
I_2 &= \frac{1}{n}\bigg|\sum_{i=1}^{n}\{\beta+f(x_i)+\beta_\ell+f_j(x_i)\}\{\beta-\beta_\ell+f(x_i)-f_j(x_i)\}\bigg|\\
&\leq 2(\sup_{\beta\in I_\tau}|\beta|+ \sup_{x,f\in \mathcal{H_\tau}}|f(x)|)\frac{1}{n}\sum_{i=1}^{n}(|\beta-\beta_j|+|f(x_i)-f_j(x_i)|)\\
&\leq 2(\|\theta^\ast\|_\infty +\kappa \tau+ \sup_{x,f\in \mathcal{H_\tau}}|\left< \Phi(x),f \right>_\Hcal|)(\varepsilon+\frac{1}{n}\sum_{i=1}^{n}|f(x_i)-f_j(x_i)|)\\
&\leq 2\bigg( 
\|\theta^\ast\|_\infty +\kappa \tau+\kappa \tau
\bigg)\bigg(\varepsilon+\sqrt{\frac{1}{n}\sum_{i=1}^{n}|f(x_i)-f_j(x_i)|^{2}}\bigg)\\
&=4\varepsilon \Big(\|\theta^\ast\|_\infty +2\kappa \tau \Big).
\end{align*}
Combining the bounds for $I_1$ and $I_2$ completes the proof of Lemma~\ref{lem:ClosenessOfEmpRisk}.
\end{proof}

\begin{lemma}\label{lem:EpslnNetRestriction}
Let $\{(x_i, y_i)\}_{i=1}^{2n}$ be the data, and consider an $L^{2}(T_x)$ $\varepsilon$-net $\{f_1, \dots, f_M\}$ of $\mathcal{H}_\tau$. Then $\{f_1, \dots, f_M\}$ is an $\sqrt{2}\varepsilon$-net with respect to the empirical measure on half of the data $\{(x_i, y_i)\}_{i=1}^{n}$.
\end{lemma}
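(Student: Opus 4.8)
The plan is to unwind the definition of the $L^2(T_x)$ semi-norm on the full sample of size $2n$ and compare it to the semi-norm on the first-half subsample. Recall that $T_x$ is the empirical measure on $\{x_i\}_{i=1}^{2n}$, so $\|g\|_{L^2(T_x)}^2 = \frac{1}{2n}\sum_{i=1}^{2n}|g(x_i)|^2$, whereas the empirical measure on the half-sample $\{x_i\}_{i=1}^{n}$ induces $\frac{1}{n}\sum_{i=1}^{n}|g(x_i)|^2$. The elementary point is that summing over a subset only removes nonnegative terms, so $\sum_{i=1}^{n}|g(x_i)|^2 \le \sum_{i=1}^{2n}|g(x_i)|^2$, and the normalization constants differ only by a factor of $2$.

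First I would fix an arbitrary $f\in\mathcal{H}_\tau$ and, using the $\varepsilon$-net property, pick $f_j\in\{f_1,\dots,f_M\}$ with $\|f-f_j\|_{L^2(T_x)}<\varepsilon$, i.e. $\frac{1}{2n}\sum_{i=1}^{2n}|f(x_i)-f_j(x_i)|^2<\varepsilon^2$. Then I would write
\[
\frac{1}{n}\sum_{i=1}^{n}|f(x_i)-f_j(x_i)|^2 \;\le\; \frac{1}{n}\sum_{i=1}^{2n}|f(x_i)-f_j(x_i)|^2 \;=\; 2\cdot\frac{1}{2n}\sum_{i=1}^{2n}|f(x_i)-f_j(x_i)|^2 \;<\; 2\varepsilon^2,
\]
and taking square roots gives that the distance between $f$ and $f_j$ in the half-sample semi-norm is strictly less than $\sqrt{2}\varepsilon$. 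Since $f\in\mathcal{H}_\tau$ was arbitrary, $\{f_1,\dots,f_M\}$ is a $\sqrt{2}\varepsilon$-net for $\mathcal{H}_\tau$ under the half-sample empirical measure, which is the claim.

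There is essentially no obstacle here; this is a bookkeeping lemma whose only content is the inequality relating a sum over $2n$ terms to a sum over $n$ of them together with the change in the averaging constant. The one point worth stating carefully is that the same $f_j$ that approximates $f$ over the full $2n$ points is the one that works over the half-sample — no re-selection is needed — which is immediate because the bound on the full sum dominates the bound on the partial sum.
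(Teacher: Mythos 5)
Your proof is correct and follows essentially the same route as the paper's: both arguments reduce to the observation that the half-sample sum is dominated by the full-sample sum while the normalization changes by a factor of $2$. Your version is in fact slightly cleaner, since it avoids the paper's multiply-and-divide step and hence needs no separate case for a vanishing sum.
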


\begin{proof}[Proof of Lemma \ref{lem:EpslnNetRestriction}] Since $\{f_1, \dots, f_M\}$ is $\varepsilon$-net with respect to $\{(x_i,y_i)\}_{i=1}^{2n}$, for any $f\in \Hcal_{\tau}$, there exists $f_j$ such that
\[
\sqrt{\frac{1}{2n}\sum_{i=1}^{2n}|f(x_i) - f_j(x_i)|^{2}}<\varepsilon. 
\]
If $\frac{1}{2n}\sum_{i=1}^{2n}|f(x_i) - f_j(x_i)|^{2}=0$, then $\frac{1}{n}\sum_{i=1}^{n}|f(x_i) - f_j(x_i)|^{2}=0$. Otherwise
\begin{align*}
\sqrt{\frac{1}{n}\sum_{i=1}^{n}|f(x_i) - f_j(x_i)|^{2}}&=
\sqrt{\frac{2n}{2n}\frac{1}{n}\sum_{i=1}^{n}|f(x_i) - f_j(x_i)|^{2}
\frac{\sum_{i=1}^{2n}|f(x_i) - f_j(x_i)|^{2}}{\sum_{i=1}^{2n}|f(x_i)- f_j(x_i)|^2}}\\
&=\sqrt{\frac{2n}{n} \frac{\sum_{i=1}^{n}|f(x_i)- f_j(x_i)|^{2}}{\sum_{i=1}^{2n}|f(x_i)- f_j(x_i)|^{2}}}\,
\sqrt{\frac{1}{2n}\sum_{i=1}^{2n}|f(x_i)- f_j(x_i)|^2}< \sqrt{2}\varepsilon,
\end{align*}
hence $\{f_1, \dots, f_M\}$ is $\sqrt{2}\varepsilon$-net with respect to $\{(x_i,y_i)\}_{i=1}^{n}$.
\end{proof}

\begin{lemma}[Theorem 2.1 of \cite{steinwart_fast_2007}]\label{lem:CoveringNum}
Let Assumption \ref{assump:Separable} be true, and
Let $M(Z)$ be the size of an $L^2(T_x)$ $\varepsilon$-covering number of $\Hcal_\tau$ with data $Z=\{(x_i, y_i)\}_{i=1}^{n}$.
There exists a $C>0$ independent of $n$, such that
\begin{equation}
\sup_{Z=\{(x_i, y_i)\}_{i=1}^{n}}M(Z)\leq \exp\bigg( \frac{ C\tau^2}{\varepsilon^2}\bigg).
\end{equation}
\end{lemma}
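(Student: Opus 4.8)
Since this lemma is quoted verbatim as Theorem~2.1 of \cite{steinwart_fast_2007}, the ``proof'' is really a verification that its hypotheses hold together with a translation into the present notation; for completeness I indicate the underlying argument, which makes transparent why the bound is free of $n$. The plan is to realize $(\mathcal{H}_\tau,\|\cdot\|_{L^2(T_x)})$ as (an isometric copy of) a subset of a Euclidean ellipsoid in $\R^n$ whose squared semi-axes sum to at most $\tau^2\kappa^2$, and then invoke a dimension-free entropy bound for such trace-class ellipsoids.

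First I would fix $Z=\{(x_i,y_i)\}_{i=1}^n$ and consider the evaluation operator $\iota:\mathcal{H}\to\R^n$, $\iota f=(f(x_1),\dots,f(x_n))^\top$. Writing $\mathbf{K}$ for the Gram matrix $\mathbf{K}_{ij}=k(x_i,x_j)$ and using the reproducing property $f(x_i)=\left<\Phi(x_i),f\right>_{\mathcal{H}}$, one checks that $\iota f=\mathbf{K}^{1/2}v$ for some $v\in\R^n$ with $\|v\|_2\le\|f\|_{\mathcal{H}}$, so that $\|f\|_{L^2(T_x)}^2=n^{-1}\|\iota f\|_2^2$ and
$$
n^{-1/2}\iota(\mathcal{H}_\tau)\ \subseteq\ \mathcal{E}:=\Big\{\tau\,(\mathbf{K}/n)^{1/2}w\ :\ \|w\|_2\le 1\Big\}.
$$
The semi-axes of $\mathcal{E}$ are $\tau\sqrt{\mu_j}$, with $\mu_1\ge\cdots\ge\mu_n\ge0$ the eigenvalues of $\mathbf{K}/n$, and Assumption~\ref{assump:KernBound} supplies the only quantitative input needed,
$$
\sum_{j=1}^n(\tau\sqrt{\mu_j})^2=\tau^2\operatorname{tr}(\mathbf{K}/n)=\frac{\tau^2}{n}\sum_{i=1}^n k(x_i,x_i)\le\tau^2\kappa^2,
$$
which is independent of $n$. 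Since $M(Z)$ coincides with the $\varepsilon$-covering number of $n^{-1/2}\iota(\mathcal{H}_\tau)$, it suffices (up to the standard factor-$2$ comparison between proper and improper covering numbers) to cover $\mathcal{E}$.

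To cover $\mathcal{E}$ I would peel off the small eigendirections: any $u=\tau(\mathbf{K}/n)^{1/2}w\in\mathcal{E}$ splits as $u=u'+u''$ along the eigenspaces with $\mu_j>\varepsilon^2/(4\tau^2)$ and $\mu_j\le\varepsilon^2/(4\tau^2)$, and the tail obeys $\|u''\|_2^2\le\tau^2\max_{\text{tail}}\mu_j\le\varepsilon^2/4$, so $u$ lies within $\varepsilon/2$ of the head ellipsoid $\mathcal{E}'$. The head lives in $\R^d$ with $d\le 4\tau^2\kappa^2/\varepsilon^2$ by the trace bound, and its semi-axes $a_j=\tau\sqrt{\mu_j}$ all exceed $\varepsilon/2$, so a volumetric estimate gives $\log N(\mathcal{E}',\varepsilon/2)\le\sum_{j\le d}\log(6a_j/\varepsilon)=d\log(6/\varepsilon)+\tfrac12\sum_{j\le d}\log\mu_j$. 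Applying Jensen's inequality, $\sum_{j\le d}\log\mu_j\le d\log\!\big(d^{-1}\sum_{j\le d}\mu_j\big)\le d\log(\kappa^2/d)$, collapses this to $\tfrac d2\log\!\big(36\tau^2\kappa^2/(d\varepsilon^2)\big)$, and since $d\le 4\tau^2\kappa^2/\varepsilon^2$ the argument of this logarithm stays bounded below by $9$, so the whole expression is $\le C\tau^2\kappa^2/\varepsilon^2$ for an absolute $C$; absorbing $\kappa$ into the constant and combining with the tail step yields $\log M(Z)\le C\tau^2/\varepsilon^2$ uniformly in $Z$ and $n$. The one delicate point -- the reason a crude $(\tau\kappa/\varepsilon)^d$ bound is not enough -- is precisely this last cancellation: one must use not only that the number $d$ of ``large'' eigenvalues is $O(\tau^2\kappa^2/\varepsilon^2)$ but also that their product is controlled through the bounded trace, so that the $\log(1/\varepsilon)$ factor is killed rather than multiplied in.
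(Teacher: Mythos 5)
Your argument is correct, but it is genuinely different from what the paper does: the paper offers no proof at all for this lemma, simply citing Theorem~2.1 of \cite{steinwart_fast_2007} together with the remark (via Zhang, 2016) that the entropy bound there only needs separability of the RKHS. You instead prove the bound from scratch by the finite-dimensional ellipsoid reduction: $n^{-1/2}\iota(\Hcal_\tau)$ sits inside the ellipsoid with squared semi-axes $\tau^2\mu_j$, the trace bound $\sum_j\mu_j=\operatorname{tr}(\bK/n)\le\kappa^2$ controls both the number $d\le 4\tau^2\kappa^2/\varepsilon^2$ of ``large'' directions and, via AM--GM, the product of their lengths, and this second use of the trace is exactly what kills the $\log(1/\varepsilon)$ factor and yields $\log M(Z)\le C\tau^2\kappa^2/\varepsilon^2$ uniformly in $Z$ and $n$. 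The computations check out (the intermediate display drops a $d\log\tau$ term, but your collapsed expression $\tfrac d2\log(36\tau^2\kappa^2/(d\varepsilon^2))$ is the correct one, and $\sup_{d\le 4s}\tfrac d2\log(36s/d)\le (18/e)s$ with $s=\tau^2\kappa^2/\varepsilon^2$). What your route buys is a self-contained proof that makes the dependence of $C$ on $\kappa$ explicit and that, notably, does not use separability at all, since everything is pushed into $\R^n$; what it costs is that it invokes Assumption~\ref{assump:KernBound}, which the lemma as stated does not list among its hypotheses (it lists only Assumption~\ref{assump:Separable}). That is not a defect of your argument---some boundedness of $k(x,x)$ is needed for any uniform-in-$Z$ bound, the paper's conventions allow $C$ to depend on $\kappa$, and Assumption~\ref{assump:KernBound} holds wherever the lemma is applied---but you should state it explicitly as a hypothesis rather than calling it ``the only quantitative input needed'' in passing.
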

\begin{remark}
\cite{zhang2016quantile} notes that ``Theorem 2.1 of \cite{steinwart_fast_2007} considered
only the Gaussian RKHS, however the proof of the entropy bound for p = 2 in their notation
only requires that the RKHS is separable.'' It is this case which is presented in Lemma \ref{lem:CoveringNum}.
\end{remark}

\end{document}